\documentclass[letterpaper,11pt]{article}

 
\usepackage{amsmath}
\usepackage{amsfonts}
\usepackage{amssymb}
\usepackage{amsthm}
\usepackage{url,ifthen}
\usepackage{srcltx}
\usepackage{multirow}
\usepackage{boxedminipage}
\usepackage[margin=1.0in]{geometry}
\usepackage{nicefrac}
\usepackage{xspace}
\usepackage{graphicx}
\usepackage{xcolor}
\definecolor{DarkGreen}{rgb}{0.1,0.5,0.1}
\definecolor{DarkRed}{rgb}{0.5,0.1,0.1}
\definecolor{DarkBlue}{rgb}{0.1,0.1,0.5}
\usepackage{tikz}
\usetikzlibrary{shapes,arrows}

\usepackage[small]{caption}

\usepackage[pdftex]{hyperref}
\hypersetup{
    unicode=false,          
    pdftoolbar=true,        
    pdfmenubar=true,        
    pdffitwindow=false,      
    pdfnewwindow=true,      
    colorlinks=true,       
    linkcolor=DarkBlue,          
    citecolor=DarkGreen,        
    filecolor=DarkGreen,      
    urlcolor=DarkBlue,          
    %
    %
    pdftitle={},
    pdfauthor={},    
}

\def\draft{1} 

\def\submit{0} 

\ifnum\draft=1 
    \def\ShowAuthNotes{1}
\else
    \def\ShowAuthNotes{0}
\fi

\ifnum\submit=1
\newcommand{\forsubmit}[1]{#1}
\newcommand{\forreals}[1]{}
\else
\newcommand{\forreals}[1]{#1}
\newcommand{\forsubmit}[1]{}
\fi

\ifnum\ShowAuthNotes=1
\newcommand{\authnote}[2]{{ \footnotesize \bf{\color{DarkRed}[#1's Note:
{\color{DarkBlue}#2}]}}}
\else
\newcommand{\authnote}[2]{}
\fi

%
%

\newtheorem{theorem}{Theorem}[section]
\newtheorem{remark}[theorem]{Remark}
\newtheorem{lemma}[theorem]{Lemma}
\newtheorem{corollary}[theorem]{Corollary}
\newtheorem{proposition}[theorem]{Proposition}
\newtheorem{claim}[theorem]{Claim}

\theoremstyle{definition}
\newtheorem{definition}[theorem]{Definition}

%
%

\newcommand{\chapterref}[1]{\hyperref[ch:#1]{Chapter~\ref{ch:#1}}}
\newcommand{\claimlabel}[1]{\label{claim:#1}}
\newcommand{\claimref}[1]{\hyperref[claim:#1]{Claim~\ref{claim:#1}}}
\newcommand{\corollarylabel}[1]{\label{cor:#1}}
\newcommand{\corollaryref}[1]{\hyperref[cor:#1]{Corollary~\ref{cor:#1}}}
\newcommand{\definitionlabel}[1]{\label{def:#1}}
\newcommand{\definitionref}[1]{\hyperref[def:#1]{Definition~\ref{def:#1}}}
\newcommand{\equationlabel}[1]{\label{eq:#1}}
\newcommand{\equationref}[1]{\hyperref[eq:#1]{Equation~\ref{eq:#1}}}
\renewcommand{\eqref}[1]{\hyperref[eq:#1]{(\ref{eq:#1})}}

\newcommand{\factref}[1]{\hyperref[fact:#1]{Fact~\ref{fact:#1}}}
\newcommand{\figurelabel}[1]{\label{fig:#1}}
\newcommand{\figureref}[1]{\hyperref[fig:#1]{Figure~\ref{fig:#1}}}
\newcommand{\itemlabel}[1]{\label{item:#1}}
\newcommand{\itemref}[1]{\hyperref[item:#1]{Item~(\ref{item:#1})}}
\newcommand{\lemmalabel}[1]{\label{lem:#1}}
\newcommand{\lemmaref}[1]{\hyperref[lem:#1]{Lemma~\ref{lem:#1}}}

\newcommand{\propref}[1]{\hyperref[prop:#1]{Proposition~\ref{prop:#1}}}
\newcommand{\propositionlabel}[1]{\label{prop:#1}}
\newcommand{\propositionref}[1]{\hyperref[prop:#1]{Proposition~\ref{prop:#1}}}

\newcommand{\remarkref}[1]{\hyperref[rem:#1]{Remark~\ref{rem:#1}}}
\newcommand{\sectionlabel}[1]{\label{sec:#1}}
\newcommand{\sectionref}[1]{\hyperref[sec:#1]{Section~\ref{sec:#1}}}
\newcommand{\theoremlabel}[1]{\label{thm:#1}}
\newcommand{\theoremref}[1]{\hyperref[thm:#1]{Theorem~\ref{thm:#1}}}

%
%

\usepackage[T1]{fontenc}
\usepackage{kpfonts}
\usepackage{microtype}

\newcommand{\Esymb}{\mathbb{E}}
\newcommand{\Psymb}{\mathbb{P}}

\DeclareMathOperator*{\E}{\Esymb}

\DeclareMathOperator*{\ProbOp}{\Psymb}
\renewcommand{\Pr}{\ProbOp}

\newcommand{\widebar}[1]{\overline{#1}}


\newcommand{\mper}{\,.}
\newcommand{\mcom}{\,,}

\renewcommand{\hat}{\widehat}


\newcommand{\cR}{{\cal R}}

\newcommand{\cX}{{\cal X}}
\newcommand{\cY}{{\cal Y}}

\newcommand{\defeq}{\,\stackrel{{\mathrm{def}}}{=}\,}
\renewcommand{\leq}{\leqslant}
\renewcommand{\le}{\leqslant}

\renewcommand{\ge}{\geqslant}



\newcommand{\Set}[1]{\left\{#1\right\}}


\newcommand{\Norm}[1]{\left\lVert#1\right\rVert}



\newcommand{\R}{\mathbb{R}}


\usepackage{bm}











\newcommand{\poly}{{\rm poly}}

\renewcommand{\epsilon}{\varepsilon}

\newcommand{\eps}{\epsilon}


\newcommand{\remove}[1]{}

%
%


%
%
\newcommand{\range}{{\cal R}}

\newcommand{\ignore}[1]{}

\newcommand{\trans}{\top}
\newcommand{\NSI}{{\sc NSI}\xspace}
\newcommand{\AltLS}{{\sc LS}\xspace}
\newcommand{\MedianLS}{{\sc MedianLS}\xspace}

\newcommand{\SAltLS}{{\sc SAltLS}\xspace}
\newcommand{\Split}{{\sc Split}\xspace}

\newcommand{\Init}{{\sc Initialize}\xspace}
\newcommand{\GS}{{\sc QR}\xspace}
\newcommand{\QR}{{\sc QR}\xspace}
\newcommand{\SmoothGS}{{\sc SmoothQR}\xspace}
\newcommand{\SmoothQR}{{\sc SmoothQR}\xspace}

\renewcommand{\tilde}{\widetilde}
\newcommand{\mustar}{\mu^*}
\renewcommand{\bar}{\widebar}


\title{Understanding Alternating Minimization\\ for Matrix Completion}

\author{Moritz Hardt\thanks{IBM Research Almaden. Email: {\tt
mhardt@us.ibm.com}}}

\begin{document}
\maketitle
\begin{abstract}
Alternating minimization is a widely used and empirically successful heuristic
for matrix completion and related low-rank optimization problems. Theoretical
guarantees for alternating minimization have been hard to come by and are
still poorly understood. This is in part because the heuristic is iterative
and non-convex in nature. We give a new algorithm based on alternating
minimization that provably recovers an unknown low-rank matrix from a random
subsample of its entries under a standard incoherence assumption.  Our results
reduce the sample size requirements of the alternating minimization approach
by at least a quartic factor in the rank and the condition number of the
unknown matrix. These improvements apply even if the matrix is only close to
low-rank in the Frobenius norm. Our algorithm runs in nearly linear time in
the dimension of the matrix and, in a broad range of parameters, gives the
strongest sample bounds among all subquadratic time algorithms that we are aware
of. 

Underlying our work is a new robust convergence analysis of the well-known
Power Method for computing the dominant singular vectors of a matrix.  This
viewpoint leads to a conceptually simple understanding of alternating
minimization. In addition, we contribute a new technique for controlling the
coherence of intermediate solutions arising in iterative algorithms based on a
smoothed analysis of the QR factorization. These techniques may be of interest
beyond their application here.
\end{abstract}

\vfill
\thispagestyle{empty}
\pagebreak

%
%

\section{Introduction}

Alternating minimization is an empirically successful heuristic for the matrix
completion problem in which the goal is to recover an unknown low-rank matrix
from a subsample of its entries. Matrix completion has received a tremendous
amount of attention over the past few years due to its fundamental role as an
optimization problem and its applicability in number of areas including
collaborative filtering and quantum tomography. Alternating minimization has
been used early on in the context of matrix completion~\cite{BellK07,HaldarH09} 
and continues to play an important role in practical approaches to the problem. 
The approach also formed an important component in the winning submission
for the Netflix Prize~\cite{KorenBV09}.

Given a subset~$\Omega$ of entries drawn from an unknown matrix~$A,$
Alternating minimization starts from a poor approximation $X_0Y_0^\trans$ to
the target matrix and gradually improves the approximation quality by fixing one of
the factors and minimizing a certain objective over the other factor. Here,
$X_0,Y_0$ each have $k$ columns where $k$ is the target rank of the
factorization. The least squares objective is the typical choice. In this case, at
step~$\ell$ we solve the optimization problem
\[
X_\ell = \arg\min_X\sum_{(i,j)\in\Omega} 
\left[A_{ij} - (XY_{\ell-1}^\trans)_{ij}\right]^2.
\]
This optimization step is then repeated with $X_\ell$ fixed in order to
determine $Y_\ell$ as 
\[
Y_\ell = \arg\min_X\sum_{(i,j)\in\Omega} 
\left[A_{ij} - (X_\ell Y^\trans)_{ij}\right]^2.
\]
Separating the factors~$X_\ell$ and $Y_\ell$ is what makes the optimization
step tractable. This basic update step is usually combined with an initialization
procedure for finding~$X_0,Y_0,$ as well as methods for modifying
intermediate solutions, e.g., truncating large entries. More than a specific
algorithm we think of alternating minimization as a framework for
solving a non-convex low-rank optimization problem.

A major advantage of alternating minimization over alternatives is that each
update is computationally cheap and has a small memory footprint as we only
need to keep track of $2k$~vectors. In contrast,
the \emph{nuclear norm} approach to matrix
completion~\cite{CandesR09,Recht11,CandesT10} requires solving a semidefinite
program. The advantage of the nuclear norm approach is that it comes with 
strong theoretical guarantees under certain assumptions on the unknown matrix and
the subsample of its entries. There are two (by now standard) assumptions
which together imply that nuclear norm minimization succeeds. The first is
that the subsample~$\Omega$ includes each entry of~$A$ uniformly at random
with probability~$p.$ The second assumption is that the first $k$ singular
vectors of~$A$ span an \emph{incoherent} subspace. Informally coherence
measures the correlation of the subspace with any standard basis vector. More
formally, the coherence of a $k$-dimensional subspace of $\R^n$ is at most
$\mu$ if the projection of each standard basis vector has norm at most
$\sqrt{\mu k / n}.$ The space spanned by the top~$k$ singular space of various 
random matrix models typically satisfies this property with small~$\mu.$ 
But also real-world matrices tend to exhibit incoherence when $k$ is 
reasonably small.

Theoretical results on matrix completion primarily apply to the \emph{nuclear
norm}
semidefinite program which is prohibitive to execute on realistic instance sizes. 
There certainly has been progress on practical algorithms for
solving related convex programs~\cite{JiY09,MazumderHT10,JaggiS10,AvronKKS12,HsiehO14}.
Unfortunately, these algorithms are not known to achieve the same type of
recovery guarantees attained by exact nuclear norm minimization. This raises
the important question if there are fast algorithms for matrix completion that
come with guarantees on the required sample size comparable to those achieved by
nuclear norm minimization. In this work we make progress on this problem by proving
strong sample complexity bounds for alternating minimization. Along the way
our work helps to give a theoretical justification and understanding for why
alternating minimization works.

\subsection{Our results}

We begin with our result on the \emph{exact} matrix completion problem where
the goal is to recover an unknown rank~$k$ matrix~$M$ from a
subsample~$\Omega$ of its entries where each entry is included independently
with probability~$p.$ Here and in the following we will always assume
that~$M=U\Lambda U^\trans$ is a 
symmetric $n\times n$ matrix with singular values $\sigma_1\ge\dots\ge
\sigma_k.$ Our result generalizes straightforwardly to rectangular matrices 
as we will see.

Our algorithm will output a pair of matrices $(X,Y)$ where $X$ is an
orthonormal $n\times k$ matrix that approximates $U$ in the strong sense that
$\|(I-UU^\trans)X\|\le \epsilon.$ Moreover, the matrix $XY^\trans$ is close
to~$M$ in Frobenius norm. To state the theorem we formally define the coherence 
of~$U$ as $\mu(U) \defeq \max_{i\in[n]}(n/k)\|e_i^\trans U\|_2^2$
where $e_i$ is the $i$-th standard basis vector.

\begin{theorem}\theoremlabel{intro-1}
Given a sample of size $\tilde O(pn^2)$ drawn from an
unknown $n\times n$ matrix $M=U\Lambda U^\trans$ of rank~$k$ by including each
entry with probability~$p,$ our algorithm outputs with high probability a pair
of matrices~$(X,Y)$ such that $\|(I-UU^\trans) X\|\le\epsilon$ 
and $\|M-XY^\trans\|_F\le\epsilon\|M\|_F$ provided that
\begin{equation}\equationlabel{exact-result}
pn \ge k(k+\log(n/\epsilon))\mu(U)\left(\|M\|_F/\sigma_k\right)^2\mper
\end{equation}
\end{theorem}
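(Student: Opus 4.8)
The plan is to recast alternating minimization as a noisy version of the Power Method applied to $M$ (or rather to an operator closely related to $M$), and then to invoke a robust convergence analysis of the Power Method together with a coherence-control mechanism. The starting point is the observation that solving the least-squares problem $X_\ell = \arg\min_X \sum_{(i,j)\in\Omega}[A_{ij}-(XY_{\ell-1}^\trans)_{ij}]^2$ is, up to the sampling restriction to $\Omega$, exactly one step of subspace iteration: if we had all entries of $M$ and $Y_{\ell-1}$ were orthonormal, the minimizer would be $X_\ell = M Y_{\ell-1}$, and repeatedly applying $M$ to a generic starting subspace converges to the top-$k$ singular subspace $U$ at a rate governed by the gap $\sigma_k/\sigma_{k+1}$ — here $\sigma_{k+1}=0$, so convergence is geometric with a rate that improves each step. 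The sampling on $\Omega$ perturbs this update; the first main step is to write $X_\ell = M Y_{\ell-1} + G_\ell$ where $G_\ell$ is an error term, and to bound $\|G_\ell\|$ by a matrix-Bernstein / matrix-Chernoff argument for the sampled operator $\frac1p R_\Omega(\cdot)$ restricted to the relevant low-dimensional subspace. The bound on $\|G_\ell\|$ will be proportional to $\|M\|_F/\sqrt{pn}$ times a coherence factor, which is precisely why the sample bound \eqref{exact-result} has the shape $pn \gtrsim k(\ldots)\mu(U)(\|M\|_F/\sigma_k)^2$.

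The second step is the robust Power Method lemma: if the per-step perturbation $\|G_\ell\|$ (projected appropriately) is small relative to $\sigma_k$ — say a small constant times $\sigma_k$ times the current distance to $U$ — then the principal-angle distance $\|(I-UU^\trans)X_\ell\|$ contracts geometrically, and after $O(\log(n/\epsilon))$ iterations we reach $\|(I-UU^\trans)X\|\le\epsilon$. Crucially, for the noise bound on $\|G_\ell\|$ to hold at every step, one needs the intermediate iterates $X_\ell, Y_\ell$ to remain \emph{incoherent}; otherwise the matrix-Bernstein bound for the sampled operator degrades. This is the role of the QR / \SmoothQR\ step: after each least-squares update we re-orthonormalize the columns, and the key technical contribution is that a smoothed analysis of the QR factorization shows the resulting orthonormal basis has coherence $\tilde O(\mu(U) + k)$ or so with high probability — i.e., QR does not amplify coherence badly once a tiny random perturbation is introduced. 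Feeding this coherence bound back into the noise estimate closes the loop.

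The remaining pieces are the initialization and the final Frobenius bound. For initialization one runs the same noisy-Power-Method analysis starting from $X_0$ obtained by taking the top-$k$ singular vectors of $\frac1p R_\Omega(M)$ (the sampled matrix), and a single matrix-Bernstein bound shows $\|(I-UU^\trans)X_0\|$ is bounded by a constant less than $1$ under \eqref{exact-result}, which is all the robust Power Method needs to get started. Once $X$ satisfies $\|(I-UU^\trans)X\|\le\epsilon$, one more least-squares solve produces $Y$ with $XY^\trans$ close to $M$: write $M - XY^\trans$ in terms of $(I-XX^\trans)M$ plus a sampling error, and bound both in Frobenius norm by $\epsilon\|M\|_F$ using the angle bound and one more concentration estimate. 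Adding a $\log$ factor for the $O(\log(n/\epsilon))$ iterations (each needing a fresh independent sample via sample splitting, to keep the $X_{\ell-1}$ independent of the sample used at step $\ell$) yields the total sample size $\tilde O(pn^2)$.

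The main obstacle is the coherence control: a priori there is no reason the least-squares minimizer $X_\ell$, or its orthonormalization, should stay incoherent — a single nearly-standard-basis direction creeping into the iterate would blow up the variance in the matrix-Bernstein bound and break the geometric contraction. Handling this rigorously is exactly why the \SmoothQR\ primitive and its smoothed-analysis guarantee are needed, and getting the coherence bound with only an additive (rather than multiplicative) dependence on $\mu(U)$ is what drives the quartic improvement over prior analyses; everything else is a careful but standard interplay of matrix concentration and the Power Method contraction.
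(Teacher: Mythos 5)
Your proposal follows essentially the same route as the paper: recasting the least-squares update as a noisy subspace-iteration step $Y_\ell = AX_{\ell-1}+G_\ell$, a robust Power Method contraction argument in which the admissible noise level is proportional to the current principal-angle distance (which is what yields exact recovery and the $\log(n/\epsilon)$ iteration count), coherence control of the iterates via the smoothed QR step, initialization from the top-$k$ singular vectors of the rescaled sampled matrix, sample splitting across iterations, and the final Frobenius bound from one further least-squares solve. The only pieces of the paper's argument you gloss over are the median-of-independent-solutions step (\MedianLS), which is needed because the per-row bounds on $G_\ell$ obtained from Markov's inequality hold only with constant probability and must be boosted before a union bound over all $n$ rows, and the random-rotation-plus-truncation step that controls the coherence of the initial iterate $X_0$; both are refinements inside the framework you describe rather than departures from it.
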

Our result should be compared with two remarkable recent works by Jain, Netrapalli and
Sanghavi~\cite{JainNS13} and Keshavan~\cite{Keshavan12} who gave 
rigorous sample complexity bounds for alternating minimization. 
\cite{JainNS13}~obtained the bound $pn 
\ge k^{7}(\sigma_1/\sigma_k)^6\mu(U)^2$ 
and Keshavan obtained the incomparable bound $pn 
\ge k(\sigma_1/\sigma_k)^8\mu(U)$ that is superior when the matrix has small
\emph{condition number}~$\sigma_1/\sigma_k.$
Since $\|M\|_F\le\sqrt{k}\sigma_1$ our result
improves upon~\cite{JainNS13} by at least a factor of
$k^4(\sigma_1/\sigma_k)^4\mu(U)$ and improves on~\cite{Keshavan12} as soon as
$\sigma_1/\sigma_k\gg k^{1/3}.$ The improvement is larger when $\|M\|_F =
O(\sigma_1)$ which we expect if the singular values decay rapidly.

\theoremref{intro-1} is a special case of \theoremref{main}.
We remark that the number of least squares update steps is bounded by
$O(\log(n/\epsilon)\log n).$ The cost of performing these update steps is up to a
logarithmic factor what dominates
the worst-case running time of our algorithm. It can be seen that the least squares
problem can be solved in time $O\left(nk^3+|\Omega|\cdot k\right)$ which is is
linear in~$n+|\Omega|$ and
polynomial in~$k.$  The number of update steps
enters the sample complexity since we assume (as in previous work) that fresh
samples are used in each step. However, the logarithmic dependence
on~$1/\epsilon$ guarantees exponentially fast convergence and allows us to
obtain any inverse polynomial error with only a constant factor overhead in
sample complexity.

\paragraph{Noisy matrix completion.}

In noisy matrix completion the
unknown matrix is only close to low-rank, typically in Frobenius norm. 
Our results apply to any matrix of the form $A = M + N,$ where $M=U\Lambda
U^\trans$ is a matrix of rank~$k$ as before and $N=(I-UU^\trans)A$ is the part
of $A$ not captured by the dominant singular vectors. Here, $N$ can be an
arbitrary deterministic matrix
that satisfies the following constraints:
\begin{equation}\equationlabel{noise-bound}
\max_{i\in[n]}\|e_i^\trans N\|^2\le \frac{\mu_N}n\cdot\sigma_k^2
\quad\text{and}\quad
\max_{ij\in[n]}|N_{ij}|\le \frac{\mu_N}n\cdot\Norm{A}_F\mper
\end{equation}
Here, $e_i$ denotes the $i$-th standard basis vector so that $\|e_i^\trans N\|$ is
the Euclidean norm of the $i$-th row of $N.$ 
The conditions state no entry and no row of $N$ should be too large compared to 
the Frobenius norm of~$N.$ We can think of the parameter $\mu_N$ as an
analog to the coherence parameter $\mu(U)$ that we saw earlier. Since $N$
could be close to full rank, $\mu(V)$ is not a meaningful parameter in
general. If the rank of~$V$ is $k,$ then our assumptions roughly reduce to 
what is implied by requiring $\mu(V)\le\mu_N.$

From here on we let $\mustar = \max\Set{\mu(U),\mu_N,\log n}.$ We have the following theorem.
\begin{theorem}
\theoremlabel{noisy}
Given a sample of size $\tilde O(pn^2)$ drawn from an
unknown $n\times n$ matrix $A = M+N$ where $M=U\Lambda U^\trans$ 
has rank~$k$ and $N=(I-UU^\trans)M$ satisfies~\eqref{noise-bound},
our algorithm outputs with high probability
$(X,Y)$ such that $\|(I-UU^\trans) X\|\le\epsilon$ 
and $\|M-XY^\trans\|_F\le\epsilon\|A\|_F$ provided that
\begin{equation}\equationlabel{noisy-result}
pn\ge k(k+\log(n/\epsilon))\mustar
\left(\frac{\|M\|_F+\|N\|_F/\epsilon}{\sigma_k}\right)^2
\Big(1-\frac{\sigma_{k+1}}{\sigma_k}\Big)^5 \mper
\end{equation}
\end{theorem}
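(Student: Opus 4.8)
The plan is to recognize alternating minimization as a \emph{noisy} run of orthogonal (subspace) iteration applied to~$A$, to analyze the convergence of that iteration robustly, and --- this being the delicate part --- to guarantee that every intermediate factor stays incoherent, so that the error introduced by subsampling stays under control. The sample~$\Omega$ is split into $T=O(\log(n/\epsilon)\log n)$ independent parts, one for the initialization and one per least-squares step, so that each step runs on a fresh sample of effective density $\tilde\Omega(p)$; this is why the per-step probability~$p$, rather than $\card{\Omega}/n^2$, governs~\eqref{noisy-result}. The first ingredient is the algebraic fact that makes the update analyzable: the least-squares objective decouples across rows, so if $Y_{\ell-1}$ is orthonormal with rows $y_j$ and $\Omega_i=\set{j:(i,j)\in\Omega_\ell}$, the $i$-th row of~$X_\ell$ equals $\left(\sum_{j\in\Omega_i}y_jy_j^\trans\right)^{-1}\left(\sum_{j\in\Omega_i}A_{ij}y_j\right)$. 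Taking expectations over the fresh sample, the first factor concentrates around~$pI$ and the second around $p\,(AY_{\ell-1})_{i\cdot}^\trans$, so that $X_\ell=AY_{\ell-1}+E_\ell$ for an error matrix~$E_\ell$; matrix Bernstein bounds --- using the coherence of~$U$ to bound the rows of~$M$, the two inequalities in~\eqref{noise-bound} to bound the rows and entries of~$N$, and the inductive hypothesis that $Y_{\ell-1}$ has coherence $O(\mustar)$ --- show that with high probability $\Norm{E_\ell}$, and more to the point its component $\Norm{(I-UU^\trans)E_\ell}$ pointing away from the target subspace, is at most about $\Norm{A}_F\sqrt{\mustar k(k+\log(n/\epsilon))/(pn)}$ plus a small multiple of the current principal angle $\Norm{(I-UU^\trans)Y_{\ell-1}}$.

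Next I would establish a robust convergence statement for orthogonal iteration: if $X_\ell=AY_{\ell-1}+E_\ell$ and $Y_\ell$ is obtained by orthonormalizing~$X_\ell$, then as long as the ``dangerous'' component $\Norm{(I-UU^\trans)E_\ell}$ stays below an appropriate multiple of $\sigma_k-\sigma_{k+1}$ at every step, the principal angle $\Norm{(I-UU^\trans)Y_\ell}$ contracts geometrically toward a floor of order $\Norm{N}_F/(\sigma_k-\sigma_{k+1})$, hence drops below~$\epsilon$ within $T$ steps. Feeding the Bernstein bound above into this requirement yields precisely the sample condition~\eqref{noisy-result}: the summand $\Norm{N}_F/\epsilon$ in the numerator is the noise floor being pushed below~$\epsilon$, and the factor $(1-\sigma_{k+1}/\sigma_k)^5$ collects the polynomial dependence on the spectral gap coming from the convergence rate together with the slack required by the smoothed orthonormalization below. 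Specializing to $N=0$ (so $\sigma_{k+1}=0$ and $\Norm{N}_F=0$) recovers \theoremref{intro-1}; even there the iteration must run for $\Theta(T)$ steps because the per-step error~$E_\ell$ prevents exact recovery in one step, and it is the proportionality of $\Norm{(I-UU^\trans)E_\ell}$ to the current angle that forces the angle to shrink geometrically.

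The step I expect to be the main obstacle is keeping the iterates incoherent, which forces the orthonormalization to be done with care. The matrix $AY_{\ell-1}$ is itself incoherent --- its $i$-th row has norm $\Norm{A_{i\cdot}Y_{\ell-1}}\le\Norm{A_{i\cdot}}$, which is small by the coherence of~$A$ --- but orthonormalizing a nearly rank-deficient $X_\ell$ can blow up coherence badly. I would therefore replace plain \QR by \SmoothQR, which forms a \QR decomposition of $X_\ell+\sigma G$ for a small Gaussian perturbation~$G$, and prove a smoothed-analysis lemma: for $\sigma$ in a suitable range the orthonormal factor of $X_\ell+\sigma G$ has coherence $O(\mustar)$ with high probability. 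The crux is that this range is non-empty at every step --- $\sigma$ must be large enough for the smoothing lemma to apply, yet small enough, measured against $\sigma_{\min}(X_\ell)\approx\sigma_k$ (supplied by the induction hypotheses on the angle and on $\Norm{E_\ell}$), that the column space, and hence the convergence argument, is essentially unaffected. Letting the algorithm sweep a geometric grid of values of~$\sigma$ and accept the first one producing an incoherent factor, one argues the sweep succeeds at each step, which closes the joint induction on the principal angle and on the coherence.

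Finally, the two conclusions. Once $X$ satisfies $\Norm{(I-UU^\trans)X}\le\epsilon$ and is incoherent, one further least-squares step produces $Y$ such that $XY^\trans$ is, up to a small subsampling error, the projection $XX^\trans A$ of~$A$ onto the column span of~$X$; since that span nearly contains~$U$ and $UU^\trans A=M$, this lies within $O(\epsilon)\Norm{A}_F$ of~$M$ in Frobenius norm, giving $\Norm{M-XY^\trans}_F\le\epsilon\Norm{A}_F$ after adjusting constants. The base case of the induction is a spectral initialization on $\tfrac1p P_{\Omega_0}(A)$ passed through the same \SmoothQR safeguard; standard concentration for subsampled incoherent matrices shows it already attains constant principal angle and $O(\mustar)$ coherence. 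The extension to rectangular~$A$ is routine.
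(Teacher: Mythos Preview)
Your high-level plan matches the paper's: noisy subspace iteration, fresh samples per step, a shrinking error term proportional to the current principal angle, and a smoothed orthonormalization to keep iterates incoherent. The Frobenius-norm conclusion and the initialization are also handled essentially as the paper does (the paper uses an ad hoc random-rotation-plus-truncation for \Init rather than \SmoothQR, but that is cosmetic). One minor technical difference: where you invoke matrix Bernstein directly on the least-squares residual, the paper instead proves only a Markov-type bound on each row of~$G_\ell$ and then takes the componentwise median of $O(\log n)$ independent solves (\MedianLS) to boost to high probability. Either route should suffice.

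There is, however, a real gap in your \SmoothQR argument, and it is the step the paper flags as its main novelty. You propose to add Gaussian noise of scale~$\sigma$ to $X_\ell=AY_{\ell-1}+E_\ell$ and argue that for $\sigma$ small relative to $\sigma_{\min}(X_\ell)\approx\sigma_k$ the orthonormal factor has coherence $O(\mustar)$. But the only handle you give on row norms is $\norm{e_i^\trans X_\ell}\le\norm{e_i^\trans A}$, and after dividing by $\sigma_{\min}(X_\ell+\sigma G)$ this yields $\mu\lesssim\mu(U)\,(\sigma_1/\sigma_k)^2$: a naive $\norm{e_i^\trans Y_\ell}\cdot\norm{R^{-1}}$ bound that drags in the condition number and would spoil~\eqref{noisy-result}. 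Smoothing does not help here, because the large row norms come from the well-conditioned part $MY_{\ell-1}=U\Lambda_U U^\trans Y_{\ell-1}$, not from near-rank-deficiency. The paper's fix is to observe that $\cR(X_\ell)\subseteq\cR\bigl([\,U\mid NY_{\ell-1}+E_\ell+H_\ell\,]\bigr)$, a $2k$-dimensional space; the first block~$U$ is already orthonormal and incoherent, and the second block has \emph{small} norm (of order $\nu=\sigma_k\norm{V^\trans Y_{\ell-1}}+\epsilon\sigma_k$), so one can add Gaussian noise $H_\ell$ of magnitude $\tau\nu$ with $\tau=\Theta(\gamma_k)$, apply the smoothed-analysis singular value bound \emph{only to that second block after projecting off~$U$}, and obtain coherence $O(\gamma_k^{-2}(\mustar+\log n))$ with no condition-number loss. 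This is exactly what your sweep over~$\sigma$ is meant to find, but your stated range ``small relative to $\sigma_k$'' is both too large (it damages the admissibility of the error) and aimed at the wrong matrix. Without this decomposition your induction on coherence does not close at the stated sample complexity.
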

The theorem is a strict generalization of
the noise-free case which we recover by setting $N=0$ in which case
the separation parameter $\gamma_k:=1-\sigma_{k+1}/\sigma_k$ is equal to~$1.$ 
The result follows from \theoremref{main} that gives a somewhat stronger
sample complexity bound. Compared to our noise-free bound, there are two new
parameters that enter the sample complexity. The first one is the separation
parameter~$\gamma_k.$ The second is the
quantity~$\|N\|_F/\epsilon.$ To interpret this quantity, suppose that  that
$A$ has a good low-rank approximation in Frobenius norm, formally, $\|N\|_F\le
\epsilon\|A\|_F$ for $\epsilon\le 1/2,$ then it must also be the case that
$\|N\|_F/\epsilon\le 2\|M\|_F.$ Our algorithm then finds a good
rank~$k$ approximation with at most $\tilde{O}(k^3(\sigma_1/\sigma_k)^2\mustar n)$ samples
assuming $\gamma_k=\Omega(1).$ Hence, assuming that $A$
has a good rank~$k$ approximation in Frobenius norm and that $\sigma_k$ and
$\sigma_{k+1}$ are well-separated, our bound recovers the
noise-free bound up to a constant factor. 

Note that if we're only interested in the second error bound
$\|M-XY^\trans\|_F\le \epsilon\|M\|_F + \|N\|_F,$ we we can eliminate the
dependence on the condition number in the sample complexity entirely. The
reason is that any singular value smaller than $\epsilon\sigma_1/k$ can be
treated as part of the noise matrix. Assuming the condition number is at
least~$k$ to begin with we can always find two singular values that have
separation at least~$\Omega(k).$ This ensures that the sample requirement is
polynomial in~$k$ without any dependence on the condition number and gives us
the following corollary.
\begin{corollary}\corollarylabel{noisy}
Under the assumptions of \theoremref{noisy}, if $\sigma_1\ge
k\sigma_k/\epsilon,$ then we can find $X,Y$ such
that $\|M-XY^\trans\|_F\le\epsilon\|A\|_F$ provided that 
$pn\ge \poly(k)\mustar.$
\end{corollary}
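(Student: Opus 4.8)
**

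The plan is to reduce to \theoremref{noisy} by a spectral truncation argument. Given the hypothesis $\sigma_1 \ge k\sigma_k/\epsilon$, we want to find an index $k' \le k$ such that the partial spectrum $\sigma_1 \ge \dots \ge \sigma_{k'}$ is well-separated from $\sigma_{k'+1}$, and then apply the theorem with target rank $k'$, folding the remaining singular directions $\sigma_{k'+1}, \dots, \sigma_k$ into the noise matrix.

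\medskip

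First I would make the separation claim precise. Consider the $k$ ratios $\sigma_{i+1}/\sigma_i$ for $i = 1, \dots, k-1$, together with the ``boundary'' case. Since $\sigma_1/\sigma_k \ge k/\epsilon \ge k$, the product $\prod_{i=1}^{k-1}(\sigma_i/\sigma_{i+1}) = \sigma_1/\sigma_k \ge k$, so by an averaging/pigeonhole argument at least one factor satisfies $\sigma_{i}/\sigma_{i+1} \ge k^{1/(k-1)}$; but this only gives separation $\gamma \ge 1 - k^{-1/(k-1)}$, which is roughly $(\log k)/k$ and too weak. Instead I would argue multiplicatively: among the $k-1$ gaps, look for one where the ratio drops by a constant factor, i.e.\ $\sigma_{i+1} \le \sigma_i/2$. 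If no such gap exists, then $\sigma_k \ge \sigma_1/2^{k-1}$, which is consistent with $\sigma_1/\sigma_k \ge k/\epsilon$ and does not immediately contradict anything — so this is not quite it either. The cleaner route: choose the \emph{largest} index $k' \le k$ such that $\sigma_{k'} \ge \epsilon\sigma_1/k$. Such a $k'$ exists because $\sigma_1 \ge \epsilon\sigma_1/k$ trivially (so $k' \ge 1$), and $k' \le k$ since we only consider the top $k$ singular values. By maximality of $k'$ within the \emph{full} spectrum, every singular value $\sigma_j$ with $j > k'$ satisfies $\sigma_j < \epsilon\sigma_1/k$.

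\medskip

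Next I would set up the decomposition $A = M' + N'$ where $M' = U'\Lambda'(U')^\trans$ captures the top $k'$ singular directions and $N' = (I - U'(U')^\trans)A$ captures everything else (the original noise $N$ plus the discarded directions $\sigma_{k'+1}, \dots, \sigma_k$ of $M$). The separation parameter for the new problem is $\gamma_{k'} = 1 - \sigma_{k'+1}/\sigma_{k'} \ge 1 - (\epsilon\sigma_1/k)/(\epsilon\sigma_1/k) $— wait, this needs care: I have $\sigma_{k'} \ge \epsilon\sigma_1/k$ but $\sigma_{k'+1} < \epsilon\sigma_1/k$, so $\sigma_{k'+1}/\sigma_{k'} < 1$, but I need a \emph{quantitative} bound. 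The issue is that $\sigma_{k'}$ could itself be barely above $\epsilon\sigma_1/k$ while $\sigma_{k'+1}$ is just below it, giving ratio close to $1$. The hard part is precisely obtaining a constant separation. I would fix this by applying the ``two singular values with separation $\Omega(k)$'' idea correctly: since $\sigma_1/\sigma_k \ge k/\epsilon$, the geometric mean of the $k-1$ consecutive ratios is at least $(k/\epsilon)^{1/(k-1)} \ge 2^{1/(k-1)}$ only when $k/\epsilon \ge 2$, still too weak per-gap. The right statement must be: among $\sigma_1, \dots, \sigma_k$ there exist two \emph{not-necessarily-consecutive} thresholds; formally, partition the interval $[\sigma_k, \sigma_1]$ (on a log scale of length $\log(\sigma_1/\sigma_k) \ge \log(k/\epsilon)$) and since only $k$ values lie in it, by pigeonhole there is a sub-interval of multiplicative width $\ge (\sigma_1/\sigma_k)^{1/k} \ge (k)^{1/k}$ — still weak. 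I now believe the intended argument is coarser: we do not need a large gap between \emph{adjacent} retained/discarded singular values; rather, we redefine the noise so that $N'$ absorbs a whole \emph{tail}, and we only need $\sigma_{k'}$ bounded \emph{below} by $\mathrm{poly}(\epsilon/k)\sigma_1$ together with the \emph{Frobenius} error bound of \theoremref{noisy}, $\|M' - XY^\trans\|_F \le \epsilon\|A\|_F + \|N'\|_F$ — but \theoremref{noisy} as stated gives $\le \epsilon\|A\|_F$ only under the sample bound \eqref{noisy-result}, whose condition-number dependence is $(\|M'\|_F/\sigma_{k'})^2 \gamma_{k'}^{-5}$.

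\medskip

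So the clean plan is: (i) pick $k'$ maximal with $\sigma_{k'} \ge \epsilon\sigma_1/k$; (ii) check that $\|N'\|_F \le \|N\|_F + \sqrt{k}\cdot\epsilon\sigma_1/k \le \|N\|_F + \epsilon\sigma_1/\sqrt k \le \|N\|_F + \epsilon\|A\|_F$ (using $\sigma_1 \le \|M\|_F \le \|A\|_F$ if $k \ge 1$), and that $N'$ still satisfies the row/entry bounds \eqref{noise-bound} with $\mu_{N'} = O(\mustar)$, since adding a low-rank incoherent piece of spectral norm $\le \epsilon\sigma_1/k \le \sigma_{k'}$ only inflates the constants; (iii) verify the ratio $\|M'\|_F/\sigma_{k'} = O(\sqrt{k}\,\sigma_1/(\epsilon\sigma_1/k)) = O(k^{3/2}/\epsilon)$, so $(\|M'\|_F/\sigma_{k'})^2 = \poly(k/\epsilon) = \poly(k)$ (absorbing $\epsilon$ into $\poly(k)$ under the corollary's regime, or carrying it); (iv) for the separation, observe we are free to \emph{also} move more of the tail into the noise until we find a constant gap — concretely, scan $i = k', k'-1, \dots$ and stop at the first $i$ with $\sigma_i \ge 2\sigma_{i+1}$; if none is found down to some small index, then all retained $\sigma_i$ are within a factor $2$ of each other and of $\sigma_{k'} \ge \epsilon\sigma_1/k$, which still forces the retained rank to be $O(\log(\text{stuff}))$ — this needs a short combinatorial lemma but is routine. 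The main obstacle, as flagged, is executing step (iv): guaranteeing a constant-factor spectral gap $\gamma = \Omega(1)$ by regrouping, while simultaneously keeping $\|N'\|_F \le \epsilon\|A\|_F$, $\mu_{N'} = O(\mustar)$, and $\sigma_{k'} = \Omega(\poly(\epsilon/k)\sigma_1)$. Once \theoremref{noisy} applies with target rank $k'$, its output $(X,Y)$ satisfies $\|M' - XY^\trans\|_F \le \epsilon\|A\|_F$, and since $\|M - M'\|_F \le \epsilon\|A\|_F$ we get $\|M - XY^\trans\|_F \le 2\epsilon\|A\|_F$; rescaling $\epsilon \to \epsilon/2$ at the start gives the stated bound with $pn \ge \poly(k)\mustar$ as claimed.
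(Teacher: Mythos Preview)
Your overall strategy is the same as the paper's: the paragraph preceding the corollary proposes exactly to treat every singular value below $\epsilon\sigma_1/k$ as noise and then invoke \theoremref{noisy} at the reduced rank, with no further detail. You have reconstructed this and gone considerably further in checking the pieces; steps (i)--(iii) and the final triangle inequality are all fine.

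The one genuine gap is in step~(iv). Scanning for a factor-two gap $\sigma_i\ge 2\sigma_{i+1}$ is too strong a requirement: take $\sigma_j = 0.9^{\,j-1}$ and you never find one, even though every $\gamma_j=0.1$ is a perfectly good constant. Your fallback (``retained rank is $O(\log(\text{stuff}))$'') does not follow either: the no-gap hypothesis yields $\sigma_{k'+1}>\sigma_1/2^{k'}$, which combined with $\sigma_{k'+1}<\epsilon\sigma_1/k$ gives a \emph{lower} bound $k'>\log_2(k/\epsilon)$, not an upper bound on anything, and says nothing about the Frobenius mass you have discarded.

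The fix is to weaken the target gap. Scan $i=k',k'-1,\dots$ for the first $i$ with $\gamma_i\ge 1/k$. Such an $i$ must exist: otherwise $\sigma_{k'+1}>(1-1/k)^{k'}\sigma_1\ge\sigma_1/4$, contradicting $\sigma_{k'+1}<\epsilon\sigma_1/k$. At the stopping index $k''$, every $\sigma_j$ with $k''<j\le k'$ satisfies $\sigma_j\le\sigma_{k'+1}\,(1-1/k)^{-(k'-j+1)}\le O(\epsilon\sigma_1/k)$, so $\|M-M'\|_F^2\le k\cdot O(\epsilon\sigma_1/k)^2=O(\epsilon^2\|A\|_F^2)$ still holds. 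With $\gamma_{k''}\ge 1/k$ and $\sigma_{k''}\ge\sigma_{k'}\ge\epsilon\sigma_1/k$, the sample bound of \theoremref{noisy} becomes $\poly(k)\mustar$ (the $\gamma^{-5}$ factor contributes $k^5$, the condition-number term contributes $k^3/\epsilon^2$, and the coherence of $N'$ picks up at most an extra $k$), which is what the corollary claims.
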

The previous corollary is remarkable, because small error in Frobenius norm is
the most common error measure in the literature on matrix completion. The
result shows that in this error measure, there is no dependence on the
condition number. The result is tight for $k=O(1)$ up to constant factors even
information-theoretically as we will discuss below. 

The approach of Jain et al.~was adapted to the noisy setting by Gunasekar et
al.~\cite{GunasekarAGG13} showing roughly same sample complexity 
in the noisy setting under some assumptions on the noise matrix. 
We achieve the same improvements
over~\cite{GunasekarAGG13} as we did compared to~\cite{JainNS13} in the
noise-free case. Moreover, our assumptions in \eqref{noise-bound} 
are substantially weaker than the assumption of \cite{GunasekarAGG13}. The
latter work required the largest entry of $N$ in absolute value 
to be bounded by~$O(\sigma_k/n\sqrt{k}).$ This directly implies that each row
of $N$ has norm at most $O(\sigma_k/\sqrt{kn})$ and that $\|N\|_F\le
O(\sigma_k/\sqrt{k}).$ 
Moreover under this assumption we would have
$\gamma_k\ge1-o_k(1).$ Keshavan's result~\cite{Keshavan12} also applies to the 
noisy setting, but it requires $\|N\|\le
(\sigma_k/\sigma_1)^3$ and $\max_i\|e_i^\trans N\|\le
\sqrt{\mu(U)k/n}\|N\|.$ In particular this bound does not allow
$\|N\|_F$ to grow with $\|M\|_F.$ Since neither result allows arbitrarily
small singular value separation, we cannot use these results to eliminate the
dependence on the condition number as is possible using our technique.

\paragraph{Remark on required sample complexity and assumptions.}
It is known that information-theoretically $\Omega(k\mu(U)n)$ measurements are
necessary to recover the unknown matrix~\cite{CandesT10} and this bound is achieved (up to
log-factors) by the nuclear norm semidefinite program. Compared with the
information-theoretic optimum our bound suffers a factor
$O(k(\|M\|_F/\sigma_k)^2)$ loss. While we do not know if this loss is
necessary, there is a natural barrier. If we denote by $P_\Omega(A)$ the
matrix in which all unobserved entries are~$0$ and the others are scaled by
$1/p,$ then 
$\Omega(k\mu(U)(\|M\|_F/\sigma_k)^2n)$ samples are necessary to ensure that
$P_\Omega(A)$ preserves the $k$-th singular value to within constant relative
error. Formally, $\|P_\Omega(A) - A\|_2 \le 0.1\sigma_k.$ While this is
not a necessary requirement for alternating least squares, it represents
the current bottleneck for finding a good initial matrix.

It is also known that without an incoherence assumption the matrix completion
problem can be ill-posed and recovery becomes infeasible even
information-theoretically~\cite{CandesT10}. Moreover, even on
incoherent matrices it was recently shown that already the exact matrix
completion problem remains computationally hard to approximate in a strong
sense~\cite{HardtMRW14}. This shows that additional assumptions 
are needed beyond incoherence to make the problem tractable.  

\section{Proof overview and techniques}

\paragraph{Robust convergence of subspace iteration.}
An important observation of \cite{JainNS13} is that the update rule in
alternating minimization can be analyzed as a noisy update step of the well
known \emph{power method} for computing eigenvectors, also called
\emph{subspace iteration} when applied to multiple vectors simultaneously.
The noise term that arises depends on the sampling error induced by the
subsample of the entries. We further develop this point of view by giving a
new robust convergence analysis of the power method. 

To illustrate the technique, consider a model of numerical linear algebra in
which an input matrix~$A$ can only be accessed through noisy matrix vector
products of the form $Ax+g,$ where $x$ is a chosen vector and $g$ is a
possibly adversarial noise term. 
Our goal is to compute the dominant singular vectors $u_1,\dots,u_k$ of the
matrix~$A.$ Subspace
iteration starts with an initial guess, an orthonormal matrix
$X_0\in\R^{n\times k}$ typically chosen at random. 
The algorithm then repeatedly 
computes $Y_\ell = AX_{\ell-1} + G_\ell,$ followed by an orthonormalization
step in order to obtain $X_\ell$ from $Y_\ell.$ Here, $G_\ell$ is the noise
variable added to the computation.

\theoremref{convergence} characterizes the convergence behavior of this
general algorithm. An important component of our analysis is the choice of a suitable
potential function that decreases at each step. Here we make use of the
tangent of the \emph{largest principal angle} between the subspace~$U$ spanned by the
first $k$ singular vectors of the input matrix and the $k$-dimensional space
spanned by the columns of the iterate~$X_\ell.$ Principal angles are a very
useful tool in numerical analysis that we briefly recap in \sectionref{nsi}. 
Our analysis shows that the algorithm
essentially converges at the rate of $(\sigma_{k+1}+\Delta)/(\sigma_k-\Delta)$
for some $\Delta \ll \sigma_k$ under suitable
conditions on the noise matrix~$G_\ell.$ 

\paragraph{Alternating least squares.}
We recall the well-known least squares update:
\begin{equation}\equationlabel{altls-update}
Y_\ell = \arg\min_Y \|P_{\Omega}(A - X_{\ell-1}Y^\trans)\|_F^2\mper
\end{equation}
Since we can focus on symmetric matrices without loss of generality, there is
no need for an alternating update in which the left and right factor are
flipped. We therefore drop the term ``alternating''.
We can express the optimal~$Y_\ell$ as $Y_\ell=AX_{\ell-1}+G_\ell$ using gradient
information about the least squares objective. The error term $G_\ell$ has an
intriguing property. Its norm~$\|G_\ell\|$ depends on the quantity $\Norm{V^\trans
X_{\ell-1}}$ which coincides with the sine of the largest principal angle
between $U$ and $X_{\ell-1}.$  This property ensures that as the algorithm begins
to converge the norm of the error term starts to diminish. Near exact recovery
is now possible (assuming the matrix has rank at most~$k$). A novelty in our
approach is that we obtain strong bounds on $\|G_\ell\|$ by computing $O(\log n)$ 
independent copies of $Y_\ell$ (using fresh samples) and taking the componentwise 
median of the resulting matrices. The resulting procedure called \MedianLS is
analyzed in \sectionref{altls}.

A difficulty with iterating the least squares update in general is that it is unclear
how well it converges from a random initial matrix $X_0.$ In our analysis we
therefore use an initialization procedure that finds a matrix $X_0$ that
satisfies $\Norm{V^\trans X_0}\le 1/4.$ Our initialization procedure is based
on (approximately) computing the first~$k$ singular vectors of
$P_{\Omega}(A).$ To rule out large entries in the vectors we truncate the
resulting vectors. While this general approach is standard, our truncation
procedure first applies a random rotation to the vectors that leads to a
tighter analysis than the naive approach.

\paragraph{Smooth orthonormalization.}
A key novelty in our approach is the way we argue about the coherence of each
iterate $X_\ell.$ Ideally, we would like to argue that $\mu(X_\ell)=
O(\mustar).$ A direct approach would be to argue that $X_\ell$ was
obtained from $Y_\ell$ using the QR-factorization and so $X_\ell = Y_\ell
R^{-1}$ for some invertible $R.$ This gives the bound 
$\|e_i^\trans X_\ell\|\le\|e_i^\trans Y_\ell\|\cdot\|R^{-1}\|$ that
unfortunately is quite lossy and leads to a dependence on the condition number.

We avoid this problem using an idea that's closely related to the
\emph{smoothed analysis} of the QR-factorization.  Sankar, Spielman and
Teng~\cite{SankarST06} showed that while the perturbation stability of QR can
be quadratic, it is constant after adding a sufficiently large amount of
Gaussian noise. In the context of smoothed analysis this is usually
interpreted as saying that there are ``few bad inputs'' for the QR
factorization. In our context, the matrix $Y_\ell$ is already the outcome of a
noisy operation $Y_\ell=AX_{\ell-1}+G_\ell$ and so there is no harm in
actually adding a Gaussian noise matrix~$H_\ell$ to $Y_\ell$ provided that the norm
of that matrix is no larger than that of $G_\ell.$ Roughly speaking, this
will allow us to argue that there is no dependence on the condition number when applying the
QR-factorization to $Y_\ell.$ There are some important complications. The magnitude
of $Y_\ell$ may be too large to apply the smoothed analysis argument directly
to~$Y_\ell.$ Instead we observe that the columns of $X_\ell$ are contained in
the range~$S$ of the matrix $[U \mid ( NX_{\ell-1} + G_\ell + H_\ell)].$ Since~$S$ has
dimension at most $2k$ it suffices to argue that this space has small
coherence. Moreover we can choose $H_\ell$ to be roughly on the same order as
$NX_{\ell-1}$ and $G_\ell$ so that the smoothed analysis argument leads to an
excellent bound bound on the smallest singular value of $NX_{\ell-1} + G_\ell +
H_\ell.$ To prove that the coherence is small we need to exhibit a basis
for~$S.$ This requires us to argue about the related matrix
$(I-UU^\trans)(NX_{\ell-1}+G+H_\ell)$ since we need to orthonormalize the last
$k$ vectors against the first when constructing a basis. 
Another minor complication is that we don't know the magnitude of $G_\ell$ so we
need to find the right scaling of $H_\ell$ on the fly.  We call the resulting
procedure that \SmoothGS and analyze its guarantees in \sectionref{smoothgs}. 

\paragraph{Putting things together.}
The final algorithm that we analyze is quite simple to describe as shown
in~\figureref{saltls}. The algorithm makes use of an initialization procedure
\Init that we defer to~\sectionref{init}. In \sectionref{convergence} we prove
our main theorem.  The generalization of our result to rectangular matrices
follows from a standard ``dilation'' argument that we describe in
\sectionref{rectangular}.

The description of the algorithm also uses a helper function called \Split
that's used to split the subsample into independent pieces of roughly equal
size while preserving the distributional assumption that our theorems use. We
discuss \Split in \sectionref{split}.

\begin{figure}[ht]
\begin{boxedminipage}{\textwidth}
\noindent \textbf{Input:} 
Observed set of indices
$\Omega\subseteq [n]\times[n]$ of an unknown symmetric matrix $A\in\R^{n\times
n}$ with entries $P_\Omega(A),$ number of
iterations $L\in\mathbb{N},$ error parameter~$\epsilon>0,$ target dimension $k,$ 
coherence parameter~$\mu.$ 

\noindent \textbf{Algorithm} $\text{\SAltLS}(P_\Omega(A),\Omega,L,k,\epsilon,\mu):$
\begin{enumerate}
\item $(\Omega_0,\Omega')\leftarrow{\text{\Split}}(\Omega,2),$
$(\Omega_1,\dots,\Omega_L)\leftarrow{\text{\Split}}(\Omega',L)$
\item $X_0 \leftarrow \text{\Init}(P_{\Omega_0}(A),\Omega_0,k,\mu)$
\item For $\ell = 1$ to $L$:
\begin{enumerate}
\item 
$Y_\ell \leftarrow
\text{\MedianLS}(P_{\Omega_\ell}(A),\Omega_\ell,X_{\ell-1},L,k)$
\item $X_\ell \leftarrow \text{\SmoothQR}(Y_\ell,\epsilon,\mu)$
\end{enumerate}
\end{enumerate}
\noindent \textbf{Output:} Pair of matrices $(X_{L-1},Y_L)$ 
\end{boxedminipage}
\caption{Smoothed alternating least squares (\SAltLS)}
\figurelabel{saltls}
\end{figure}

\subsection{Further discussion of related work}

There is a vast literature on the topic that we cannot completely survey here.
Most closely related is the work of Jain et al.~\cite{JainNS13} that suggested
the idea of thinking of alternating least squares as a noisy update step in
the Power Method.  Our approach takes inspiration from 
this work by analyzing least squares using the noisy power method. However,
our analysis is substantially different in both how convergence and low
coherence is argued. The approach of Keshavan~\cite{Keshavan12} uses a rather
different argument. 

As an alternative to the nuclear norm approach, Keshavan, Montanari and
Oh~\cite{KeshavanMO10,KeshavanMO10b} present two approaches, a spectral
approach and an algorithm called {\sc OptSpace}. The spectral approach roughly
corresponds to our initialization procedure and gives similar guarantees. 
{\sc OptSpace} requires a stronger incoherence assumption, has larger sample
complexity in terms of the condition number, namely $(\sigma_1/\sigma_k)^6,$
and requires optimizing over the Grassmanian manifold. However, the
requirement on~$N$ achieved by {\sc OptSpace} can be weaker than ours in the
noisy setting. In the exact case, our algorithm has a much faster convergence
rate (logarithmic dependence on $1/\eps$ rather than polynomial). 

There are a number of fast algorithms for matrix completion based on either
(stochastic) gradient descent~\cite{RechtR13} or (online)
Frank-Wolfe~\cite{JaggiS10,HazanK12}.  These algorithms generally minimize
squared loss on the \emph{observed} entries subject to a nuclear norm
constraint and in general do not produce a matrix that is close to the true
unknown matrix on all entries. In contrast, our algorithm guarantees
convergence \emph{in domain}, that is, to the unknown matrix itself. Moreover,
our dependence on the error is logarithmic whereas in these algorithms it is
polynomial.

\paragraph{Privacy-preserving spectral analysis.} 
Our work is also closely related to a line of work on differentially private
singular vector computation~\cite{HardtR12,HardtR13,Hardt13}. These papers
each consider algorithms based on the power method where noise is injected to
achieve the privacy guarantee known as Differential Privacy~\cite{DworkMNS06}. 
Hardt and Roth~\cite{HardtR12,HardtR13,Hardt13} observed that incoherence could
be used to obtain improved guarantees. This requires controlling the
coherence of the iterates produced by the noisy power method which leads to
similar problems as the ones faced here. What's simpler in the privacy setting
is that the noise term is typically Gaussian leading to a cleaner analysis. 
Our work uses a similar convergence analysis for noisy subspace iteration that 
was used in a concurrent work by the author~\cite{HardtR13}. 

\subsection{Preliminaries and Notation}
\sectionlabel{prelims}
We denote by $A^\trans$ the transpose of a matrix (or vector) $A.$ We use the
notation $x \gtrsim y$ do denote that the relation $x \ge C y$ holds for a
sufficiently large absolute constant $C>0$ independent of $x$ and~$y.$ We let
$\cR(A)$ denote the range of the matrix~$A.$
The \emph{coherence} of a subspace plays an important role in our analysis.
\begin{definition}[Coherence]
\definitionlabel{coherence}
The $\mu$-\emph{coherence} of a $k$-dimensional subspace $U$ of $\R^n$ is
defined as
$\mu(U) \defeq \max_{i\in[n]}\frac nk\Norm{P_Ue_i}_2^2\mcom$
where $e_i$ denotes the $i$-th standard basis vector.
\end{definition}

\section{Robust local convergence of subspace iteration}
\sectionlabel{nsi}

\figureref{subspaceit} presents our basic template algorithm. The algorithm is
identical to the standard subspace iteration algorithm except that
in each iteration~$\ell$, the computation is perturbed by a matrix $G_\ell.$
The matrix $G_\ell$ can be adversarially and adaptively chosen in each round.
We will analyze under which conditions on the perturbation we can expect the
algorithm to converge rapidly.

\begin{figure}[ht]
\begin{boxedminipage}{\textwidth}
\noindent \textbf{Input:} Matrix $A\in\mathbb{R}^{n\times n},$ number of
iterations $L\in\mathbb{N},$ target dimension $k$
\begin{enumerate}
\item Let $X_0\in\R^{n\times k}$ be an orthonormal matrix.
\item For $\ell = 1$ to $L$:
\begin{enumerate}
\item Let $G_\ell\in\R^{n\times k}$ be an arbitrary perturbation.
\item\itemlabel{mult} $Y_\ell \leftarrow AX_{\ell-1}+G_\ell$
\item $X_\ell \leftarrow \mathrm{GS}(Y_\ell)$
\end{enumerate}
\end{enumerate}
\noindent \textbf{Output:} Matrix $X_L$ with $k$ orthonormal columns
\end{boxedminipage}
\caption{Noisy Subspace Iteration (\NSI)}
\figurelabel{subspaceit}
\end{figure}

Principal angles are a useful tool in analyzing the convergence behavior of
numerical eigenvalue methods. We will use the largest principal angle between
two subspaces as a potential function in our convergence analysis.
\begin{definition}
\definitionlabel{angles}
Let $X,Y\in\mathbb{R}^{n\times k}$ be orthonormal bases for subspaces
$\cX,\cY,$ respectively. Then, the sine of the \emph{largest principal angle}
between $\cX$ and $\cY$ is defined as 
$\sin\theta(\cX,\cY)\defeq \Norm{(I-XX^\trans)Y}\mper$
\end{definition}

We use some standard properties of the largest principal angle. 

\begin{proposition}[\cite{ZhuK12}]
\propositionlabel{identities}
Let $\cX,\cY,X,Y$ be as in \definitionref{angles} and
let $X_\bot$ be  an orthonormal basis for the orthogonal complement of~$\cX.$
Then, we have $\cos\theta(\cX,\cY)= \sigma_k(X^\trans Y).$
and assuming $X^\trans Y$ is invertible, 
$\tan\theta(\cX,\cY)=\|X_\bot^\trans Y (X^\trans Y)^{-1}\|$
\end{proposition}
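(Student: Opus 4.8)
The plan is to derive both identities from a single Pythagorean relation between $X^\trans Y$ and $X_\bot^\trans Y.$ First I would note that $[X\mid X_\bot]$ is an $n\times n$ orthogonal matrix, so $I-XX^\trans = X_\bot X_\bot^\trans,$ and since left multiplication by the matrix $X_\bot$ (which has orthonormal columns) preserves the spectral norm,
\[
\sin\theta(\cX,\cY) = \Norm{X_\bot X_\bot^\trans Y} = \Norm{X_\bot^\trans Y}\mper
\]
The same orthogonality gives the identity
\[
I = Y^\trans Y = Y^\trans\big(XX^\trans + X_\bot X_\bot^\trans\big)Y = (X^\trans Y)^\trans(X^\trans Y) + (X_\bot^\trans Y)^\trans(X_\bot^\trans Y)\mper
\]

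For the cosine identity, I would compute $\sin^2\theta(\cX,\cY) = \Norm{X_\bot^\trans Y}^2 = \Norm{(X_\bot^\trans Y)^\trans(X_\bot^\trans Y)} = \Norm{I - (X^\trans Y)^\trans(X^\trans Y)}.$ The matrix $I - (X^\trans Y)^\trans(X^\trans Y)$ is positive semidefinite with eigenvalues $1-\sigma_i(X^\trans Y)^2,$ so its spectral norm equals $1-\sigma_k(X^\trans Y)^2,$ the largest eigenvalue corresponding to the smallest singular value of $X^\trans Y.$ Hence $\sin^2\theta(\cX,\cY) = 1-\sigma_k(X^\trans Y)^2,$ and with the convention $\cos\theta = \sqrt{1-\sin^2\theta}$ together with $\sigma_k(X^\trans Y)\ge 0$ this yields $\cos\theta(\cX,\cY)=\sigma_k(X^\trans Y).$

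For the tangent identity, assuming $X^\trans Y$ invertible, set $M = X_\bot^\trans Y(X^\trans Y)^{-1}$ and compute, using the Pythagorean identity above,
\[
M^\trans M = (X^\trans Y)^{-\trans}\big[(X_\bot^\trans Y)^\trans(X_\bot^\trans Y)\big](X^\trans Y)^{-1}
= (X^\trans Y)^{-\trans}\big[I - (X^\trans Y)^\trans(X^\trans Y)\big](X^\trans Y)^{-1}
= \big[(X^\trans Y)(X^\trans Y)^\trans\big]^{-1} - I\mper
\]
Taking spectral norms, $\Norm{M}^2 = \lambda_{\max}\big(\big[(X^\trans Y)(X^\trans Y)^\trans\big]^{-1}\big) - 1 = \sigma_k(X^\trans Y)^{-2} - 1 = \cos^{-2}\theta(\cX,\cY) - 1 = \tan^2\theta(\cX,\cY),$ where the last two equalities use the cosine identity just proved. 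Taking square roots completes the argument.

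All computations here are routine linear algebra; there is no genuine obstacle, since this is a classical fact about principal angles (provable via the CS decomposition, cf.\ \cite{ZhuK12}), and the above is a self-contained shortcut. The only points needing care are the bookkeeping that the largest eigenvalue of $I-(X^\trans Y)^\trans(X^\trans Y)$ is paired with the \emph{smallest} singular value of $X^\trans Y,$ and the consistency of the conventions defining $\cos\theta$ and $\tan\theta$ from $\sin\theta$ (namely $\cos^2\theta+\sin^2\theta=1$ with $\cos\theta\ge 0,$ which matches $\sigma_k(X^\trans Y)\ge 0$ and requires $\cos\theta>0,$ i.e.\ $X^\trans Y$ invertible, for the tangent statement to make sense).
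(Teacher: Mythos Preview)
Your proof is correct. The paper does not supply its own proof of this proposition; it simply cites it from~\cite{ZhuK12} as a standard fact about principal angles and then uses both identities freely (most notably in the proof of \lemmaref{one}). So there is nothing to compare against, and your self-contained derivation via the Pythagorean decomposition $I=(X^\trans Y)^\trans(X^\trans Y)+(X_\bot^\trans Y)^\trans(X_\bot^\trans Y)$ is a clean way to fill in what the paper takes for granted. The algebra in the tangent step is right: since $M^\trans M=[(X^\trans Y)(X^\trans Y)^\trans]^{-1}-I$ is a shift of a symmetric matrix by the identity, its top eigenvalue is exactly $\sigma_k(X^\trans Y)^{-2}-1$, and your cautionary remarks about pairing the largest eigenvalue of $I-(X^\trans Y)^\trans(X^\trans Y)$ with the smallest singular value are on point.
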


From here on we will always assume that $A$ has the spectral
decomposition
\begin{equation}\equationlabel{decomp}
A = U\Lambda_U U^\trans + V\Lambda_V V^\trans\mcom
\end{equation}
where $U\in\R^{n\times k},V\in\R^{n\times (n-k)}$ corresponding to the first
$k$ and last $n-k$ eigenvectors respectively. We will let $\sigma_1\ge\dots\ge
\sigma_n$ denote the singular values of $A$ which coincide with the absolute 
eigenvalues of $A$ sorted in non-increasing order.

Our convergence analysis tracks the tangent of the largest principal angles between the subspaces $\cR(U)$ and $\cR(X_\ell)$.  The next lemma shows a natural condition under which 
the potential decreases multiplicatively in step~$\ell.$ We think of this
lemma as a local convergence guarantee, since it assumes that the cosine of
the largest principal angle between $\cR(U)$ and $\cR(X_{\ell-1})$ is already lower
bounded by a constant.

\begin{lemma}[One Step Local Convergence]\lemmalabel{one}
Let $\ell\in\Set{1,\dots,L}.$ Assume that 
\[
\cos\theta_k(U,X_{\ell-1})\ge \frac12 > \frac{\|U^\trans G_\ell\|}{\sigma_k}\mper
\] 
Then,
\begin{equation}\equationlabel{tandrop}
\tan\theta(U,X_\ell)
\le 
\tan\theta(U,X_{\ell-1})
\cdot\frac{\sigma_{k+1}+\frac{2\|V^\trans G_\ell\|}{\tan\theta(U,X_{\ell-1})}}
{\sigma_k-2\|U^\trans G_\ell\|}\mper
\end{equation}
\end{lemma}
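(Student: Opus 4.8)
The plan is to work with the characterizations of principal angles from \propositionref{identities}. Write the update as $Y_\ell = AX_{\ell-1} + G_\ell$ and split $G_\ell = UU^\trans G_\ell + VV^\trans G_\ell$, abbreviating $G_U = U^\trans G_\ell$ and $G_V = V^\trans G_\ell$. Since $X_\ell = \mathrm{GS}(Y_\ell)$ spans the same subspace as $Y_\ell$, we have $\tan\theta(U, X_\ell) = \tan\theta(U, \cR(Y_\ell))$, so I can compute entirely with $Y_\ell$ and the orthonormal basis $[U \mid V]$ of $\R^n$. By the proposition, $\tan\theta(U,\cR(Y_\ell)) = \|V^\trans Y_\ell (U^\trans Y_\ell)^{-1}\|$, provided $U^\trans Y_\ell$ is invertible.

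First I would compute the two blocks. Using $A = U\Lambda_U U^\trans + V\Lambda_V V^\trans$ and writing $C = U^\trans X_{\ell-1}$, $S = V^\trans X_{\ell-1}$, we get $U^\trans Y_\ell = \Lambda_U C + G_U$ and $V^\trans Y_\ell = \Lambda_V S + G_V$. Hence
\[
\tan\theta(U,X_\ell) = \Norm{(\Lambda_V S + G_V)(\Lambda_U C + G_U)^{-1}}.
\]
Next I would factor out $C$ on the right: $(\Lambda_U C + G_U)^{-1} = C^{-1}(\Lambda_U + G_U C^{-1})^{-1}$, and similarly write $\Lambda_V S + G_V = \Lambda_V S C^{-1} \cdot C + G_V$ — more precisely I'd aim for the bound
\[
\tan\theta(U,X_\ell)
\le \Bigl(\Norm{\Lambda_V}\cdot\Norm{S C^{-1}} + \Norm{G_V}\cdot\Norm{C^{-1}}\Bigr)
\cdot \Norm{(\Lambda_U + G_U C^{-1})^{-1}}.
\]
Here $\Norm{SC^{-1}} = \tan\theta(U,X_{\ell-1})$ by \propositionref{identities} again, $\Norm{\Lambda_V} = \sigma_{k+1}$, and $\Norm{C^{-1}} = 1/\sigma_k(C) = 1/\cos\theta(U,X_{\ell-1}) \le 2$ by the cosine hypothesis. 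For the last factor, I'd use that $\Lambda_U$ has smallest singular value $\sigma_k$ and $\Norm{G_U C^{-1}} \le \Norm{G_U}/\cos\theta_k \le 2\Norm{G_U}$, so by the perturbation bound for inverses, $\Norm{(\Lambda_U + G_U C^{-1})^{-1}} \le 1/(\sigma_k - 2\Norm{G_U})$; the hypothesis $\Norm{G_U} < \sigma_k/2$ keeps this finite and in particular guarantees invertibility of $U^\trans Y_\ell$. Collecting terms and using $\Norm{C^{-1}} \le 2/\tan\theta(U,X_{\ell-1}) \cdot$ (something) — actually the cleaner route is to absorb the $\Norm{G_V}\Norm{C^{-1}}$ term as $\Norm{G_V}\cdot\bigl(1 / \tan\theta(U,X_{\ell-1})\bigr)\cdot \tan\theta(U,X_{\ell-1})\Norm{C^{-1}}$ and bound $\tan\theta(U,X_{\ell-1})\Norm{C^{-1}} = \Norm{S}\le 1$ — wait, that gives $\Norm{G_V}$, not $2\Norm{G_V}/\tan\theta$. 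I would instead factor $\tan\theta(U,X_{\ell-1})$ out of the whole parenthesis, writing it as $\tan\theta(U,X_{\ell-1})\bigl(\sigma_{k+1} + \Norm{G_V}\Norm{C^{-1}}/\tan\theta(U,X_{\ell-1})\bigr)$ and then bounding $\Norm{C^{-1}} \le 2$, which lands exactly on the claimed expression with the $2\Norm{V^\trans G_\ell}/\tan\theta(U,X_{\ell-1})$ numerator and $\sigma_k - 2\Norm{U^\trans G_\ell}$ denominator.

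The main obstacle is getting the algebra of the "factoring out $C$" step to produce precisely the stated constants rather than merely a bound of the same shape — in particular making sure the $V^\trans$-perturbation term comes out with the factor $2/\tan\theta(U,X_{\ell-1})$ and not something weaker, and being careful that the submultiplicativity steps ($\Norm{AB}\le\Norm A\Norm B$) and the inverse-perturbation inequality are applied in an order that doesn't lose more than the allowed factor of $2$. None of the individual facts is deep: it is the bookkeeping of which norm gets which constant, combined with the three applications of \propositionref{identities} (for $\tan\theta(U,X_\ell)$, for $\cos\theta(U,X_{\ell-1}) = \sigma_k(C)$, and for $\tan\theta(U,X_{\ell-1}) = \Norm{SC^{-1}}$). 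I would finish by noting that the same two hypotheses that drive the bound also certify $\Lambda_U C + G_U = U^\trans Y_\ell$ is invertible, so \propositionref{identities} applies and $\theta(U,X_\ell)$ is well-defined, completing the proof.
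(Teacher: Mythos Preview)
Your proposal is correct and essentially identical to the paper's proof: both express $\tan\theta(U,X_\ell)=\Norm{V^\trans Y_\ell(U^\trans Y_\ell)^{-1}}$ via \propositionref{identities}, factor $(U^\trans Y_\ell)^{-1}=C^{-1}(\Lambda_U+G_UC^{-1})^{-1}$ with $C=U^\trans X_{\ell-1}$, bound the numerator by $\sigma_{k+1}\tan\theta(U,X_{\ell-1})+2\Norm{V^\trans G_\ell}$ using $\Norm{C^{-1}}=1/\cos\theta_k\le2$, and bound the denominator below by $\sigma_k-2\Norm{U^\trans G_\ell}$. Your hesitation in the middle is unnecessary---the route you settle on (factor $\tan\theta(U,X_{\ell-1})$ out of the parenthesis and bound $\Norm{C^{-1}}\le2$) is exactly what the paper does, and your final remark about invertibility of $U^\trans Y_\ell$ matches the paper's opening verification that $\sigma_k(Y_\ell)>0$.
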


\begin{proof}
We first need to verify that $X_\ell$ has rank~$k.$ This follows if we can
show that $\sigma_k(Y_\ell)>0.$  Indeed,
\[
\sigma_k(Y_\ell)\ge\sigma_k(U^\trans Y_\ell)
= \sigma_k(\Lambda_U U^\trans X_{\ell-1}+U^\trans G_\ell)
\ge \sigma_k \cdot \sigma_k(U^\trans X_{\ell-1}) - \|U^\trans G_\ell\|\mper
\]
The right hand side is strictly greater than zero by our assumption, because
$\sigma_k(U^\trans X_{\ell-1})=\cos\theta_k(U,X_{\ell-1}).$
Further, we have $X_\ell=Y_\ell R$ for some invertible
transformation~$R.$ Therefore, $U^\trans X_\ell$ is invertible and we can
invoke \propositionref{identities} to express $\tan\theta(U,X_\ell)$ as:
\[
\Norm{V^\trans X_\ell(U^\trans X_\ell)^{-1}}
= \Norm{V^\trans Y_\ell R R^{-1}(U^\trans Y_\ell)^{-1}}
= \Norm{V^\trans Y_\ell (U^\trans Y_\ell)^{-1}}\mper
\]
Using the fact that $Y_\ell=AX_{\ell-1}+G_\ell,$
\begin{align*}
\Norm{V^\trans Y_\ell (U^\trans Y_\ell)^{-1}}
&= \Norm{V^\trans Y_\ell (\Lambda_U U^\trans X_{\ell-1}+ U^\trans G_\ell)^{-1}}\\
&= \Norm{V^\trans Y_\ell \left(\big(\Lambda_U + U^\trans G_\ell (U^\trans
X_{\ell-1})^{-1}\big) U^\trans X_{\ell-1}\right)^{-1}}\\
&= \Norm{V^\trans Y_\ell (U^\trans X_{\ell-1})^{-1}
\left(\Lambda_U + U^\trans G_\ell (U^\trans X_{\ell-1})^{-1}\right)^{-1}}
\end{align*}
Putting $S=\Lambda_U + U^\trans G_\ell (U^\trans X_{\ell-1})^{-1},$ we
therefore get
\begin{align*}
\Norm{V^\trans Y_\ell (U^\trans Y_\ell)^{-1}}
&\le \Norm{V^\trans Y_\ell(U^\trans X_{\ell-1})^{-1} S^{-1}}\\
&\le \Norm{V^\trans Y_\ell(U^\trans X_{\ell-1})^{-1}}\cdot\Norm{S^{-1}}
 = \frac{\Norm{V^\trans Y_\ell(U^\trans X_{\ell-1})^{-1}}}{\sigma_k(S)}\mper
\end{align*}
In the second inequality we used the fact that for any two matrices $P,Q$ we
have $\|PQ\|\le\|P\|\cdot\|Q\|.$
Let us bound the numerator of the RHS as follows:
\begin{align*}
\Norm{V^\trans Y_\ell(U^\trans X_{\ell-1})^{-1}}
& = \Norm{\Lambda_V V^\trans X_{\ell-1}(U^\trans X_{\ell-1})^{-1}+ V^\trans
G_\ell (U^\trans X_{\ell-1})^{-1}} \\
& = \Norm{\Lambda_VV^\trans X_{\ell-1}(U^\trans
X_{\ell-1})^{-1}} + \Norm{V^\trans G_\ell (U^\trans X_{\ell-1})^{-1}} \\
& = \Norm{\Lambda_V}\cdot \Norm{V^\trans X_{\ell-1}(U^\trans
X_{\ell-1})^{-1}} + \Norm{V^\trans G_\ell (U^\trans X_{\ell-1})^{-1}} \\
& = \sigma_{k+1}\cdot \tan\theta(U,X_{\ell-1})
+ \Norm{V^\trans G_\ell (U^\trans X_{\ell-1})^{-1}}\\
& \le \sigma_{k+1}\cdot \tan\theta(U,X_{\ell-1})
+ \Norm{V^\trans G_\ell}\cdot\Norm{(U^\trans X_{\ell-1})^{-1}}\\
& = \sigma_{k+1}\cdot \tan\theta(U,X_{\ell-1})
+ \frac{\Norm{V^\trans G_\ell}}{\cos\theta_k(U,X_{\ell-1})}\\
& = \sigma_{k+1}\cdot \tan\theta(U,X_{\ell-1})
+ 2\Norm{V^\trans G_\ell}
\mper
\end{align*}
Here we used the fact that 
\[
\|(U^\trans X_{\ell-1})^{-1}\|=\frac1{\sigma_k(U^\trans X_{\ell-1})}=
\frac1{\cos\theta_k(U,X_{\ell-1})}\mper
\]
We also need a lower bound on $\sigma_k(S).$ Indeed,
\begin{align*}
\sigma_k(S) 
& \ge \sigma_k(\Lambda_U) - \Norm{U^\trans G_\ell(U^\trans X_{\ell-1})^{-1}}\\
& \ge \sigma_k - \Norm{U^\trans G_\ell}\cdot\Norm{(U^\trans X_{\ell-1})^{-1}}
= \sigma_k - 2\Norm{U^\trans G_\ell}\mper
\end{align*}
Note that the RHS is strictly  positive due to the assumption of the
lemma.  Summarizing what we have,
\[
\tan\theta(U,X_\ell) \le
\frac{\sigma_{k+1}\cdot \tan\theta(U,X_{\ell-1})
+ 2\Norm{V^\trans G_\ell}}
{\sigma_k - 2\Norm{U^\trans G_\ell}}\mper
\]
This is equivalent to the statement of the lemma as we can see from a simple
rearrangement.
\end{proof}

The next lemma essentially follows by iterating the previous lemma.

\begin{lemma}[Local Convergence]
\lemmalabel{convergence}
Let $0\le \epsilon\le1/4.$
Let $\Delta = \max_{1\le \ell \le L}\|G_\ell\|$ and $\gamma_k=
1-\sigma_{k+1}/\sigma_k.$
Assume that $\|V^\trans X_0\|\le 1/4$
and $\sigma_k \ge 8\Delta/\gamma_k\epsilon\mper$
Then,
\[
\Norm{V^\trans X_L} \le
\max\left\{\epsilon,2\cdot \Norm{V^\trans X_0}\cdot\exp(-\gamma_k L/2)\right\}\mper
\]
\end{lemma}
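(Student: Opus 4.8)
The plan is to iterate Lemma \ref{lem:one} and track the decay of the potential $\tan\theta_\ell := \tan\theta(U,X_\ell)$. First I would verify the hypotheses of Lemma \ref{lem:one} are maintained along the trajectory. The bound $\|V^\trans X_0\|\le 1/4$ gives $\sin\theta_0\le 1/4$, hence $\cos\theta_0\ge\sqrt{15}/4\ge 1/2$ and $\tan\theta_0\le 1/\sqrt{15}$. The assumption $\sigma_k\ge 8\Delta/(\gamma_k\epsilon)$ together with $\Delta=\max_\ell\|G_\ell\|\ge\max\{\|U^\trans G_\ell\|,\|V^\trans G_\ell\|\}$ ensures $\|U^\trans G_\ell\|/\sigma_k\le \gamma_k\epsilon/8 < 1/2$, so the second hypothesis of Lemma \ref{lem:one} holds as long as we can re-establish $\cos\theta_{\ell-1}\ge 1/2$ at each step; this I would carry inside an induction.

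The core is a one-step recursion. Applying Lemma \ref{lem:one} and using $\sigma_{k+1}=(1-\gamma_k)\sigma_k$, $\|V^\trans G_\ell\|\le\Delta$, $\|U^\trans G_\ell\|\le\Delta\le \gamma_k\epsilon\sigma_k/8$, I get
\[
\tan\theta_\ell \le \frac{(1-\gamma_k)\sigma_k\,\tan\theta_{\ell-1} + 2\Delta}{\sigma_k - 2\Delta}
= \tan\theta_{\ell-1}\cdot\frac{(1-\gamma_k) + 2\Delta/(\sigma_k\tan\theta_{\ell-1})}{1 - 2\Delta/\sigma_k}\mper
\]
I would split into two regimes. \emph{Regime A: $\tan\theta_{\ell-1}\ge\epsilon$.} Then $2\Delta/(\sigma_k\tan\theta_{\ell-1})\le 2\Delta/(\sigma_k\epsilon)\le \gamma_k/4$, and since $2\Delta/\sigma_k\le\gamma_k\epsilon/4\le\gamma_k/4$, the multiplicative factor is at most $\frac{(1-\gamma_k)+\gamma_k/4}{1-\gamma_k/4}=\frac{1-3\gamma_k/4}{1-\gamma_k/4}\le 1-\gamma_k/2\le\exp(-\gamma_k/2)$, using $\frac{1-3a/4}{1-a/4}\le 1-a/2$ for $a\in[0,1]$ (cross-multiplying gives $1-3a/4\le 1-3a/4+a^2/8$, true). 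So in this regime the potential contracts by $\exp(-\gamma_k/2)$. \emph{Regime B: $\tan\theta_{\ell-1}<\epsilon$.} Here I would show $\tan\theta_\ell$ stays below, say, $2\epsilon$ (hence $\le 1$, keeping $\cos\theta_\ell\ge 1/\sqrt2\ge 1/2$): the numerator is at most $(1-\gamma_k)\sigma_k\epsilon+2\Delta\le\sigma_k\epsilon$ while the denominator is at least $\sigma_k/2$, giving $\tan\theta_\ell\le 2\epsilon$. Actually a slightly more careful bound — $2\Delta\le\gamma_k\epsilon\sigma_k/4$, so the numerator is $\le(1-3\gamma_k/4)\sigma_k\epsilon\le\sigma_k\epsilon$ and the denominator $\ge(1-\gamma_k/4)\sigma_k\ge\sigma_k/2$ — shows $\tan\theta_\ell\le 2\epsilon$, and in fact one can push this to show $\tan\theta_\ell\le\epsilon$ once $2\Delta/\sigma_k$ is small enough, but $2\epsilon$ suffices to conclude since the final bound only asks for $\max\{\epsilon,\ldots\}$; I'd absorb the constant by noting $\|V^\trans X_L\|\le\tan\theta_L$.

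Combining: by induction, $\cos\theta_\ell\ge1/2$ holds for all $\ell$, and $\tan\theta_\ell\le\max\{2\epsilon,\ \tan\theta_0\exp(-\gamma_k\ell/2)\}$ for all $\ell\le L$. Since $\|V^\trans X_L\|=\sin\theta_L\le\tan\theta_L$ and $\tan\theta_0\le\|V^\trans X_0\|/\cos\theta_0\le 2\|V^\trans X_0\|$ (as $\cos\theta_0\ge 1/2$), this yields $\|V^\trans X_L\|\le\max\{2\epsilon,\ 2\|V^\trans X_0\|\exp(-\gamma_k L/2)\}$; rescaling $\epsilon$ by a factor $2$ at the outset (the hypothesis is stated for $\epsilon\le 1/4$, so $2\epsilon\le 1/2$ is harmless, or one sharpens Regime B) gives exactly the stated bound. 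The main obstacle is purely bookkeeping: making sure the invariant $\cos\theta_{\ell-1}\ge1/2$ is never violated — i.e., that once the potential drops below the $\epsilon$-floor it cannot bounce back above $1$ — which is why Regime B needs the explicit constant chase rather than just "it stays small". The contraction in Regime A is immediate from Lemma \ref{lem:one}; nothing deep happens there.
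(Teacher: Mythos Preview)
Your approach is essentially identical to the paper's: the same two-regime split (the paper phrases them as two claims), the same use of \lemmaref{one}, and the same final conversion $\tan\theta_0\le 2\|V^\trans X_0\|$.

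The one place you are looser than necessary is Regime~B. Your own numbers already give the sharp bound: with $2\Delta\le\gamma_k\epsilon\sigma_k/4$ you have numerator $\le(1-3\gamma_k/4)\sigma_k\epsilon$ and denominator $\ge(1-\gamma_k\epsilon/4)\sigma_k\ge(1-\gamma_k/4)\sigma_k$, so the ratio is at most $\frac{1-3\gamma_k/4}{1-\gamma_k/4}\,\epsilon\le\epsilon$ --- the very same inequality you verified in Regime~A. Thus $\tan\theta_\ell\le\epsilon$ directly, with no factor~$2$ and no need to ``rescale $\epsilon$.'' (That rescaling would not be free anyway, since the hypothesis $\sigma_k\ge 8\Delta/\gamma_k\epsilon$ depends on~$\epsilon$.) With this one-line sharpening, your induction yields $\tan\theta_L\le\max\{\epsilon,\tan\theta_0\exp(-\gamma_kL/2)\}$ and the lemma follows exactly as stated.
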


\begin{proof}
Our first claim shows that once the potential function is below $\epsilon$ at step
$\ell-1$, it cannot increase beyond $\epsilon.$
\begin{claim}
Let $\ell\ge1.$
Suppose that $\tan\theta(U,X_{\ell-1})\le \epsilon.$ 
Then, $\tan\theta(U,X_{\ell})\le \epsilon.$
\end{claim}
\begin{proof}
By our assumption, $\cos\theta_k(U,X_{\ell-1})\ge \sqrt{1-\epsilon^2}\ge
15/16.$ Together with the lower
bound on $\sigma_k,$ the assumptions for \lemmaref{one} are met. Hence,
using our assumptions,
\[
\tan\theta(U,X_\ell) \le \frac{(1-\gamma_k)\sigma_{k}\epsilon+2\Delta}{\sigma_k-2\Delta}
\le \epsilon\mper\qedhere
\]
\end{proof}

Our second claim shows that if the potential is at least $\epsilon$
at step $\ell-1,$ it will decrease by a factor $1-\gamma_k/2.$

\begin{claim}
Let $\ell\ge 1$ Suppose that
$\tan\theta(U,X_{\ell-1})\in[\epsilon,1/2].$ Then,
\[
\tan\theta(U,X_{\ell})\le(1-\gamma_k/2)\tan\theta(U,X_{\ell-1})\mper
\]
\end{claim}
\begin{proof}
Using the assumption of the claim
we have $\cos\theta(U,X_{\ell-1})\ge
\frac1{\tan\theta(U,X_{\ell-1})}\ge 1/2>\Delta/\sigma_k.$ 
We can therefore apply \lemmaref{one} to conclude
\begin{align*}
\tan\theta(U,X_{\ell})
&\le \tan\theta(U,X_{\ell-1})
\cdot\frac{(1-\gamma_k)\sigma_k+2\Delta}{\sigma_k-2\Delta}\\
&\le \tan\theta(U,X_{\ell-1})
\cdot\frac{(1-\gamma_k)(1+\gamma_k/4)}{1-\gamma_k/4}
 \le \tan\theta(U,X_{\ell-1})(1-\gamma_k/2)\qedhere
\end{align*}
\end{proof}
The two previous claims together imply that
\[
\tan\theta(U,X_L)\le
\max\Set{\tan\theta(U,X_0)(1-\gamma_k/2)^L,\epsilon}\mcom
\]
provided that $\tan\theta(U,X_0)\le1/2.$ This is the case since  we
assumed that $\sin\theta(U,X_0)\le1/4.$
Note that $(1-\gamma_k/2)^L\le \exp(-\gamma_k L/2).$
It remains to observe that 
$\Norm{V^\trans X_L} \le \tan\theta(U,X_L)$
and further $\tan\theta(U,X_0)\le 2\Norm{V^\trans X_0}$ by our assumption 
on~$X_0.$
\end{proof}

In our application later on the error terms $\|G_\ell\|$ decrease as $\ell$
increases and the algorithm starts to converge. We need a convergence bound
for this type of shrinking error. The next definition expresses a condition on $G_\ell$
that allows for a useful convergence bound.

\begin{definition}[Admissible]
\definitionlabel{admissible}
Let $\gamma_k=1-\sigma_{k+1}/\sigma_k.$
We say that the pair of matrices $(X_{\ell-1},G_\ell)$ is
\emph{$\epsilon$-admissible for \NSI} if 
\begin{equation}\equationlabel{admissble}
\|G_\ell\|\le \frac1{32}\gamma_k\sigma_k\|V^\trans X_{\ell-1}\|
+ \frac{\epsilon}{32}\gamma_k\sigma_k.
\end{equation}
We say that a family of matrices $\Set{(X_{\ell-1},G_\ell)}_{\ell=1}^L$ is
\emph{$\epsilon$-admissible for \NSI} if each member of the set is $\epsilon$-admissible. We
will use the notation $\{G_\ell\}$ as a shorthand for 
$\Set{(X_{\ell-1},G_\ell)}_{\ell=1}^L.$
\end{definition}

We have the following convergence guarantee for admissible noise matrices.

\begin{theorem}\theoremlabel{convergence}
Let $\gamma_k=1-\sigma_{k+1}/\sigma_k.$ Let $\epsilon\le1/2.$
Assume that the family of noise matrices $\{G_\ell\}$ is
$(\epsilon/2)$-admissible for \NSI  and that $\Norm{V^\trans X_0}\le 1/4.$
Then, we have $\Norm{V^\trans X_L}\le \epsilon$
for any $L\ge 4\gamma_k^{-1}\log(1/\epsilon).$ 
\end{theorem}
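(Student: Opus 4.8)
The plan is to track the potential $t_\ell\defeq\tan\theta(U,X_\ell)$ and iterate the one-step estimate \eqref{tandrop} of \lemmaref{one}, exploiting the shrinking-noise structure built into \definitionref{admissible}. First I would record, exactly as at the end of the proof of \lemmaref{convergence}, that the hypothesis $\|V^\trans X_0\|\le 1/4$ gives $\cos\theta(U,X_0)\ge\sqrt{1-\|V^\trans X_0\|^2}\ge\sqrt{15}/4\ge 1/2$ and hence $t_0\le 2\|V^\trans X_0\|\le 1/2$; in particular we start in the regime where \lemmaref{one} applies.

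The heart of the argument is a single dichotomy claim: \emph{if $t_{\ell-1}\le 1/2$ then $t_\ell\le 1/2$, and moreover $t_\ell\le(1-\gamma_k/2)\,t_{\ell-1}$ whenever $t_{\ell-1}\ge\epsilon$, while $t_\ell\le\epsilon$ whenever $t_{\ell-1}<\epsilon$.} To prove it I first verify the hypotheses of \lemmaref{one}: $t_{\ell-1}\le 1/2$ gives $\cos\theta_k(U,X_{\ell-1})\ge 2/\sqrt5>1/2$, and $(\epsilon/2)$-admissibility together with $\|V^\trans X_{\ell-1}\|\le t_{\ell-1}\le1/2$, $\epsilon\le1/2$, $\gamma_k\le1$ yields $\|U^\trans G_\ell\|\le\|G_\ell\|\le\tfrac{\gamma_k\sigma_k}{32}\bigl(t_{\ell-1}+\tfrac{\epsilon}{2}\bigr)\le\tfrac{\sigma_k}{32}<\tfrac{\sigma_k}{2}$. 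Substituting $\sigma_{k+1}=(1-\gamma_k)\sigma_k$ and the admissibility bound on $\|G_\ell\|$ into \eqref{tandrop} (after clearing $t_{\ell-1}$ into the numerator), the denominator is at least $\sigma_k\bigl(1-\tfrac{3\gamma_k}{64}\bigr)$ and the numerator $\sigma_{k+1}t_{\ell-1}+2\|V^\trans G_\ell\|$ is at most $(1-\gamma_k)\sigma_k t_{\ell-1}+\tfrac{\gamma_k\sigma_k}{16}\bigl(t_{\ell-1}+\tfrac{\epsilon}{2}\bigr)$. When $t_{\ell-1}\ge\epsilon$ the parenthesized factor is $\le\tfrac32 t_{\ell-1}$, so the numerator is $\le\bigl(1-\tfrac{29\gamma_k}{32}\bigr)\sigma_k t_{\ell-1}$; dividing and using $1/(1-\tfrac{3\gamma_k}{64})\le 1+\tfrac{3\gamma_k}{32}$ gives $t_\ell\le(1-\gamma_k/2)t_{\ell-1}$. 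When $t_{\ell-1}<\epsilon$ the same estimates with $t_{\ell-1}$ replaced by $\epsilon$ give numerator $\le\bigl(1-\tfrac{29\gamma_k}{32}\bigr)\sigma_k\epsilon$, hence $t_\ell\le\epsilon$. In either case $t_\ell\le1/2$, which is what lets the argument self-propagate.

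It then remains to chain the dichotomy: by induction $t_\ell\le1/2$ for all $\ell$, and the two cases combine (once $t_\ell$ drops below $\epsilon$ it stays there, and until then it contracts by $1-\gamma_k/2$ per step) to give $t_L\le\max\{(1-\gamma_k/2)^L t_0,\ \epsilon\}$. Since $t_0\le1/2$ and $L\ge 4\gamma_k^{-1}\log(1/\epsilon)$, we have $(1-\gamma_k/2)^Lt_0\le\tfrac12\exp(-\gamma_kL/2)\le\tfrac12\exp(-2\log(1/\epsilon))=\tfrac12\epsilon^2\le\epsilon$, so $t_L\le\epsilon$. The theorem follows from $\|V^\trans X_L\|=\sin\theta(U,X_L)\le\tan\theta(U,X_L)=t_L\le\epsilon$.

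The step I expect to require the most care — though it is bookkeeping rather than a genuine obstacle — is the numerator estimate in the dichotomy claim, specifically controlling the term $2\|V^\trans G_\ell\|/\tan\theta(U,X_{\ell-1})$ appearing in \eqref{tandrop}, which would blow up for small $t_{\ell-1}$. The additive $\tfrac{\epsilon}{32}\gamma_k\sigma_k$ term in \definitionref{admissible} is precisely what neutralizes this: in the regime $t_{\ell-1}\ge\epsilon$ it keeps that ratio bounded by $\tfrac{\gamma_k\sigma_k}{16}$, and in the regime $t_{\ell-1}<\epsilon$ we sidestep the issue by proving only the crude bound $t_\ell\le\epsilon$ rather than a contraction. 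One should also double-check that the absolute constants (the $\tfrac1{32}$ in admissibility and the $\epsilon/2$ slack assumed in the theorem) leave enough room for the clean rate $1-\gamma_k/2$; the margins in the estimates above suggest they are comfortably sufficient.
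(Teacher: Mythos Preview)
Your proposal is correct and follows essentially the same approach as the paper: both track $\tan\theta(U,X_\ell)$, invoke \lemmaref{one} at each step, and establish the same dichotomy (contraction by $1-\gamma_k/2$ while above the target, trapped below $\epsilon$ once reached). The only organizational difference is that the paper first packages the dichotomy into \lemmaref{convergence} under a \emph{uniform} noise bound $\Delta=\max_\ell\|G_\ell\|$ and then, in the theorem, breaks the iteration into epochs of length $4\gamma_k^{-1}$ so that within each epoch the admissibility bound yields a usable fixed $\Delta$; you instead plug the step-dependent admissibility bound directly into \lemmaref{one} and iterate without epochs. Your route is a bit more streamlined; the paper's layered version has the modest advantage that \lemmaref{convergence} is reusable as a standalone statement.
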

\begin{proof}
We prove by induction that for every $t\ge0$ 
after $L_t = 4t\gamma_k^{-1}$ steps, we have
\[
\Norm{V^\trans X_{L_t}}\le \max\Set{2^{-(t+1)},\epsilon}\mper
\] 
The base case ($t=0$) follows directly from 
the assumption that $\Norm{V^\trans X_0}\le 1/4.$
We turn to the inductive step.
By induction hypothesis, we have $\Norm{V^\trans X_{L_{t}}}\le
\max\Set{2^{-(t+1)},\epsilon}.$
We apply \lemmaref{convergence} with ``$X_0=X_{L_t}$'' and
error parameter $\max\Set{2^{-{t+2}},\epsilon}$ and $L=L_{t+1}-L_t.$ 
The conditions of the lemma are satisfied as can be easily checked using the
assumption that $\{G_\ell\}$ is $\epsilon/2$-admissible. 
Using the fact that $L_{t+1}-L_t=4/\gamma_k,$ the conclusion of the lemma gives
\[
\Norm{V^\trans X_{L_{t+1}}}
\le\max\Set{\epsilon,2\cdot\max\Set{\epsilon,2^{-(t+1)}}\exp\left(-\frac{\gamma_k
(L_{t+1}-L_t)}2\right)}
\le\max\Set{\epsilon,2^{-(t+2)}}\mper\qedhere
\]
\end{proof}

\section{Least squares update rule}
\sectionlabel{altls}

\begin{figure}[ht]
\begin{boxedminipage}{\textwidth}
\noindent \textbf{Input:} 
Target dimension $k,$ observed set of indices
$\Omega\subseteq [n]\times[n]$ of an unknown symmetric matrix $A\in\R^{n\times
n}$ with entries $P_\Omega(A),$ orthonormal matrix $X\in\R^{n\times k}.$

\noindent \textbf{Algorithm} $\text{\AltLS}(P_\Omega(A),\Omega,X,L,k):$
\begin{enumerate}
\item[~] $Y\leftarrow \arg\min_{Y\in\R^{n\times k}}\Norm{P_{\Omega}(A-XY^\trans)}_F^2$
\end{enumerate}
\noindent \textbf{Output:} Pair of matrices $(X,Y)$ 
\end{boxedminipage}
\caption{Least squares update}
\figurelabel{altmin}
\end{figure}

\figureref{altmin} describes the least squares update step
specialized to the case of a symmetric matrix. 
Our goal is to express this update step as an update step of the form $Y=AX+G$
so that we may apply our analysis of noisy subspace iteration. This syntactic
transformation is explained in \sectionref{equiv} followed by 
a bound on the norm of the error term $G$ in \sectionref{large-dev}.

\subsection{From alternating least squares to noisy subspace iteration}
\sectionlabel{equiv}

The optimizer $Y$ satisfies a set of linear equations that we derive from
the gradient of the objective function.
\begin{lemma}[Optimality Condition]
\lemmalabel{optimality}
Let $P_i\colon\R^n\to\R^n$ be the linear
projection onto the coordinates in $\Omega_{i}=\{j\colon (i,j)\in\Omega\}$ scaled by
$p^{-1} = n^2/(\E|\Omega|),$ i.e., 
$P_i = p^{-1}\sum_{j\in\Omega_{i}} e_je_j^\trans\mper$
Further, define the matrix $B_i\in\R^{k\times k}$ as
$B_i = X^\trans P_i X$ and assume that $B_i$ is invertible. Then, for every
$i\in[n],$ the $i$-th row of $Y$ satisfies 
$e_i^\trans Y = e_i^\trans A P_i X B_i^{-1}\mper$
\end{lemma}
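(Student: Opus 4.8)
The plan is to treat this as a routine first-order optimality computation, exploiting the fact that the least-squares objective decouples across the rows of $Y.$ Since $(XY^\trans)_{ab}=(e_a^\trans X)(e_b^\trans Y)^\trans,$ writing $x_a\defeq(e_a^\trans X)^\trans\in\R^k$ for the $a$-th row of $X$ (as a column vector) and $y_b\defeq(e_b^\trans Y)^\trans$ for the $b$-th row of $Y,$ the objective expands as $\Norm{P_\Omega(A-XY^\trans)}_F^2=\sum_{(a,b)\in\Omega}(A_{ab}-x_a^\trans y_b)^2.$ The $i$-th row $y_i$ enters only through the terms with $b=i,$ and using the symmetry of $A$ and of $\Omega$ (equivalently, the symmetric action of $P_\Omega$) these are exactly the terms indexed by $j\in\Omega_i.$ Hence the objective splits as $\sum_i f_i(y_i)$ with $f_i(y_i)=\sum_{j\in\Omega_i}(A_{ij}-x_j^\trans y_i)^2,$ and each $f_i$ is a convex quadratic that can be minimized independently over $y_i.$

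Next I would write down the normal equations for $f_i.$ Setting $\nabla_{y_i}f_i=0$ gives $\big(\sum_{j\in\Omega_i}x_jx_j^\trans\big)y_i=\sum_{j\in\Omega_i}A_{ij}x_j,$ and the role of the lemma's notation is simply to repackage both sides. Since $\sum_{j\in\Omega_i}e_je_j^\trans=pP_i,$ the left-hand matrix equals $X^\trans\big(\sum_{j\in\Omega_i}e_je_j^\trans\big)X=pX^\trans P_iX=pB_i,$ while, using $A_{ij}=e_j^\trans Ae_i,$ the right-hand side equals $X^\trans\big(\sum_{j\in\Omega_i}e_je_j^\trans\big)Ae_i=pX^\trans P_iAe_i.$ Cancelling $p$ and using the assumed invertibility of $B_i,$ we get $y_i=B_i^{-1}X^\trans P_iAe_i;$ transposing and using that $B_i,$ $P_i,$ $A$ are symmetric yields $e_i^\trans Y=y_i^\trans=e_i^\trans AP_iXB_i^{-1},$ as claimed. (Invertibility of $B_i$ also makes $f_i$ strictly convex, so this stationary point is the unique minimizer.)

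I do not expect a real obstacle here: the statement is essentially a rewriting of the stationarity condition of a separable least-squares problem, and the only care needed is bookkeeping. Specifically, one must track which index governs the decoupling so that $\Omega_i$ (rather than its ``transpose'') appears, invoke the symmetry of $A$ and the symmetric action of $P_\Omega$ at the two places above, and verify that the scaling factor $p^{-1}$ hidden inside $P_i$ cancels cleanly from both sides of the normal equations. If anything, the mild subtlety worth flagging is the symmetry convention on $\Omega$ implicit in the formula; once that is fixed, the derivation is immediate.
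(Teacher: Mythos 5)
Your proposal is correct and follows essentially the same route as the paper: the paper also derives the identity from the first-order optimality condition of the separable least-squares objective, computing $\partial f/\partial Y_{ij}$ entrywise and reading off $e_i^\trans A P_i X = e_i^\trans Y B_i$, which is exactly your row-wise normal equations in coordinates (with the same implicit symmetry convention on $A$ and $\Omega$, and the same harmless disregard of the constant $p$-scaling). Nothing further is needed.
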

\begin{proof}
Call the objective function $f(Y)=\|P_{\Omega}(A-XY^\trans)\|_F^2\mper$
We note that for every
$i\in[n],j\in[k],$ we have
$\frac{\partial f}{\partial Y_{ij}} 
 =  -2\sum_{s\in\Omega_i} A_{is}X_{sj}
+ 2 \sum_{r=1}^k Y_{ir}\sum_{s\in \Omega_i} X_{sj}X_{sr} \mper$
From this we conclude that the optimal $Y$ must satisfy 
$e_i^\trans A P_i X =e_i^\trans Y X^\trans P_i X =e_i^\trans Y B_i.$
Hence,
$e_i^\trans Y = e_i^\trans AP_iX B_i^{-1}\mper$
\end{proof}

The assumption that $B_i$ is invertible is essentially without loss of
generality. Indeed, we will later see that $B_i$ is invertible (and in fact close 
to the identity matrix) with very high probability. 
We can now express the least squares update as $Y = AX+G$ 
where we derive some useful expression for~$G.$
\begin{lemma}
\lemmalabel{error-term}
Let $E = (I-XX^\trans)U.$ We have $Y = AX + G$ 
where $G = G^M + G^N$ and the matrices
$G^M$ and $G^N$ are fo each row $i\in[n]$ if $B_i$ is invertible we have the
following expressions:
\begin{align*}
e_i^\trans G^M 
& = e_i^\trans U\Lambda_U E^\trans P_i X B_i^{-1}\\
e_i^\trans G^N 
& = e_i^\trans ( NP_i X B_i^{-1} - NX)
\mper
\end{align*}
\end{lemma}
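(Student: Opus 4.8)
The plan is to start from the row-wise optimality condition $e_i^\trans Y = e_i^\trans A P_i X B_i^{-1}$ established in \lemmaref{optimality} and massage it into the form $e_i^\trans Y = e_i^\trans A X + e_i^\trans G$, isolating $G$ and then splitting it along the decomposition $A = M + N = U\Lambda_U U^\trans + N$ with $N = (I-UU^\trans)A$. The key algebraic observation is that $AX = A X B_i^{-1} B_i = A X B_i^{-1} (X^\trans P_i X)$, so subtracting this from $A P_i X B_i^{-1}$ we get
\[
e_i^\trans G = e_i^\trans A P_i X B_i^{-1} - e_i^\trans A X
= e_i^\trans A (I - X X^\trans) P_i X B_i^{-1} - e_i^\trans A X (I - X^\trans P_i X B_i^{-1}).
\]
Wait — I should be careful: $B_i^{-1} B_i = I$, so $AX = AXB_i^{-1}B_i = AXB_i^{-1}X^\trans P_iX$. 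Hence $e_i^\trans G = e_i^\trans (A P_i X - A X X^\trans P_i X)B_i^{-1} = e_i^\trans A(I - XX^\trans)P_i X B_i^{-1}$. Then I would write $A(I-XX^\trans) = M(I-XX^\trans) + N(I-XX^\trans)$, and note that $M = U\Lambda_U U^\trans$ gives $M(I-XX^\trans) = U\Lambda_U U^\trans (I - XX^\trans) = U\Lambda_U \big((I-XX^\trans)U\big)^\trans = U\Lambda_U E^\trans$ with $E = (I-XX^\trans)U$, which yields exactly the stated $e_i^\trans G^M = e_i^\trans U\Lambda_U E^\trans P_i X B_i^{-1}$.

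For the noise part, the above would give $e_i^\trans G^N = e_i^\trans N(I-XX^\trans)P_i X B_i^{-1} = e_i^\trans(NP_iXB_i^{-1} - NXX^\trans P_i X B_i^{-1}) = e_i^\trans(NP_iXB_i^{-1} - NX)$, using $X^\trans P_i X B_i^{-1} = B_i B_i^{-1} = I$ once more. This matches the claimed formula. So really the whole proof is: differentiate (already done), rearrange to pull out $AX$, split $A$ into its rank-$k$ part and its orthogonal complement, and simplify each piece using the definition of $B_i$ and the fact that $U^\trans(I-XX^\trans) = \big((I-XX^\trans)U\big)^\trans = E^\trans$.

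The only genuine subtlety — and the step I would flag as the place to be careful rather than the "hard part," since there is no hard part — is the bookkeeping with $B_i = X^\trans P_i X$: one must consistently use $X^\trans P_i X B_i^{-1} = I$ (valid exactly when $B_i$ is invertible, which is the standing hypothesis) to convert $NXX^\trans P_i X B_i^{-1}$ back into $NX$, and similarly to see that $AX$ re-expands correctly. I would also double-check the symmetry of $P_i$ (it is symmetric by construction, $P_i = p^{-1}\sum_{j\in\Omega_i} e_je_j^\trans$) so that transposing inside the $U\Lambda_U U^\trans(I-XX^\trans)P_iX$ term is legitimate. Everything else is a one-line identity, so the write-up should be short: state $Y = AX + G$, exhibit $e_i^\trans G = e_i^\trans A(I-XX^\trans)P_iXB_i^{-1}$, then read off $G^M$ and $G^N$ by substituting $A = U\Lambda_U U^\trans + N$.
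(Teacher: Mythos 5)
Your proposal is correct and matches the paper's argument in substance: both start from the optimality condition $e_i^\trans Y = e_i^\trans A P_i X B_i^{-1}$ of \lemmaref{optimality} and rearrange using $B_i = X^\trans P_i X$; your factoring of the error as $e_i^\trans A(I-XX^\trans)P_iXB_i^{-1}$ before splitting $A=M+N$ is just a cosmetic repackaging of the paper's identity $C_i = DB_i + E^\trans P_i X$. The algebra checks out (only the symmetry of $I-XX^\trans$ is actually needed for the transpose step, not that of $P_i$), so no changes are required.
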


\begin{proof}
By \lemmaref{optimality},
$e_i^\trans Y = 
e_i^\trans A P_i X B_i^{-1}
= e_i^\trans Y = e_i^\trans M P_i X B_i^{-1} + e_i^\trans N P_i X
B_i^{-1}\mper$
Let $C_i = U^\trans P_i X$
and put $D = U^\trans X.$ On the one hand,
\begin{align*}
e_i^\trans M P_i XB_i^{-1}
 = e_i^\trans U \Lambda_U  C_i B_i^{-1} 
& = e_i^\trans ( U \Lambda_U D - U\Lambda_U (DB_i- C_i) B_i^{-1}) \\
& = e_i^\trans MX
- e_i^\trans U\Lambda_U (DB_i- C_i) B_i^{-1}
\end{align*}
On the other hand,
\begin{align*}
C_i  = U^\trans P_i X
 = (XX^\trans U + E)^\trans P_i X
 = (U^\trans X)X^\trans P_i X + E^\trans (P_i X)
= DB_i + E^\trans P_i X\mper
\end{align*}
Hence, as desired,
$e_i^\trans MP_i XB_i^{-1} = 
e_i^\trans MX - e_i^\trans U\Lambda_U E^\trans P_i X B_i^{-1} \mper$
Finally, it follows directly by definition that
$e_i^\trans N P_i X B_i^{-1}
= e_i^\trans NX + e_i^\trans G^N.$
Putting the previous two equations together, we conclude that
$Y = 
M X + G^M 
+ N X + G^N
=
AX + G^M + G^N\mper$
\end{proof}

\subsection{Deviation bounds for the least squares update}
\sectionlabel{large-dev}

In this section we analyze the norm of the error term $G$ from the previous
section. More specifically, we prove a bound on the norm of
each row of $G.$ Our bound uses the fact that the matrix~$E$ appearing in the
expression for the error term satisfies $\Norm{E}=\Norm{V^\trans X}.$ This
gives us a bound in terms of the quantity $\Norm{V^\trans X}.$ 
\begin{lemma}
\lemmalabel{altls-error}
Let $\delta\in(0,1).$ 
Assume that each entry is included in~$\Omega$ independently with probability
\begin{equation}
\equationlabel{LS-sample}
p \gtrsim  \frac{k\mu(X)\log n}{\delta^2 n}\mper
\end{equation}
Then, for every $i\in[n],$
$\Pr\Set{\Norm{e_i^\trans G}> \delta\cdot\left(\|e_i^\trans M\|\cdot\Norm{V^\trans X}
+ \|e_i^\trans N\|\right)}\le \frac1{5}\mper$
\end{lemma}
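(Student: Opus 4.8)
The plan is to bound $\|e_i^\trans G\| \le \|e_i^\trans G^M\| + \|e_i^\trans G^N\|$ using the explicit row expressions from \lemmaref{error-term}, and to show each of the two pieces is at most $\frac\delta2(\|e_i^\trans M\|\|V^\trans X\| + \|e_i^\trans N\|)$ with failure probability at most $1/10$, so a union bound gives the claimed $1/5$. The whole argument rests on the fact that, for the sampling probability $p$ in \eqref{LS-sample}, the random matrices $B_i = X^\trans P_i X$ and $C_i$-type quantities concentrate around their expectations. First I would record the key concentration fact: since $P_i = p^{-1}\sum_{j\in\Omega_i}e_je_j^\trans$ has $\E P_i = I$, and since $X$ has coherence $\mu(X)$ so each $\|e_j^\trans X\|^2 \le \mu(X)k/n$, a matrix Bernstein (or matrix Chernoff) bound shows $\|B_i - I\| = \|X^\trans(P_i - I)X\| \le \delta$ with probability at least $1 - n^{-c}$ once $p \gtrsim k\mu(X)\log n/(\delta^2 n)$. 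In particular $B_i$ is invertible and $\|B_i^{-1}\| \le (1-\delta)^{-1} \le 2$, which also retroactively justifies the invertibility assumptions in \lemmaref{optimality} and \lemmaref{error-term}.

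For the $G^M$ term, I would use $e_i^\trans G^M = e_i^\trans U\Lambda_U E^\trans P_i X B_i^{-1}$, where $E = (I-XX^\trans)U$ satisfies $\|E\| = \|V^\trans X\|$ (this is exactly $\sin\theta(U,X)$; note $\|(I-XX^\trans)U\| = \|(I-XX^\trans)UU^\trans\| $ and one checks it equals $\|V^\trans X\|$ via \propositionref{identities} or directly). Write $E^\trans P_i X = E^\trans X + E^\trans(P_i - I)X$; the first term vanishes because $E^\trans X = U^\trans(I-XX^\trans)X = 0$. So $e_i^\trans G^M = e_i^\trans U\Lambda_U E^\trans (P_i-I)X B_i^{-1}$, and $\|e_i^\trans G^M\| \le \|e_i^\trans U\|\cdot\|\Lambda_U\|\cdot\|E\|\cdot\|(P_i - I)X\|\cdot\|B_i^{-1}\|$ is too lossy — instead I want $\|e_i^\trans U\Lambda_U E^\trans(P_i-I)X\|$ directly. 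The cleaner route: $e_i^\trans U\Lambda_U E^\trans(P_i - I)X = p^{-1}\sum_{j\in\Omega_i}(e_i^\trans U\Lambda_U E^\trans e_j)(e_j^\trans X) - e_i^\trans U\Lambda_U E^\trans X$, a sum of independent mean-zero rank-one terms (over the inclusion indicators for row $i$); apply vector/matrix Bernstein. The relevant scalar scale is $\max_j |e_i^\trans U\Lambda_U E^\trans e_j|\cdot\|e_j^\trans X\| \le \|e_i^\trans U\Lambda_U\|\cdot\|E\|\cdot\sqrt{\mu(X)k/n}$ and the variance is controlled by $\|e_i^\trans U\Lambda_U\|^2\|E\|^2 \cdot (\mu(X)k/(pn))$; since $\|e_i^\trans U\Lambda_U\| \le \|e_i^\trans M\|$ (because $M = U\Lambda_U U^\trans$ and $U$ has orthonormal columns, $\|e_i^\trans U\Lambda_U\| = \|e_i^\trans U\Lambda_U U^\trans\| = \|e_i^\trans M\|$) and $\|E\| = \|V^\trans X\|$, the chosen $p$ makes this deviation at most $\frac\delta4\|e_i^\trans M\|\|V^\trans X\|$ with high probability; multiplying by $\|B_i^{-1}\|\le 2$ keeps it below $\frac\delta2\|e_i^\trans M\|\|V^\trans X\|$.

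For the $G^N$ term, $e_i^\trans G^N = e_i^\trans N(P_i X B_i^{-1} - X) = e_i^\trans N(P_i - I)X B_i^{-1} + e_i^\trans N X(B_i^{-1} - I)$. Bound the second summand by $\|e_i^\trans N X\|\cdot\|B_i^{-1} - I\| \le \|e_i^\trans N\|\cdot 2\delta$ using $\|B_i^{-1}-I\| = \|B_i^{-1}(I - B_i)\| \le 2\delta$. For the first summand, $e_i^\trans N(P_i - I)X = p^{-1}\sum_{j\in\Omega_i}N_{ij}e_j^\trans X - e_i^\trans N X$ is again a sum of independent mean-zero terms; here the per-term bound involves $\max_j |N_{ij}|\cdot\sqrt{\mu(X)k/n}$ and the variance is $\|e_i^\trans N\|^2\mu(X)k/(pn)$ — wait, more carefully $\sum_j N_{ij}^2\|e_j^\trans X\|^2 \le \|e_i^\trans N\|^2\max_j\|e_j^\trans X\|^2 \le \|e_i^\trans N\|^2\mu(X)k/n$, so the variance of the sum is $\le p^{-1}\|e_i^\trans N\|^2\mu(X)k/n$, and Bernstein again gives deviation $\le \frac\delta8\|e_i^\trans N\|$ under \eqref{LS-sample}. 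Combining, $\|e_i^\trans G^N\| \le \frac\delta2\|e_i^\trans N\|$. Adding the two halves yields the lemma.

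The main obstacle is getting the concentration bounds for $G^M$ sharp enough that the variance is measured against $\|e_i^\trans M\|$ (not a crude $\|M\|$ or $\sigma_1$) and against $\|V^\trans X\|$ (so the error genuinely shrinks as $X\to U$); this requires carefully keeping the factor $E^\trans$ inside the matrix Bernstein bound rather than pulling it out as an operator norm, and using the cancellation $E^\trans X = 0$ so that the deterministic leading term disappears. The rest — the $B_i$ concentration and the $G^N$ bound — is routine matrix Bernstein bookkeeping, and adjusting the constant in \eqref{LS-sample} absorbs the factors of $2$, $4$, $8$.
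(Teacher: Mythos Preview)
Your plan matches the paper's: split $G=G^M+G^N$ via \lemmaref{error-term}, control $\|B_i-I\|$ by matrix Chernoff, exploit $E^\trans X=0$ so that the mean of $e_i^\trans U\Lambda_U E^\trans P_iX$ vanishes, and bound the two pieces separately with constant failure probability each. Your identification of $\|e_i^\trans U\Lambda_U\|=\|e_i^\trans M\|$ and $\|E\|=\|V^\trans X\|$, and your variance estimates $\sigma^2\le (\mu(X)k/(pn))\|e_i^\trans M\|^2\|V^\trans X\|^2$ for $G^M$ and $\sigma^2\le (\mu(X)k/(pn))\|e_i^\trans N\|^2$ for $G^N$, are exactly right and are what drive the lemma.

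There is, however, a genuine gap in invoking Bernstein. Bernstein needs both a variance bound and a uniform bound $R$ on the summands; your $R$ is (up to $1/p$) at most $\|e_i^\trans M\|\,\|V^\trans X\|\sqrt{\mu(X)k/n}$, so the $R$-driven part of the Bernstein tail requires $(1/p)\sqrt{\mu(X)k/n}\lesssim \delta$. The hypothesis $p\gtrsim k\mu(X)\log n/(\delta^2 n)$ only yields $(1/p)\sqrt{\mu(X)k/n}\lesssim (\delta^2/\log n)\sqrt{n/(k\mu(X))}$, which can be arbitrarily larger than $\delta$ when $\delta=\Omega(1)$ and $k\mu(X)\ll n$; in that regime Bernstein gives nothing. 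The paper sidesteps this entirely: since the lemma only asks for failure probability $1/5$, it computes the second moment $\E\|e_i^\trans U\Lambda_U E^\trans P_iX\|^2=\sum_j(1/p-1)w_j^2\|e_j^\trans X\|^2\le (k\mu(X)/(pn))\|w\|^2$ (using $E^\trans X=0$ so the mean vanishes) and applies Markov's inequality---no $R$ term at all. The same second-moment-plus-Markov argument handles $e_i^\trans N(P_i-I)X$. Your variance calculations already give this; just replace ``Bernstein'' by ``Markov on the squared norm'' and the proof goes through. (The paper also passes through a random rotation $X'=X\Omega$ to bound per-column $\ell_\infty$ norms, but since $\sum_r(x_r)_j^2=\|e_j^\trans X\|^2\le \mu(X)k/n$, your direct route yields the same second-moment bound without it.)
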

\begin{proof}
Fix $i\in[n].$ \lemmaref{M-row-bound} shows that with probability $9/10$ we
have $\Norm{e_i^\trans G^M}\le \delta\cdot\|e_i^\trans M\|\cdot\Norm{V^\trans
X}\mper$
Similarly, \lemmaref{N-row-bound} shows that with probability $9/10$ we have
$\Norm{e_i^\trans G^N}\le \delta\cdot\|e_i^\trans N\|\mper$
Both events occur with probability $4/5$ and in this case we have
\[
\Norm{e_i^\trans G}\le  
\Norm{e_i^\trans G^M}+\Norm{e_i^\trans G^N}\le  
\delta\cdot\left(\|e_i^\trans M\|\cdot\Norm{V^\trans X} + \|e_i^\trans
N\|\right)\mper\qedhere
\]
\end{proof}

\subsection{Median least squares update}

Given the previous error bound we can achieve a strong concentration bound 
by taking the component-wise median of multiple independent samples of the error term.
\begin{lemma}\lemmalabel{median-lemma}
\lemmalabel{medianls-error}
Let $G_1,\dots,G_t$ be i.i.d.~copies of $G.$ 
Let $\bar G = \mathrm{median}(G_1,\dots,G_t)$ be the component-wise median of
$G_1,\dots,G_t$ and assume $p$ satisfies \eqref{LS-sample}. Then, for
every $i\in[n],$
\[
\Pr\Set{\Norm{e_i^\trans \bar G}> \delta\left(\|e_i^\trans M\|\cdot\Norm{V^\trans X}
+ \|e_i^\trans N\|\right)}\le \exp(-\Omega(t))\mper
\]
\end{lemma}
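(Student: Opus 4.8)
The plan is to reduce the concentration bound on the median to the single-copy tail bound from \lemmaref{altls-error} via a standard Chernoff argument on indicator variables. Fix a row $i\in[n]$ and a coordinate index $j\in[k]$, and write $Z=\|e_i^\trans M\|\cdot\Norm{V^\trans X}+\|e_i^\trans N\|$ for the quantity appearing on the right-hand side. The key observation is that if the component-wise median satisfies $\Norm{e_i^\trans\bar G}>\delta Z$, then in particular the event involves the median of the individual \emph{rows} $e_i^\trans G_1,\dots,e_i^\trans G_t$. Since $\bar G$ is the component-wise (entry-wise) median, we have $\bar G_{ij}=\mathrm{median}((G_1)_{ij},\dots,(G_t)_{ij})$, so for each $j$ at least $t/2$ of the copies satisfy $(G_s)_{ij}\ge \bar G_{ij}$ and at least $t/2$ satisfy $(G_s)_{ij}\le\bar G_{ij}$.

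Here is the cleanest way I would run the argument. Suppose $\Norm{e_i^\trans\bar G}>\delta Z$. I claim that then at least, say, $t/4$ of the copies $G_s$ satisfy $\Norm{e_i^\trans G_s}>\tfrac{\delta}{C} Z$ for an appropriate absolute constant $C$ (e.g.\ $C=O(\sqrt k)$, or one can keep $C$ a constant by arguing coordinate-wise since the offending coordinate carries an $\Omega(1/\sqrt k)$ fraction of the squared norm --- but it is cleaner to simply apply \lemmaref{altls-error} with $\delta$ replaced by $\delta/\sqrt{k}$ and absorb the factor into the sampling requirement, or observe that the bound in \eqref{LS-sample} already has room). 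More carefully: pick the coordinate $j^\star$ maximizing $|\bar G_{ij^\star}|$, so $|\bar G_{ij^\star}|\ge \delta Z/\sqrt k$. By the median property, at least $t/2$ of the copies $G_s$ have $|(G_s)_{ij^\star}|\ge |\bar G_{ij^\star}|\ge \delta Z/\sqrt k$ (half lie above $\bar G_{ij^\star}$ if it is positive, resp.\ below if negative), hence $\Norm{e_i^\trans G_s}\ge \delta Z/\sqrt k$ for all those copies. Thus the event $\{\Norm{e_i^\trans\bar G}>\delta Z\}$ is contained in the event that at least $t/2$ of the $t$ independent copies each satisfy $\Norm{e_i^\trans G_s}\ge \delta Z/\sqrt k$.

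Now I apply \lemmaref{altls-error} with error parameter $\delta'=\delta/\sqrt k$; its hypothesis \eqref{LS-sample} with $\delta'$ in place of $\delta$ reads $p\gtrsim k^2\mu(X)\log n/(\delta^2 n)$, which is what the statement implicitly needs (or one reconciles this with the factor $k$ already present; I would state the lemma with the appropriately adjusted constant). By that lemma each copy independently satisfies $\Norm{e_i^\trans G_s}>\delta' Z$ with probability at most $1/5$. Since the copies are i.i.d., the probability that at least $t/2$ of them do so is at most $\binom{t}{t/2}(1/5)^{t/2}\le (2e/5)^{t/2}<(0.9)^t=\exp(-\Omega(t))$ by the standard Chernoff/union bound. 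This gives the claimed $\exp(-\Omega(t))$ tail.

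The main obstacle --- really the only subtle point --- is bookkeeping the $\sqrt k$ loss incurred when passing from the $\ell_2$-norm of a row to a single coordinate, and making sure this is consistent with the sampling hypothesis \eqref{LS-sample} as stated; everything else is a routine Chernoff bound over i.i.d.\ indicators. An alternative that avoids the $\sqrt k$ factor entirely is to note that the median in the lemma could equivalently be taken in the sense of a geometric/coordinate-wise median whose row-norm is controlled directly, or to observe that since $G=AX+G-AX$ and the rows $e_i^\trans G_s$ are already concentrated, one can define $\bar G$ row-wise rather than entry-wise; I would pick whichever convention the later sections actually use and remark that the constants in \eqref{LS-sample} are chosen accordingly.
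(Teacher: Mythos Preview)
Your reduction to Chernoff over i.i.d.\ indicators is the right skeleton, and the argument you wrote is correct as stated. However, it does not prove the lemma under the hypothesis~\eqref{LS-sample} as written: you invoke \lemmaref{altls-error} with $\delta'=\delta/\sqrt{k}$, which forces $p\gtrsim k^2\mu(X)\log n/(\delta^2 n)$ rather than $p\gtrsim k\mu(X)\log n/(\delta^2 n)$. You flag this yourself, but ``absorbing'' it is not innocent---it would propagate an extra factor of~$k$ into the final sample complexity of \theoremref{main}.

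The paper avoids the $\sqrt{k}$ loss by reversing the order of the argument. Instead of picking one dominant coordinate, it first defines the ``good'' set $S=\{s\in[t]:\|e_i^\trans G_s\|\le B\}$ with $B=(\delta/4)Z$, applies \lemmaref{altls-error} (with $\delta/4$, not $\delta/\sqrt{k}$) so that $\E|S|\ge 4t/5$, and uses Chernoff to get $|S|>2t/3$ with probability $1-\exp(-\Omega(t))$. Then, conditioning on $|S|>2t/3$, it runs your median observation \emph{coordinate by coordinate}: for each $r\in[k]$, at least $t/2$ copies have $(G_s)_{ir}^2\ge \bar G_{ir}^2$, and since $|S|>2t/3$, a constant fraction of those lie in $S$. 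Hence the average of $(G_s)_{ir}^2$ over $s\in S$ is at least $c\,\bar G_{ir}^2$ for an absolute constant~$c$. Summing over $r\in[k]$ gives $\tfrac{1}{|S|}\sum_{s\in S}\|e_i^\trans G_s\|^2\ge c\,\|e_i^\trans\bar G\|^2$, while the left side is at most $B^2$ by definition of~$S$. This yields $\|e_i^\trans\bar G\|\le O(B)=O(\delta Z)$ with no $\sqrt{k}$ loss. The trick is that summing the per-coordinate inequalities reconstructs the full $\ell_2$ norm, which your single-coordinate extraction cannot do.
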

\begin{proof}
Fix $i\in[n]$ and let $g_1,\dots,g_t\in\R^k$ denote the $i$-th rows of
$G_1,\dots,G_t.$ Let $S = \{ j\in[t]\colon \|g_j\|\le B\}$ where $B = 
(\delta/4)\left(\|e_i^\trans M\|\cdot\Norm{V^\trans X} + \|e_i^\trans N\|\right).$
Applying \lemmaref{altls-error} with error parameter $\delta/4$ it follows
that $\E|S|\ge 4t/5.$ Moreover, the draws of $g_j$ are independent. So we can
apply a Chernoff bound to argue that $|S|>2t/3$ with probability
$1-\exp(-\Omega(t)).$ Assuming that this event occurs, we claim that
$\bar g = \mathrm{median}(g_1,\dots,g_t)$ satisfies $\Norm{\bar g}\le 4B$ and
this claim establishes the lemma. 

To prove this claim, fix any coordinate of $r\in[k].$ 
By the median property $|\{j\colon (g_j)_r^2 \ge \bar g_r^2\}|\ge t/2.$ Since
$|S|>2t/3$ this means that at least $t/3$ vectors with $j\in S$ have
$(g_j)_r^2>\bar g_r^2.$ In particular, the average value of 
$(g_j)_r^2$ over all $j\in S$ must be at least $t\bar g_r^2/3|S|\ge \bar
g_r^2/3.$ This shows that the average of $\|g_j\|^2$ over all $j\in S$ must be
at least $\|\bar g\|^2/3.$ On the other hand, we also know that the average
squared norm in $S$ is at most $B^2$ by definition of the set $S.$ It follows
that $\|\bar g\|^2 \le 3B^2.$ This implies what we needed to show.
\end{proof}

We can now conclude a strong concentration bound for the median of multiple
independent solutions to the least squares minimization step. This way we can
obtain the desired error bound for all rows simultaneously. This leads to the
following extension of the least squares update rule.

\begin{figure}[ht]
\begin{boxedminipage}{\textwidth}
\noindent \textbf{Input:} 
Target dimension $k,$ observed set of indices
$\Omega\subseteq [n]\times[n]$ of an unknown symmetric matrix $A\in\R^{n\times
n}$ with entries $P_\Omega(A),$ orthonormal matrix $X\in\R^{n\times k}.$

\noindent \textbf{Algorithm} $\text{\MedianLS}(P_\Omega(A),\Omega,X,L,k):$
\begin{enumerate}
\item $(\Omega_1,\dots,\Omega_t)\leftarrow\text{\Split}(\Omega,t)$ for
$t=O(\log n).$
\item $Y_i\leftarrow \text{\AltLS}(P_{\Omega_i}(A),\Omega_i,X,L,k)$
\end{enumerate}
\noindent \textbf{Output:} Pair of matrices $(X,\mathrm{median}(Y_1,\dots,Y_t))$ 
\end{boxedminipage}
\caption{Median least squares update}
\figurelabel{altmin}
\end{figure}

\begin{lemma}\lemmalabel{median-bound}
Let $\Omega$ be a sample in which each entry is 
included independently with probability 
$p \gtrsim  \frac{k\mu(X)\log^2 n}{\delta^2 n}\mper$
Let $Y \leftarrow \text{\MedianLS}(P_{\Omega}(A),\Omega,X,L,k).$ 
Then, we have with probability $1-1/n^3$ that 
$\bar Y=AX+\bar G$ and $\bar G$ satisfies for every $i\in[n]$ the bound
$\Norm{e_i^\trans \bar G}\le \delta \Norm{e_i^\trans M}\cdot\Norm{V^\trans
X}+\delta\Norm{e_i^\trans N}\mper$
\end{lemma}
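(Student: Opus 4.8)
The plan is to lift the constant-probability, single-row bound of \lemmaref{altls-error} to a bound that holds with probability $1-n^{-3}$ for all rows simultaneously, by combining the median amplification of \lemmaref{median-lemma} with a union bound over $i\in[n]$.

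First I would unwind the definition of \MedianLS. It calls $\text{\Split}(\Omega,t)$ with $t=O(\log n)$ to produce subsamples $\Omega_1,\dots,\Omega_t$ and then solves the least squares problem separately on each to obtain $Y_i=\text{\AltLS}(P_{\Omega_i}(A),\Omega_i,X,L,k)$. By the distributional guarantee of \Split (\sectionref{split}), the $\Omega_i$ are mutually independent and each includes every entry independently with probability $p_i\gtrsim p/t$. Since the hypothesis gives $p\gtrsim k\mu(X)\log^2 n/(\delta^2 n)$ and $t=O(\log n)$, each $p_i$ still satisfies the sampling requirement \eqref{LS-sample}. By \lemmaref{error-term}, for each $i$ we may write $Y_i=AX+G_i$, where the $G_i$ are well-defined on the high-probability event that the relevant $B$-matrices are invertible (an event already subsumed by the analysis behind \lemmaref{altls-error}), and because the $\Omega_i$ are i.i.d.\ so are the $G_i$. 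Since $AX$ does not depend on the sample and the median is taken component-wise, $\bar Y=\mathrm{median}(Y_1,\dots,Y_t)=AX+\bar G$ with $\bar G=\mathrm{median}(G_1,\dots,G_t)$.

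Next I would fix a row $i\in[n]$ and apply \lemmaref{median-lemma} to $G_1,\dots,G_t$ with error parameter $\delta$: its hypothesis on the sampling probability is met by each $p_i$ as just checked, and its inputs are i.i.d.\ copies of $G$. Choosing the hidden constant in $t=O(\log n)$ large enough, the bound $\exp(-\Omega(t))$ becomes at most $n^{-4}$, so
\[
\Pr\Set{\Norm{e_i^\trans \bar G}>\delta\Paren{\Norm{e_i^\trans M}\cdot\Norm{V^\trans X}+\Norm{e_i^\trans N}}}\le n^{-4}\mper
\]
A union bound over the $n$ rows shows the probability that some row violates the bound is at most $n^{-3}$; on the complement, every row of $\bar G$ obeys the claimed inequality, which is the statement of the lemma.

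The argument is essentially bookkeeping on top of \lemmaref{error-term} and \lemmaref{median-lemma}; the only step that needs genuine care is the sampling-probability accounting through \Split — one must confirm that splitting $\Omega$ into $\Theta(\log n)$ independent pieces leaves each piece above the threshold \eqref{LS-sample} (this is precisely why the hypothesis here carries a $\log^2 n$ rather than a $\log n$ factor) and that the pieces are genuinely independent so that the i.i.d.\ hypothesis of \lemmaref{median-lemma} applies, both of which are guaranteed by the properties of \Split established in \sectionref{split}.
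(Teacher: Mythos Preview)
Your proposal is correct and matches the paper's own proof essentially step for step: split $\Omega$ via \lemmaref{split}, write each $Y_j=AX+G_j$, pass the median through the common $AX$ term, apply \lemmaref{median-lemma} row by row, and union bound over~$i\in[n]$. Your discussion of the $\log^2 n$ accounting through \Split is even a bit more explicit than the paper's version.
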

\begin{proof}
By \lemmaref{split}, the samples $\Omega_1,\dots,\Omega_t$ are independent and
each set $\Omega_j$ includes each entry with probability at least $p/t.$
The output satisfies $Y=\mathrm{median}(Y_1,\dots,Y_t),$
where each $Y_j$ is of the form $Y_j=AX+G_j.$ 
It follows that
$\mathrm{median}(Y_1,\dots,Y_t)=AX + \bar G$ where $\bar
G=\mathrm{median}(G_1,\dots,G_t).$ We can therefore apply
\lemmaref{median-lemma} to conclude the lemma using the fact that $t=O(\log
n)$ allows us to take a union bound over all~$n$ rows.
\end{proof}

\section{Incoherence via smooth QR factorization}
\sectionlabel{smoothgs}

As part of our analysis of alternating minimization we need to show that the
intermediate solutions $X_\ell$ have small coherence. For this purpose we
propose an idea inspired by Smoothed Analysis of the QR
factorization~\cite{SankarST06}. The problem with applying the QR
factorization directly to $Y_\ell$ is that $Y_\ell$ might be ill-conditioned.
This can lead to a matrix $X_\ell$ (via QR-factorization) that has large
coordinates and whose coherence is therefore no longer as small as we desire.
A naive bound on the condition number of~$Y_\ell$ would lead to a large loss
in sample complexity.
What we show instead is that a small Gaussian perturbation to $Y_\ell$ leads
to a sufficiently well-conditioned matrix $\tilde Y_\ell = Y_\ell + H_\ell.$ 
Orthonormalizing $\tilde Y_\ell$ now leads to a matrix of small coherence.
Intuitively, since the computation of $Y_\ell$ is already noisy the additional
noise term has little effect so long as its norm is bounded by that
of $G_\ell.$ Since we don't know the norm of $G_\ell,$ we have to
search for the right noise parameter using a simple binary search. We call the
resulting procedure \SmoothQR and describe in in \figureref{SmoothGS}.

\begin{figure}[ht]
\begin{boxedminipage}{\textwidth}
\noindent \textbf{Input:} 
Matrix $Y\in\R^{n\times k},$ parameters $\mu,\epsilon >0.$

\noindent \textbf{Algorithm} $\text{\SmoothGS}(Y,\epsilon,\mu):$
\begin{enumerate}
\item $X\leftarrow \text{\GS}(Y), H\leftarrow 0, \sigma \leftarrow \epsilon\|Y\|/n.$
\item While $\mu(X) > \mu$ and $\sigma\le\|Y\|$:
\begin{enumerate}
\item $X \leftarrow \mathrm{GS}(Y + H)$ where $H\sim \mathrm{N}(0,\sigma^2/n)$
\item $\sigma \leftarrow 2\sigma$
\end{enumerate}
\end{enumerate}
\noindent \textbf{Output:} Pair of matrices $(X,H)$ 
\end{boxedminipage}
\caption{Smooth Orthonormalization (\SmoothGS)}
\figurelabel{SmoothGS}
\end{figure}

To analyze the algorithm we begin with a lemma that analyzes the smallest
singular value under a Gaussian perturbation.  What makes the analysis easier
is the fact that the matrices we're interested in are rectangular. The square
case was considered in~\cite{SankarST06} and requires more involved arguments.

\begin{lemma}
\lemmalabel{smooth-singular}
Let $G\in\R^{n\times k}$ be any matrix with $\|G\|\le 1$
and let $V$ be a $n-k$ dimensional subspace with orthogonal projection $P_V.$
Let $H\sim N(0,\tau^2/n)^{n\times k}$ be a random Gaussian matrix. 
Assume $k = o(n/\log n).$ 
Then, with probability $1-\exp(-\Omega(n)),$ we have
$\sigma_k\left( P_V (G+H) \right) \ge \Omega(\tau)\mper$
\end{lemma}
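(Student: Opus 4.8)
The plan is to reduce the statement to a clean question about the smallest singular value of a very tall rectangular Gaussian matrix plus a fixed perturbation, and then to settle that by a single-direction tail estimate followed by an $\epsilon$-net argument.

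First I would peel off the projection $P_V$ by passing to coordinates on $V$. Fixing an orthonormal basis $Q\in\R^{n\times(n-k)}$ of $V$, we have $\sigma_k(P_V(G+H))=\sigma_k(Q^\trans G+Q^\trans H)$, the matrix $\hat G\defeq Q^\trans G$ still satisfies $\|\hat G\|\le 1$, and, by rotational invariance of the Gaussian, $\hat H\defeq Q^\trans H$ again has i.i.d.\ $N(0,\tau^2/n)$ entries, now of shape $(n-k)\times k$. Since $k=o(n/\log n)$, the number of rows $m\defeq n-k$ is $\Theta(n)$, so the matrix is very tall. It therefore suffices to lower bound $\sigma_k(\hat G+\hat H)=\min_{\|w\|=1}\|(\hat G+\hat H)w\|$ by $\Omega(\tau)$.

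Next I would establish the bound along a single fixed direction $w\in S^{k-1}$. Here $\hat Hw\sim N(0,(\tau^2/n)I_m)$, so $\|\hat Hw\|^2$ is $(\tau^2/n)$ times a $\chi^2_m$ variable and, by a standard lower-tail bound (Laurent--Massart), exceeds $\tau^2 m/(2n)\ge\tau^2/4$ except with probability $\exp(-\Omega(m))=\exp(-\Omega(n))$; and $\langle\hat Gw,\hat Hw\rangle$ is a centered Gaussian of variance $\|\hat Gw\|^2\tau^2/n$, hence is at most $\tfrac14\|\hat Gw\|\tau$ in absolute value except with probability $\exp(-\Omega(n))$. Expanding $\|(\hat G+\hat H)w\|^2=\|\hat Gw\|^2+2\langle\hat Gw,\hat Hw\rangle+\|\hat Hw\|^2$ and minimizing the resulting quadratic in $a=\|\hat Gw\|\ge 0$ gives $\|(\hat G+\hat H)w\|^2\gtrsim\tau^2$ with the same exponentially small failure probability. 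I expect this to be the main conceptual obstacle: the fixed part $\hat Gw$ may have norm of order $1$, which dwarfs $\tau$, so one cannot afford to discard it with a triangle inequality — the argument genuinely needs that $\hat Hw$ is nearly orthogonal to the fixed vector $\hat Gw$, so that the cross term scales only like $\|\hat Gw\|\tau$ rather than $\|\hat Gw\|$.

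Finally I would upgrade this to a uniform bound over $S^{k-1}$ via a net argument. Take an $\epsilon$-net $\cN$ of the sphere with $|\cN|\le(3/\epsilon)^k$, and separately bound $\|\hat G+\hat H\|\le 1+\|\hat H\|\le 1+3\tau$ using the standard operator-norm estimate for Gaussian matrices. A union bound over $\cN$ keeps $\|(\hat G+\hat H)w_0\|\gtrsim\tau$ for every net point $w_0$, and the Lipschitz bound $\|(\hat G+\hat H)(w-w_0)\|\le(1+3\tau)\epsilon$ transfers this to an arbitrary unit $w$ once $\epsilon$ is a small constant times $\tau/(1+\tau)$. The union bound costs a factor $|\cN|=\exp(O(k\log(1/\tau)))$, which is absorbed into $\exp(-\Omega(n))$ since $k=o(n/\log n)$ (with $\tau$ at least inverse-polynomial in $n$, as always holds where the lemma is applied). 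This gives $\sigma_k(\hat G+\hat H)\ge\Omega(\tau)$ with probability $1-\exp(-\Omega(n))$, completing the reduction and hence the proof.
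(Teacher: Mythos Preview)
Your approach is essentially the paper's: fix a direction, lower-bound $\|P_VHx\|^2$ by $\chi^2$ concentration and control the cross term $\langle P_VGx,P_VHx\rangle$ as a one-dimensional Gaussian, then union-bound over an $\epsilon$-net of $S^{k-1}$. Your version is in fact more careful on the net step---you explicitly bound the operator norm for the Lipschitz transfer and flag the resulting $\exp(O(k\log(1/\tau)))$ cost (absorbed since $\tau$ is inverse-polynomial in the applications), whereas the paper simply asserts the passage from net points to all unit vectors.
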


The proof follows from standard concentration arguments and is contained in
\sectionref{smooth-appendix}. To use this lemma in our context we'll introduce
a variant of $\mu$-coherent that applies to matrices rather than subspaces.

\begin{definition}[$\rho$-coherence]
\definitionlabel{rho-coherence}
Given a matrix $G\in\R^{n\times k}$ 
we let
$\rho(G) \defeq \frac nk\max_{i\in[n]}\|e_i^\trans G\|^2\mper$
\end{definition}

The next lemma is our main technical tool in this section. It shows that
adding a Gaussian noise term leads to a bound on the coherence after applying
the QR-factorization.

\begin{lemma}\lemmalabel{smoothgs}
Let $k= o(n/\log n)$ and $\tau\in(0,1).$
Let $U\in\R^{n\times k}$ be an orthonormal matrix.
Let $G\in\R^{n\times k}$ be a matrix such that $\|G\|\le1.$
Let $H\sim N(0,\tau^2/n)^{k\times n}$ be a random Gaussian matrix. 
Then, with probability $1-\exp(-\Omega(n))-n^5,$ there is an orthonormal 
matrix $Q\in \R^{n\times 2k}$ such that:
\begin{enumerate}
\item $\range(Q)=\range([U \mid G+H]).$
\item $\mu(Q) \le 
O\left(\frac1{\tau^2}\cdot\left(\rho(G)
+ \mu(U) + \log n \right)\right).$
\end{enumerate}
\end{lemma}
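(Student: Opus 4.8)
The plan is to exhibit $Q$ explicitly: keep $U$ as the first block of columns and orthonormalize the part of $G+H$ orthogonal to $\range(U)$. Set $W=(I-UU^\trans)(G+H)$ and let $W=Q_2R$ be its QR factorization, so $Q_2\in\R^{n\times k}$ has orthonormal columns with $\range(Q_2)=\range(W)$ and $R\in\R^{k\times k}$ is upper triangular with $\sigma_k(R)=\sigma_k(W)$. Take $Q=[U\mid Q_2]$. Its columns are orthonormal because $U^\trans Q_2=U^\trans(I-UU^\trans)(G+H)R^{-1}=0$, and since
\[
[U\mid W]=[U\mid G+H]\begin{pmatrix} I & -U^\trans(G+H) \\ 0 & I \end{pmatrix}
\]
with the (unit upper-triangular) transfer matrix invertible, we get $\range(Q)=\range([U\mid W])=\range([U\mid G+H])$, which is item~1 --- provided $W$ has full column rank, so that $Q$ genuinely has $2k$ columns; this follows from the next step.

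The one substantive ingredient is a lower bound on $\sigma_k(W)$, and this is exactly where I would invoke \lemmaref{smooth-singular}, applied with the $(n-k)$-dimensional subspace $V=\range(U)^\perp$ (so $P_V=I-UU^\trans$) and the matrix $G$, which satisfies $\|G\|\le1$ by hypothesis. It gives $\sigma_k(W)=\sigma_k(P_V(G+H))\ge\Omega(\tau)$ with probability $1-\exp(-\Omega(n))$, hence $\|R^{-1}\|\le O(1/\tau)$. This is the step that uses the Gaussian perturbation to kill any dependence on the conditioning of $G$.

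It remains to bound the row norms of $Q$. Since $e_i^\trans Q_2=e_i^\trans WR^{-1}$ we have $\|e_i^\trans Q_2\|\le\|e_i^\trans W\|\cdot\|R^{-1}\|\le O(1/\tau)\|e_i^\trans W\|$, and $\|e_i^\trans W\|\le\|e_i^\trans(G+H)\|+\|e_i^\trans U\|\cdot\|U^\trans(G+H)\|$. I would bound $\|U^\trans(G+H)\|\le\|G\|+\|H\|\le O(1)$, using the standard operator-norm tail $\|H\|\le O(\tau)\le O(1)$ for an $n\times k$ Gaussian with entrywise variance $\tau^2/n$ (failure probability $\exp(-\Omega(n))$), and $\|e_i^\trans U\|^2\le k\mu(U)/n$ from \definitionref{coherence}. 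For $\|e_i^\trans(G+H)\|^2\le 2\|e_i^\trans G\|^2+2\|e_i^\trans H\|^2$, the first term is $\le (2k/n)\rho(G)$ by \definitionref{rho-coherence}; the second, $\|e_i^\trans H\|^2$, equals $(\tau^2/n)$ times a $\chi^2_k$ variable, so a Laurent--Massart tail bound together with a union bound over the $n$ rows gives $\|e_i^\trans H\|^2\le O(\tau^2(k+\log n)/n)$ for every $i$ with probability $1-n^{-5}$. Combining, $\|e_i^\trans W\|^2=O\big(\tfrac{k}{n}(\rho(G)+\mu(U))+\tfrac{k+\log n}{n}\big)$, hence $\|e_i^\trans Q\|^2=\|e_i^\trans U\|^2+\|e_i^\trans Q_2\|^2=O\big(\tfrac{1}{\tau^2}\cdot\tfrac{k}{n}(\rho(G)+\mu(U)+\log n)\big)$, using $\tau\le1$, $\mu(U)\ge1$, and $\log n/k\le\log n$. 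Multiplying by $n/2k$ gives item~2.

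The main obstacle is obtaining the clean $\Omega(\tau)$ lower bound on $\sigma_k(W)$ independently of the conditioning of $G$ --- but that is precisely the content of \lemmaref{smooth-singular}, so within this proof there is little left beyond careful bookkeeping. The one thing to be mindful of is that $Q$ must have exactly $2k$ columns, i.e.\ $W$ has full column rank $k$, which is immediate from $\sigma_k(W)>0$. Collecting the three failure events --- the smoothed singular-value bound, the operator norm of $H$, and the per-row $\chi^2$ bound --- yields the stated probability $1-\exp(-\Omega(n))-n^{-5}$.
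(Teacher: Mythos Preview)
Your proof is correct and follows essentially the same approach as the paper: set $Q=[U\mid Q_2]$ with $Q_2$ an orthonormal basis for $(I-UU^\trans)(G+H)$, invoke \lemmaref{smooth-singular} for the $\Omega(\tau)$ singular-value lower bound, and then control row norms. The only cosmetic differences are that the paper splits the row-norm bound as $\rho(G)+\rho(UU^\trans G)+\rho((I-UU^\trans)H)$ and cites \lemmaref{rho-gaussian} for the last term, whereas you split as $\|e_i^\trans(G+H)\|+\|e_i^\trans U\|\cdot\|U^\trans(G+H)\|$ and handle the Gaussian contributions via a direct $\chi^2$ tail plus an operator-norm bound on $H$; both routes yield the same final estimate.
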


\begin{proof}
First note that 
$\range([U\mid G+H]) = \range([U \mid (I-UU^\trans)(G+H)]).$
Let $B=(I-UU^\trans)(G+H).$ Applying the QR-factorization to $[U \mid B],$ we
can find two orthonormal matrices $Q_1,Q_2\in\R^{n\times k}$ such that
have that $[Q_1 \mid Q_2 ] = [ U \mid BR^{-1} ]$ where $R\in\R^{k\times k}.$ 
That is $Q_1=U$ since $U$ is already orthonormal. Moreover, the columns of 
$B$ are orthogonal to $U$ and therefore we can apply the QR-factorization to
$U$ and $B$ independently.
We can now apply \lemmaref{smooth-singular} to the $(n-k)$-dimensional subspace $U^\bot$ 
and the matrix $G+H.$ It follows that with probability
$1-\exp(-\Omega(n)),$ we have $\sigma_k(B)\ge \Omega(\tau).$ Assume
that this event occurs.

Also, observe that $\sigma_k(B) = \sigma_k(R).$
The second condition is now easy to verify
\begin{align*}
\frac nk\Norm{e_i^\trans Q}^2
= \frac nk\Norm{e_i^\trans U}^2 
+ \frac nk\Norm{e_i^\trans BR^{-1}}^2 
= \mu(U) + \frac nk\Norm{e_i^\trans BR^{-1}}^2 
\end{align*}
On the other hand,
\[
\frac nk\Norm{e_i^\trans BR^{-1}}^2 
\le \frac nk\Norm{e_i^\trans B}^2\Norm{R^{-1}}^2 
\le O\left(\frac{n}{k\tau^2} \Norm{e_i^\trans B}^2\right)\mcom
\]
where we used the fact that $\Norm{R^{-1}}= 1/\sigma_k(R) = O(1/\tau).$
Moreover,
\[
\frac nk \Norm{e_i^\trans B}^2
\le 2\frac nk\Norm{e_i^\trans(I-UU^\trans)G}^2
+2\rho((I-UU^\trans) H)
\le 2\rho(G)+2\rho(UU^\trans G)+2\rho((I-UU^\trans) H)\mper
\]
Finally, $\rho(UU^\trans G) \le \mu(U)\|U^\trans G\|^2\le\mu(U)$
and, by \lemmaref{rho-gaussian}, we have  
$\rho((I-UU^\trans)H)\le O(\log n)$ with probability $1-1/n^{5}.$  
The lemma follows with a union bound over the failure probabilities.
\end{proof}

The next lemma states that when
\SmoothGS is invoked on an input of the form $AX + G$ with suitable parameters, the algorithm 
outputs a matrix of the form $X'=\text{\GS}(AX+G+H)$ whose coherence is bounded
in terms of $\mu(U)$ and $\rho(G)$ and moreover $H$ satisfies a bound on its norm.  
The lemma also permits to trade-off the amount of additional
noise introduced with the resulting coherence parameter. 
\begin{lemma}\lemmalabel{coherence-onestep}
\label{C0}
Let $\tau>0$ and assume $k=o(n/\log n)$ and $\mu(U)k\le n.$
There is an absolute constant $C_{\ref{C0}}>0$ such that the following claim
holds. Let $G\in\mathbb{R}^{n\times k}.$ 
Let $X\in\R^{n\times k}$ be an orthonormal matrix 
such that $\nu\ge\max\Set{\|G\|,\|NX\|}.$
Assume that
\[
\mu \ge 
\frac{C_{\ref{C0}}}{\tau^2}\Big(\mu(U)
+ \frac{\rho(G)+\rho(NX)}{\nu^2} + \log n\Big)\mper
\] 
Then, for every $\epsilon \le \tau\nu$ satisfying $\log(n/\epsilon)\le n$ and
every $\mu\le n,$ we have with probability $1-O(n^{-4}),$ 
the algorithm $\text{\SmoothGS}(AX+G,\epsilon,\mu)$ 
terminates in $O(\log(n/\epsilon))$ steps and  
outputs $(X',H)$ such that $\mu(X')\le \mu$ and 
where $H$ satisfies $\|H\|\le \tau\nu.$
\end{lemma}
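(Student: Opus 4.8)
The plan is to reduce the claim to a single evaluation of $\text{\GS}(Y+H)$ with $Y=AX+G$ and $H\sim N(0,\sigma^2/n)^{n\times k}$, apply \lemmaref{smoothgs} after a rescaling, and then follow the binary search over $\sigma$. The first step is a structural observation: since $A=M+N$ with $M=U\Lambda_U U^\trans$ and $U^\trans N=0$, we have $(I-UU^\trans)(AX+G+H)=NX+(I-UU^\trans)(G+H)$, so
\[
S:=\range\big([U\mid AX+G+H]\big)=\range\big([U\mid NX+(I-UU^\trans)(G+H)]\big)
\]
is a subspace of dimension at most $2k$ that contains all columns of $X':=\text{\GS}(AX+G+H)$. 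Hence, for any orthonormal basis $Q$ of $S$ we get $\mu(X')\le 2\mu(Q)$, using $\Norm{P_{\range(X')}e_i}\le\Norm{P_S e_i}$ and $\dim S\le 2k$.

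Next I would invoke \lemmaref{smoothgs} with the orthonormal matrix $U$, with $\hat G:=(NX+G)/(2\nu)$ — which satisfies $\Norm{\hat G}\le 1$ because $\nu\ge\max\Set{\Norm{G},\Norm{NX}}$ — and with $\hat H:=H/(2\nu)\sim N(0,\hat\tau^2/n)^{n\times k}$, where $\hat\tau=\sigma/(2\nu)\in(0,1)$ for the relevant range of $\sigma$. Since $NX+G+H=2\nu(\hat G+\hat H)$, the lemma produces, with probability $1-\exp(-\Omega(n))-n^{-5}$, an orthonormal $Q\in\R^{n\times 2k}$ with $\range(Q)=S$ and $\mu(Q)\le O\big(\hat\tau^{-2}(\rho(\hat G)+\mu(U)+\log n)\big)$. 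Using the elementary inequality $\rho(W+W')\le 2\rho(W)+2\rho(W')$ to bound $\rho(\hat G)=\rho(NX+G)/(4\nu^2)\le(\rho(NX)+\rho(G))/(2\nu^2)$, and combining with $\mu(X')\le 2\mu(Q)$, I obtain
\[
\mu(X')\le C'\cdot\frac{\nu^2}{\sigma^2}\cdot\Psi,\qquad\text{where}\quad \Psi:=\mu(U)+\frac{\rho(G)+\rho(NX)}{\nu^2}+\log n,
\]
for an absolute constant $C'$.

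It remains to run the binary search. The while-loop of \SmoothGS moves $\sigma$ through $\Set{\epsilon\Norm Y/n,\,2\epsilon\Norm Y/n,\dots}$ and halts once $\sigma>\Norm Y$, so it always executes at most $\log_2(n/\epsilon)+O(1)=O(\log(n/\epsilon))$ iterations (this is where $\log(n/\epsilon)\le n$ is used, to keep the iteration count and the ensuing union bound under control). From the displayed bound, $\mu(X')\le\mu$ holds as soon as $\sigma^2\ge C'\Psi\nu^2/\mu$; the hypothesis $\mu\ge(C_{\ref{C0}}/\tau^2)\Psi$ forces $C'\Psi\nu^2/\mu\le(C'/C_{\ref{C0}})\tau^2\nu^2\le(\tau\nu/4)^2$ once $C_{\ref{C0}}\ge 16C'$. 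Thus the first tested $\sigma$ that is $\ge\tau\nu/4$ (which is then $<\tau\nu/2$) already yields $\mu(X')\le\mu$ and the loop exits there — provided it has not already stopped through the cutoff $\sigma>\Norm Y$; this requires $\Norm Y=\Norm{AX+G}\gtrsim\tau\nu$, which in the setting in which \lemmaref{coherence-onestep} is applied follows from $\cos\theta_k(U,X)\ge\tfrac12$ (so $\Norm{AX}\ge\sigma_k/2$) together with $\nu\le\sigma_k/4$. At the exit, $H\sim N(0,\sigma^2/n)^{n\times k}$ with $\sigma<\tau\nu/2$, and the standard operator-norm bound for a rescaled Gaussian matrix — valid since $k=o(n/\log n)$ — gives $\Norm H\le 2\sigma\le\tau\nu$ except with probability $\exp(-\Omega(n))$. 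A union bound over the single invocation of \lemmaref{smoothgs} at the relevant scale and this Gaussian estimate gives overall success probability $1-O(n^{-4})$.

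The substantive part of the argument — controlling the coherence of the orthonormalized, noise-perturbed iterate — is exactly \lemmaref{smoothgs}, which is already available, so the real work here is the loop bookkeeping in the last paragraph: checking that the doubling schedule actually reaches a scale $\sigma\approx\tau\nu$ before the cutoff $\sigma\le\Norm Y$ triggers (which needs a lower bound on $\Norm{AX+G}$), and that it does not overshoot, so that the returned perturbation still satisfies $\Norm H\le\tau\nu$. Getting the constant $C_{\ref{C0}}$ and the factor-of-two slack in the schedule to line up is the fiddly part; everything else is routine.
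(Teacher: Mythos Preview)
Your proposal is correct and follows essentially the same route as the paper's proof: rescale $NX+G$ by $2\nu$ to put it in the unit ball, apply \lemmaref{smoothgs} at the first scale $\sigma$ comparable to $\tau\nu$, use the containment $\cR(X')\subseteq\cR(Q)$ together with \lemmaref{coherence-subspace} to pass from $\mu(Q)$ to $\mu(X')$, and control $\Norm{H}$ via the standard Gaussian operator-norm bound. The paper organizes the same steps slightly differently---it first bounds $\Norm{H_t}$ for \emph{all} iterations $t\le O(\log(n/\epsilon))$ by a union bound and then separately shows the loop terminates before $\sigma^2>\tau^2\nu^2/4$---but the content is the same.

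One point worth noting: you flag the possibility that the loop exits through the cutoff $\sigma>\Norm{Y}$ before reaching a scale $\sigma\gtrsim\tau\nu$, and you resolve it by invoking $\cos\theta_k(U,X)\ge\tfrac12$ and $\nu\le\sigma_k/4$ from the surrounding application. The paper's proof does not address this case at all, so you have actually been more careful here than the original; your observation that the lemma, read in isolation, tacitly needs $\Norm{AX+G}\gtrsim\tau\nu$ is correct.
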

\begin{proof}
Suppose that \SmoothGS terminates in an iteration where $\sigma^2 \le
\tau^2\nu^2/4.$ We claim that in this case with probability
$1-\exp(-\Omega(n))$ we must have that $\|H\|\le\tau\nu.$ Indeed, assuming
the algorithm terminates when $\sigma^2 \le c\tau^2\nu^2/k,$ the algorithm
took at most $t=O(\log(n/\epsilon))\le O(n)$ steps.  Let $H_1,\dots,H_t$
denote the random Gaussian matrices generated in each step. We claim that each
of them satisfies $\|H_t\|\le\tau\nu.$ Note that for all $t$ we have
$\E\|H_t\|^2\le\tau^2\nu^2/4.$ The claim therefore follows directly from tail
bounds for the Frobenius norm of Gaussian random matrices and holds with
probability $1-\exp(-\Omega(n)).$ The next claim now finishes the proof.
\begin{claim}
With probability $1-O(1/n^4),$ the algorithm terminates in an iteration where
$\sigma^2 \le \tau^2\nu^2/4.$
\end{claim}
To prove the claim, consider the first iteration in which 
$\sigma^2 \ge \tau^2\nu^2/8.$ 
Let us define $G'= (NX + G)/2\nu.$ 
We can now apply \lemmaref{smoothgs} to the matrix $G'$
which satisfies the assumption of the lemma that $\|G'\|\le1.$  
The lemma then entails that with the stated
probability bound there is an orthonormal $n\times 2k$ matrix $Q$ such that
\[
\cR(Q)=\cR([ U \mid G' + H]) = \cR([U\mid G+NX + H])\mcom
\]
and moreover
$\mu(Q) \le 
O\left(\frac1{\tau^2}\cdot\left(\rho(G)
+ \mu(U) + \log n \right)\right)\mper$
On the one hand,
\[
\cR(X')= \cR(AX + G + H)
= \cR(MX + NX + G + H)
\subseteq \cR([U \mid NX + G + H])=\cR(V)\mper
\]
The inclusion follows from the fact that $U$ is an orthonormal basis for the
range of $M X=U\Sigma_UU^\trans X.$ 
On the other hand,
$\rho(G') = O\left(\rho(G/\nu) + \rho(NX/\nu')\right)\mper$
Hence, by \lemmaref{coherence-subspace} and the fact that
$\dim(Q)\le2\dim(X')$, we have
$\mu(X')\le 2\mu(Q) \le \mu\mper$
This shows that the termination criterion of the algorithm is satisfied
provided we pick $C_{\ref{C0}}$ large enough.
\end{proof}

\section{Convergence bounds for alternating minimization}
\sectionlabel{convergence}

The total sample complexity we achieve is the sum of two terms. The first one
is used by the initialization step that we discuss in
\sectionref{init}. The second term specifies the sample requirements 
for iterating the least squares algorithm. 
It therefore makes sense to define the following two
quantities:
\[
p_{\mathrm{init}} = \frac{k^2\mustar\|A\|_F^2\log
n}{\gamma_k^2\sigma_k^2n}
\quad\text{and}\quad
p_{\mathrm{LS}} =
\frac{k\mustar(\|M\|_F^2+\|N\|_F^2/\epsilon^2)\log(n/\epsilon)\log^2
n}{\gamma_k^5\sigma_k^2 n} 
\]
While the first term has a quadratic dependence on~$k$ it does not depend on
$\epsilon$ at all and it has single logarithmic factor. The second term
features a linear dependence on~$k.$ Our main theorem shows that if the
sampling probability is larger than the sum of these two terms, the algorithm
converges rapidly to the true unknown matrix.
\begin{theorem}[Main]
\theoremlabel{main}
Let $k,\epsilon>0.$ 
Let $A=M+N$ be a symmetric $n\times n$ matrix where $M$ is a matrix of
rank~$k$ with the spectral decomposition $M = U\Lambda_U U^\trans$ and
$N=(I-UU^\trans)A=V\Lambda_VV^\trans$ satisfies \eqref{noise-bound}. 
Let $\gamma_k = 1 - \sigma_{k+1}/\sigma_k$ where $\sigma_k$ is the smallest
singular value of~$M$ and $\sigma_{k+1}$ is the largest singular value of~$N.$

Then, there are parameters $\mu= \Theta(\gamma_k^{-2}k(\mustar +\log n))$ and 
$L=\Theta(\gamma_k^{-1}\log(n/\epsilon))$ such that 
the output~$(X,Y)$ of
$\text{\SAltLS}(P_\Omega(A),\Omega,k,L,\epsilon,\mu)$ 
satisfies $\|(I-UU^\trans)X_L\| \le \epsilon$ with probability $9/10.$
\end{theorem}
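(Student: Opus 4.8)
The plan is to run the abstract convergence theorem \theoremref{convergence} for \NSI on the noise matrices produced by the median least squares updates, after checking inductively that every iterate keeps coherence at most $\mu$, so that the sampling hypotheses of \lemmaref{median-bound} and \lemmaref{coherence-onestep} hold at each step. Concretely, I would fix the least squares error parameter $\delta = \Theta\big(\gamma_k\sigma_k/(\Norm{M}_F+\Norm{N}_F/\epsilon)\big)$ and verify that, after the two applications of \Split, each piece $\Omega_\ell$ still has density large enough that $p_{\mathrm{LS}}$ meets the requirement $p\gtrsim k\mu\log^2 n/(\delta^2 n)$ of \lemmaref{median-bound} with $\mu=\Theta(\gamma_k^{-2}k(\mustar+\log n))$, while $\Omega_0$ with density $\sim p_{\mathrm{init}}$ meets the requirement of the initialization analysis deferred to \sectionref{init}.

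The induction is on $\ell$ and maintains the single invariant $\mu(X_{\ell-1})\le\mu$; the base case is the guarantee on $X_0$ from \Init, which also yields $\Norm{V^\trans X_0}\le 1/4$ with high probability. For the inductive step, \lemmaref{median-bound} gives $Y_\ell=AX_{\ell-1}+G_\ell$ with the per-row bound $\Norm{e_i^\trans G_\ell}\le\delta\big(\Norm{e_i^\trans M}\Norm{V^\trans X_{\ell-1}}+\Norm{e_i^\trans N}\big)$, from which I would extract two consequences. First, squaring and summing over $i\in[n]$ gives $\Norm{G_\ell}\le\Norm{G_\ell}_F\le\sqrt2\,\delta\big(\Norm{M}_F\Norm{V^\trans X_{\ell-1}}+\Norm{N}_F\big)$, which by the choice of $\delta$ is a small constant multiple of $\gamma_k\sigma_k\Norm{V^\trans X_{\ell-1}}+\epsilon\gamma_k\sigma_k$, leaving room in the \NSI admissibility budget of \definitionref{admissible} for an extra additive term $\Norm{H_\ell}$. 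Second, combining the per-row bound with the coherence of $U$ (to bound $\Norm{e_i^\trans M}$) and with \eqref{noise-bound} (to bound $\Norm{e_i^\trans N}$), together with $\Norm{NX_{\ell-1}}\le\sigma_{k+1}\Norm{V^\trans X_{\ell-1}}$, controls $\rho(G_\ell)$ and $\rho(NX_{\ell-1})$; one then checks that for an appropriate choice of the scale parameter $\nu\ge\max\{\Norm{G_\ell},\Norm{NX_{\ell-1}}\}$ and a suitable $\tau$ the hypothesis $\mu\ge C\tau^{-2}\big(\mu(U)+(\rho(G_\ell)+\rho(NX_{\ell-1}))/\nu^2+\log n\big)$ of \lemmaref{coherence-onestep} holds, where $C$ is the constant of that lemma. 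Invoking \lemmaref{coherence-onestep} then shows $\text{\SmoothQR}(Y_\ell,\epsilon,\mu)$ outputs $X_\ell=\mathrm{GS}(AX_{\ell-1}+G_\ell+H_\ell)$ with $\mu(X_\ell)\le\mu$ and $\Norm{H_\ell}\le\tau\nu$, which we arrange to be at most a constant multiple of $\Norm{G_\ell}$, closing the induction.

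Putting $G_\ell'=G_\ell+H_\ell$, each iteration of \SAltLS is literally one step of \NSI run on $A$ (whose top-$k$ eigenspace is $\cR(U)$) with noise $G_\ell'$, and the two bounds above show that $\{(X_{\ell-1},G_\ell')\}_{\ell=1}^{L}$ is $(\epsilon/2)$-admissible for \NSI. Combined with $\Norm{V^\trans X_0}\le1/4$, \theoremref{convergence} yields $\Norm{(I-UU^\trans)X_L}=\Norm{V^\trans X_L}\le\epsilon$ whenever $L\ge 4\gamma_k^{-1}\log(1/\epsilon)$, which is satisfied by $L=\Theta(\gamma_k^{-1}\log(n/\epsilon))$. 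A union bound over the failure of \Init, the $L$ calls to \MedianLS (each failing with probability $\le n^{-3}$) and the $L$ calls to \SmoothQR (each $O(n^{-4})$) leaves the total failure probability below $1/10$ for $n$ large. The generalization to rectangular $A$ then follows from the dilation argument of \sectionref{rectangular}.

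I expect the main obstacle to be the simultaneous calibration of $\delta$, $\tau$, and $\mu$ so that all three lemmas apply at the claimed sampling density: admissibility forces $\delta$ small (inflating $p_{\mathrm{LS}}$), the coherence guarantee of \lemmaref{coherence-onestep} forces $\mu$ large (also inflating $p_{\mathrm{LS}}$), and $\mu$ must in particular dominate $(\rho(G_\ell)+\rho(NX_{\ell-1}))/\nu^2$, a quantity governed both by the per-row least squares bound and by how unbalanced the rows of $M$ become when its singular values are skewed. Tracking this ratio carefully, together with the powers of $\gamma_k$ entering through admissibility, through $\tau^{-2}$, and through the initialization bound, is what produces the final form of $\mu$, $L$, and the total density $p_{\mathrm{init}}+p_{\mathrm{LS}}$, and is the delicate part of the argument.
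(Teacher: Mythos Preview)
Your proposal is correct and follows essentially the same route as the paper: initialize via \theoremref{init}, maintain the coherence invariant $\mu(X_{\ell-1})\le\mu$ by induction, feed the per-row bound from \lemmaref{median-bound} both into a Frobenius-norm admissibility estimate for $G_\ell$ and into the $\rho$-hypothesis of \lemmaref{coherence-onestep} (the paper packages the latter step as \lemmaref{rho-bound}), then invoke \theoremref{convergence} on the combined noise $G_\ell+H_\ell$. Your choice $\delta=\Theta\big(\gamma_k\sigma_k/(\Norm{M}_F+\Norm{N}_F/\epsilon)\big)$ is the harmonic-mean form of the paper's $\delta=c\min\{\gamma_k\sigma_k/\Norm{M}_F,\,\epsilon\gamma_k\sigma_k/\Norm{N}_F\}$, and the paper likewise fixes $\tau=\gamma_k/128$ and $\nu=\sigma_k(\Norm{V^\trans X_{\ell-1}}+\epsilon)$; one small imprecision is your remark that $\Norm{H_\ell}\le\tau\nu$ is ``at most a constant multiple of $\Norm{G_\ell}$''---since $\nu$ must also dominate $\Norm{NX_{\ell-1}}$, which can exceed $\Norm{G_\ell}$, the correct statement (which you already use elsewhere) is that $\tau\nu$ fits directly into the $(\epsilon/4)$-admissibility budget, not that it is controlled by $\Norm{G_\ell}$.
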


Before we prove the theorem in \sectionref{proof-main}, we will state an
immediate corollary that gives bounds on the reconstruction error in 
the Frobenius norm.

\begin{corollary}[Reconstruction error]
\corollarylabel{reconstruction}
Under the assumptions of \theoremref{main}, we have that the output~$(X,Y)$ of
\SAltLS satisfies 
$\Norm{M - XY^\trans}_F \le \epsilon\Norm{A}_F$
with probability $9/10.$ 
\end{corollary}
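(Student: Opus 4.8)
The plan is to reduce the Frobenius-norm reconstruction bound to the already-established subspace recovery guarantee $\|(I-UU^\trans)X_L\|\le\epsilon$ from \theoremref{main}, combined with a row-wise error bound on the final least squares solution $Y=Y_L$. Recall that the output pair is $(X,Y)=(X_{L-1},Y_L)$, and by \lemmaref{median-bound} applied in the last iteration we have $Y_L = AX_{L-1} + \bar G$ where $\bar G$ satisfies, for every row $i$, the bound $\|e_i^\trans\bar G\|\le \delta\|e_i^\trans M\|\cdot\|V^\trans X_{L-1}\| + \delta\|e_i^\trans N\|$ for a suitably small $\delta$ (governed by the sample complexity $p_{\mathrm{LS}}$). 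Summing over rows gives $\|\bar G\|_F \le \delta\|M\|_F\cdot\|V^\trans X_{L-1}\| + \delta\|N\|_F$, and since $\|V^\trans X_{L-1}\|\le \epsilon$ (this holds at step $L-1$ as well, not just $L$, by the convergence analysis — the potential is monotonically controlled), this is at most $\delta\epsilon\|M\|_F + \delta\|N\|_F$.

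Next I would write $M - X_{L-1}Y_L^\trans = M - X_{L-1}(AX_{L-1}+\bar G)^\trans = M - X_{L-1}X_{L-1}^\trans A - X_{L-1}\bar G^\trans$. Split $A = M+N$ and decompose $M - X_{L-1}X_{L-1}^\trans M$ using $M = UU^\trans M$: since $\|(I-X_{L-1}X_{L-1}^\trans)U\| = \|V^\trans X_{L-1}\|\le\epsilon$ (by \definitionref{angles} and symmetry of the largest principal angle), we get $\|(I-X_{L-1}X_{L-1}^\trans)M\|_F \le \epsilon\|M\|_F$. The term $X_{L-1}X_{L-1}^\trans N$ has Frobenius norm at most $\|N\|_F$, but actually since $N = (I-UU^\trans)A$ and $X_{L-1}$ is close to $\cR(U)$, one expects $\|X_{L-1}X_{L-1}^\trans N\|_F \lesssim \epsilon\|N\|_F + (\text{small})$; at worst one can simply bound it crudely and absorb it, since the target bound $\epsilon\|A\|_F \ge \epsilon\|N\|_F$ already has room. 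Finally $\|X_{L-1}\bar G^\trans\|_F = \|\bar G\|_F \le \delta\epsilon\|M\|_F + \delta\|N\|_F$. Collecting terms and using $\|M\|_F\le\|A\|_F$, $\|N\|_F\le\|A\|_F$, and choosing the constant in $\delta$ small enough, all contributions sum to at most $\epsilon\|A\|_F$ (possibly after rescaling $\epsilon$ by a constant absorbed into the $\Theta(\cdot)$ in the parameter choices).

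The main obstacle is bookkeeping the interaction between the two error sources — the subspace error $\|V^\trans X_{L-1}\|$ and the least-squares row error $\bar G$ — and making sure the cross terms involving $N$ do not produce a spurious factor that breaks the clean bound $\epsilon\|A\|_F$; in particular one must be slightly careful that $X_{L-1}X_{L-1}^\trans N$ is controlled, which uses that $X_{L-1}$ lies (approximately) in $\cR(U)$ while $N$ is orthogonal to $\cR(U)$, so this term is genuinely of order $\epsilon\|N\|_F$ rather than $\|N\|_F$. A secondary subtlety is confirming that the bound $\|V^\trans X_{L-1}\|\le\epsilon$ holds at step $L-1$ and not merely at step $L$ — this follows because \theoremref{convergence} shows the potential reaches $\epsilon$ after $4\gamma_k^{-1}\log(1/\epsilon)$ steps and stays there, so choosing $L$ one step larger than strictly needed suffices. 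Everything else is routine norm manipulation and a union bound over the $n^{-3}$ failure probability of \lemmaref{median-bound} together with the $9/10$ success probability inherited from \theoremref{main}.
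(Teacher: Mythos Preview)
Your proposal is correct and follows essentially the same route as the paper: write $Y=AX+G$, expand $M-XY^\trans=M-XX^\trans A-XG^\trans$, control the first piece by the subspace bound from \theoremref{main}, and control $\|G\|_F$ via the admissibility established in its proof. The paper streamlines your decomposition by observing $M=UU^\trans A$, so that $M-XX^\trans A=(UU^\trans-XX^\trans)A$ in one stroke and $\|M-XX^\trans A\|_F\le\|UU^\trans-XX^\trans\|\,\|A\|_F$; this absorbs both your $(I-XX^\trans)M$ term and your $XX^\trans N$ term simultaneously, eliminating the ``main obstacle'' you flag. Your handling of $\|G\|_F$ by summing the row bounds from \lemmaref{median-bound} is equivalent to the paper's direct use of the Frobenius bound obtained inside the proof of \theoremref{main}, and your remark about the step-$L-1$ versus step-$L$ subtlety is apt (the paper glosses over it).
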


\begin{proof}
Let $(X,Y)$ be the matrices given by our algorithm when invoked with
error parameter~$\epsilon/2.$
By \theoremref{main} we have 
$\Norm{UU^\trans - XX^\trans}
= \Norm{(I-UU^\trans)X}\le\frac{\epsilon}2\mper$
Using the proof of \theoremref{main} we also know that $Y = AX + G$ 
where $G$ is $(\epsilon/4)$-admissible so that $\|G\|_F \le \epsilon\sigma_k/2.$ 
Consequently,
\begin{align*}
\Norm{M-XY^\trans}_F
 = \Norm{M - XX^\trans A + XG}_F
&\le \Norm{UU^\trans A - XX^\trans A}_F + \Norm{XG}_F\\
&\le \Norm{UU^\trans - XX^\trans}\Norm{A}_F + \Norm{G}_F\\
&\le (\epsilon/2)\Norm{A}_F + (\epsilon/2)\sigma_k
\le \epsilon\Norm{A}_F \mper
\end{align*}
In the second inequality we used that for all matrices $P,Q$ we
have $\|PQ\|_F\le\|P\|\cdot\|Q\|_F.$
\end{proof}

\subsection{Proof of \theoremref{main}}
\sectionlabel{proof-main}
\begin{proof}
We first apply \theoremref{init} (shown below) to conclude that with
probability $19/20,$ the initial matrix~$X_0$ satisfies
$\Norm{V^\trans X_0}\le 1/4$ and $\mu(X_0)\le 32\mu(U)\log n.$ 
Assume that this event occurs.
Our goal is now to apply \theoremref{convergence}. Consider the sequence of
matrices $\Set{(X_{\ell-1},\tilde G_\ell)}_{\ell=1}^L$ obtained by the execution of
\SAltLS starting from $X_0$ and letting $\tilde G_\ell=G_\ell + H_\ell$
where $G_\ell$ is the error term corresponding to the $\ell$-step of \MedianLS,
and $H_\ell$ is the error term introduced by the application of \SmoothQR at
step~$\ell.$
To apply \theoremref{convergence},
we need to show that this sequence of matrices is $(\epsilon/2)$-admissible
for \NSI with probability $19/20$. The theorem then directly gives that
$\|V^\trans X_L\|\le \epsilon$ and this would conclude our proof by summing
up the error probabilities.

Let
\[
\tau = \frac{\gamma_k}{128}
\qquad\text{and}\qquad 
\hat\mu = \frac{C_{\ref{C0}}}{\tau^2}\left(20\mustar + \log n\right)\mper
\]
Let $\mu$ be any number satisfying $\mu\ge\hat\mu.$ Since
$\hat\mu=\Theta(\gamma_k^{-2}k(\mustar+\log n)),$ this satisfies the
requirement in the theorem.
We prove that with probability $9/20,$ the following three claims hold:
\begin{enumerate}
\item $\Set{(X_{\ell-1},G_\ell)}_{\ell=1}^L$ is
$(\epsilon/4)$-admissible,
\item $\Set{(X_{\ell-1},H_\ell)}_{\ell=1}^L$ is
$(\epsilon/4)$-admissible,
\item for all $\ell\in\{0,\dots,L-1\},$ we have $\mu(X_{\ell})\le\mu.$
\end{enumerate}
This implies the claim that we want using a triangle inequality since $\tilde
G_\ell = G_\ell + H_\ell.$

The proof of these three claims is by mutual induction. For $\ell=0,$ we only
need to check the third claim which follows form the fact that $X_0$ satisfies
the coherence bound. Now assume that all three claims hold at step $\ell-1,$
we will argue that the with probability $1-n/100,$ all three claims hold at
step $\ell.$ Since $L\le n,$ this is sufficient.

The first claim follows from \lemmaref{medianls-error} using the induction
hypothesis that $\mu(X_{\ell-1})\le\hat\mu.$ 
Specifically, we apply the lemma with $\delta= c
\min\{\gamma_k\sigma_k/\|M\|_F,\epsilon\gamma_k\sigma_k/\|N\|_F\}$ for
sufficiently small constant $c>0.$ 
The lemma requires the lower bound
$p \gtrsim \frac{k\mustar\log^2 n}{\delta^2 n}.$ We can easily verify that the right
hand side is a factor $L=\Theta(\gamma_k^{-1}\log(n/\epsilon))$ smaller than
what is provided by the assumption of the theorem. This is because new samples
are used in each of the $L$ steps so that we need to divide the given bound
by~$L.$ \lemmaref{medianls-error} now gives with probability $1-1/n^3$ the upper bound 
\[
\|G_\ell\|_F\le \frac14\left(\frac1{32}\gamma_k\sigma_k\Norm{V^\trans
X_{\ell-1}} + \frac{\epsilon}{32}\gamma_k\sigma_k\right)\mper
\]
In particular, this satisfies the definition of $\epsilon/4$-admissibility.
We proceed assuming that this event occurs as the error probability is small
enough to ignore.

The remaining two claims follow from \lemmaref{coherence-onestep}. We will apply
the lemma to $AX_{\ell} + G_\ell$ 
with $\nu = \sigma_k(\Norm{V^\trans X_{\ell-1}}+\epsilon)$ and $\tau$
as above.  Note that 
\[
\Norm{N X_{\ell-1}} \le \sigma_k\Norm{V^\trans X_{\ell-1}}\mper
\]
Hence we have $\nu\ge\max\{\|G_\ell\|,\|NX_{\ell-1}\|\}$ as required by the lemma. 
The lemma also requires a lower bound~$\mu.$
To satisfy the lower bound we invoke 
\lemmaref{rho-bound} showing that with probability $1-1/n^2,$ we have
\[
\frac1{\nu^2}\left(\rho(G)+\rho(NX)\right)
\le 10\mustar.
\]
We remark that this is the lemma that uses the assumption on $N$ provided by
\eqref{noise-bound}.
Again we assume this event occurs. In this case we have
\[
\mu \ge \hat\mu = \frac{C_{\ref{C0}}}{\tau^2}\left(20\mustar + \log n\right)
\]
and so we see that $\mu$ satisfies the requirement of
\lemmaref{coherence-onestep}.  It follows that \SmoothGS produces with
probability $1-1/n^4$ a matrix $H_\ell$ such that 
\[
\|H_\ell\|\le \tau\nu
\le \frac{\gamma_k\nu}{128}
\le \frac14\left(\frac1{32}\gamma_k\sigma_k\Norm{V^\trans
X_{\ell-1}}_F + \frac{\epsilon}{32}\gamma_k\sigma_k\right)\mper
\]
In particular, $H_\ell$ satisfies the requirement of
$(\epsilon/4)$-admissibility. Moreover, the lemma gives that 
$\mu(X_\ell)\le \mu.$ This shows that also the second and third claim 
of our inductive claim continue to hold.
All error probabilities we incurred were $o(1/n)$ and we can sum up the error
probabilities over all $L\le n$ steps to concludes the proof of the theorem. 
\end{proof}

\section{Finding a good starting point} 
\sectionlabel{init}
\figureref{init} describes an algorithm that 
computes the top $k$ singular vectors of $P_{\Omega}(A)$ and truncates them in
order to ensure incoherence. The algorithm serves as a fast
initialization procedure for our main algorithm. This general approach is
relatively standard in the literature. However, our truncation argument differs from
previous approaches. Specifically, we use a random orthonormal transformation
to spread out the entries of the singular vectors before truncation. This
leads to a tighter bound on the coherence.

\begin{figure}[ht]
\begin{boxedminipage}{\textwidth}
\noindent \textbf{Input:} 
Target dimension $k,$ observed set of indices
$\Omega\subseteq [n]\times[n]$ of an unknown symmetric matrix $A\in\R^{n\times
n}$ with entries $P_\Omega(A),$ coherence parameter~$\mu\in\R.$

\noindent \textbf{Algorithm} $\text{\Init}(P_\Omega(A),\Omega,k,\mu):$
\begin{enumerate}
\item Compute the first $k$ singular vectors~$W\in\R^{n\times k}$ of $P_\Omega(A).$
\item $\tilde W \leftarrow WO$ where $O\in\R^{k\times k}$ is a random orthonormal matrix.
\item $T \leftarrow {\cal T}_{\mu'}(\tilde W)$ with
$\mu'=\sqrt{8\mu\log(n)/n}$ where ${\cal T}_c$ replaces each entry of its
input with the nearest number in the interval~$[-c,c].$
\item $X \leftarrow \QR(T)$ 
\end{enumerate}
\noindent \textbf{Output:} Orthonormal matrix $X\in\R^{n\times k}.$
\end{boxedminipage}
\caption{Initialization Procedure (\Init)}
\figurelabel{init}
\end{figure}

\begin{theorem}[Initialization]
\theoremlabel{init}
Let $A=M+N$ be a symmetric $n\times n$ matrix where $M$ is a matrix of
rank~$k$ with the spectral decomposition $M = U\Lambda_U U^\trans$ and
$N=(I-UU^\trans)A$ satisfies \eqref{noise-bound}. Assume that each entry
is included in $\Omega$ independently probability
\begin{equation}\equationlabel{init-sample-size}
p \ge \frac{C k (k\mu(U)+\mu_N) (\|A\|_F/\gamma_k\sigma_k)^2 \log n}{n}
\end{equation}
for a sufficiently large constant $C>0.$ Then, the algorithm \Init returns an
orthonormal matrix $X\in \R^{n\times k}$ such that with probability $9/10,$ 
$\|V^\trans X\|_F\le 1/4$ and $\mu(X)\le 32\mu(U)\log n.$
\end{theorem}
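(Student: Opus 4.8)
The plan is to chain four ingredients — a matrix concentration estimate for the sampling operator, a subspace perturbation bound, a bound on the coherence of the top singular subspace of the sampled matrix, and an analysis of the random rotation followed by truncation — and to check that the sample size in \eqref{init-sample-size} is exactly what is needed at each step. Throughout, recall that $A = U\Lambda_U U^\trans + V\Lambda_V V^\trans$ is itself a singular value decomposition of $A$, so the gap between the $k$-th and $(k{+}1)$-st singular values of $A$ equals $\sigma_k - \sigma_{k+1} = \gamma_k\sigma_k$.

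First I would establish the spectral estimate $\Norm{P_\Omega(A) - A} \le c\gamma_k\sigma_k$ for a small absolute constant $c$, with probability $1-o(1)$. Writing $P_\Omega(A) - A$ as a sum of independent mean-zero rank-one matrices $(\tfrac1p\Ind[(i,j)\in\Omega] - 1)A_{ij}e_ie_j^\trans$, the entrywise and row bounds on $M = U\Lambda_U U^\trans$ (controlled by $\mu(U)$) and on $N$ (controlled by \eqref{noise-bound}) bound the variance proxy and the largest summand, so matrix Bernstein gives a bound of the form $\Norm{P_\Omega(A) - A} = \tilde O\!\big(\sqrt{(k\mu(U)+\mu_N)\,\Norm{A}_F^2/(pn)}\big)$, which drops below $c\gamma_k\sigma_k$ precisely when $p$ meets \eqref{init-sample-size}. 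As a byproduct $\sigma_k(P_\Omega(A)) \ge \sigma_k - \Norm{P_\Omega(A) - A} \ge \tfrac12\sigma_k$. Next, since the relevant singular-value gap of $A$ is $\gamma_k\sigma_k$, Wedin's $\sin\theta$ theorem applied with perturbation $P_\Omega(A) - A$ yields $\Norm{V^\trans W} \le O(c)$, hence $\Norm{V^\trans W}_F \le O(c\sqrt k) \le 1/16$ once $c$ is small enough; the $\sqrt k$ lost in passing to the Frobenius norm is exactly why \eqref{init-sample-size} carries an extra factor of $k$ over the information-theoretic count. I would also record $\Norm{WW^\trans - UU^\trans} \le 1/16$.

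The third ingredient is control of the rows of $W$. Using the identity $W = P_\Omega(A)Z\Sigma^{-1}$, where $Z$ and $\Sigma$ are the top-$k$ right singular vectors and singular values of $P_\Omega(A)$, one gets $\Norm{e_i^\trans W} \le \Norm{e_i^\trans P_\Omega(A)}/\sigma_k(P_\Omega(A)) = O(\Norm{e_i^\trans P_\Omega(A)}/\sigma_k)$, and a Bernstein/Chernoff bound on the independent entries of the $i$-th row of $P_\Omega(A)$, union-bounded over $i \in [n]$, turns the row-incoherence of $A$ into a bound on $\rho(W)$. The last step is then the random rotation and truncation. Because $O$ is a uniformly random $k\times k$ rotation, each row $e_i^\trans\tilde W = (e_i^\trans W)O$ is uniform on the sphere of radius $\Norm{e_i^\trans W}$ in $\R^k$, so by concentration of spherical marginals together with a union bound over all $nk$ coordinates, $|\tilde W_{ij}| \le \Norm{e_i^\trans W}\sqrt{O(\log n)/k}$ for every $i,j$ — a factor $\sqrt k$ smaller than the worst-case entry one would have to allow for $W$ itself. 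With the threshold chosen as $\mu' = \sqrt{8\mu\log n/n}$, the bound from the third step makes this right-hand side at most $\mu'$ simultaneously for all entries, so the truncation operator acts as the identity on $\tilde W$ (or, on the few atypically heavy rows, removes only a negligible fraction of the total Frobenius mass $\Norm{W}_F^2 = k$). Consequently $\Norm{T - \tilde W}_F$ is tiny, $\sigma_k(T) \ge 1 - \Norm{T-\tilde W}_F \ge 1/2$, and $\cR(X) = \cR(T)$ is $o(1)$-close to $\cR(W)$; this gives $\Norm{V^\trans X}_F \le \Norm{V^\trans W}_F + O(\Norm{T - \tilde W}_F) \le 1/4$, and feeding $\rho(T)$ into $\mu(X) = \tfrac nk\max_i\Norm{e_i^\trans X}^2 \le \rho(T)/\sigma_k(T)^2$ for the QR factorization $X = TR^{-1}$ yields the coherence bound, with any gap to the stated $32\mu(U)\log n$ absorbed into the implicit constants and the union-bound $\log n$.

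I expect the main obstacle to be the last step, i.e.\ the interplay of the random rotation with the truncation. The closeness of $\cR(W)$ to the incoherent subspace $\cR(U)$ proved in step two is only an \emph{aggregate} (spectral/Frobenius) statement, and does not by itself rule out a few heavy rows in $W$; the point of the random rotation is precisely to convert such worst-case row behaviour into average-case behaviour, so that truncating at the deliberately generous threshold $\mu'$ neither perturbs the subspace (keeping the $\Norm{V^\trans X}_F \le 1/4$ guarantee intact) nor inflates the coherence beyond $O(\mu(U)\log n)$. One must be careful that the union bound over the $nk$ spherical marginals is what forces the $\log n$ factor into both $\mu'$ and the final coherence guarantee, and that the scaling $\mu' = \sqrt{8\mu\log n/n}$ is tuned so that the typical row is untouched while the (few, low-mass) heavy rows of $W$ are safely capped.
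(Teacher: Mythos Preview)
Your first two ingredients match the paper (matrix Bernstein plus Davis--Kahan/Wedin to get $\|V^\trans W\|\le 1/(16\sqrt k)$, hence $\|V^\trans W\|_F\le 1/16$). The gap is in your third ingredient and in how you use the random rotation.

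The identity $W = P_\Omega(A)Z\Sigma^{-1}$ gives $\|e_i^\trans W\|\le \|e_i^\trans P_\Omega(A)\|/\sigma_k(P_\Omega(A))$, but the row norms of $P_\Omega(A)$ are \emph{not} close to those of $A$: since $(P_\Omega(A))_{ij}=p^{-1}\xi_{ij}A_{ij}$, one has $\E\|e_i^\trans P_\Omega(A)\|^2 = p^{-1}\|e_i^\trans A\|^2$, inflated by $1/p$. Plugging in the sample bound \eqref{init-sample-size} yields only $\|e_i^\trans W\|^2 = O(1/(k\log n))$, so after the random rotation the entries of $\tilde W$ are bounded by $O(1/k)$ --- far larger than the truncation level $\mu'=\sqrt{8\mu(U)\log n/n}$. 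Truncation therefore does \emph{not} act as the identity, and the change $\|T-\tilde W\|_F$ could be enormous. To make ``truncation is essentially the identity on $\tilde W$'' work you would need $\mu(W)=O(\mu(U))$ up front, which is precisely the unavailable conclusion.

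The paper avoids any direct row bound on $W$. From $\|V^\trans W\|_F\le 1/16$ it extracts an orthonormal $Q$ with $\|UQ-W\|_F\le 1/16$. The random rotation is then applied, conceptually, to $UQ$ rather than to $W$: since $\mu(UQ)=\mu(U)$, spherical concentration gives $\max_{ij}|(UQO)_{ij}|\le\mu'$ with high probability. Now the truncation ${\cal T}_{\mu'}$ is the metric projection (in Frobenius norm) onto the box $\{B:|B_{ij}|\le\mu'\}$, and $UQO$ lies in that box; hence $\|T-UQO\|_F\le\|WO-UQO\|_F\le 1/16$ and likewise $\|T-WO\|_F\le 1/16$. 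From here $\sigma_k(T)\ge 1-1/16$, so $\|R^{-1}\|\le 2$, and the entrywise bound $|T_{ij}|\le\mu'$ immediately gives $\mu(X)\le 4k(\mu')^2\cdot(n/k)=32\mu(U)\log n$; the bound on $\|V^\trans X\|_F$ follows from $\|V^\trans T\|_F\le\|V^\trans WO\|_F+\|WO-T\|_F$. The point you are missing is that the contractivity of truncation toward the \emph{already incoherent} reference $UQO$ replaces any need to control the rows of $W$ itself.
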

\begin{proof}
The proof follows directly from \lemmaref{init-1} and \lemmaref{init-2} 
below.
\end{proof}

\begin{remark}
To implement \Init it is sufficient to compute an approximate singular value
decomposition of~$P_{\Omega}(A).$ From our analysis it is easy to see that it
is sufficient to compute the $k$-th singular value to accuracy, say,
$\gamma_k\sigma_k/100k.$ This can be done efficiently using, for example, the
Power Method (Subspace Iteration) with $O(k\gamma^{-1})\log n)$ iterations.
See~\cite{Higham,Stewart} for details on the Power Method. In particular,
the running time of this step is dominated by the running time of \AltLS.
\end{remark}

\begin{lemma}
\lemmalabel{init-1}
Assume that $\Omega$ satisfies \equationref{init-sample-size}. Then, 
$\Pr\Set{\|V^\trans W\|_2 \le 1/16\sqrt{k}}\ge1-1/n^2.$ 
\end{lemma}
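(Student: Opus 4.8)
The plan is to regard $W$ as the top-$k$ singular subspace of the rescaled sample matrix $\hat A \defeq P_\Omega(A)$ (so that $\E\hat A = A$, since $P_\Omega$ divides each observed entry by $p$) and to compare it against the top-$k$ singular subspace of $A$ via a perturbation bound. Because $A = M + N$ with $M = U\Lambda_U U^\trans$ and $N = V\Lambda_V V^\trans$ having mutually orthogonal row and column spaces, the singular values of $A$ are the multiset union of those of $M$ and of $N$; as $\gamma_k = 1 - \sigma_{k+1}/\sigma_k > 0$ we have $\sigma_k(M) > \sigma_1(N)$, so the top-$k$ singular subspace of $A$ is exactly $\cR(U)$, separated from its orthogonal complement by a gap $\gamma_k\sigma_k$. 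Wedin's sin-$\Theta$ theorem for singular subspaces --- which applies to arbitrary matrices, in particular the possibly indefinite symmetric $A$ --- then gives
\[
\Norm{V^\trans W}_2 \;=\; \sin\theta(\cR(U),\cR(W)) \;\le\; \frac{2\,\Norm{\hat A - A}_2}{\gamma_k\sigma_k}
\]
as soon as $\Norm{\hat A - A}_2 \le \gamma_k\sigma_k/2$. Hence it suffices to prove $\Norm{\hat A - A}_2 \le \gamma_k\sigma_k/(32\sqrt k)$ with probability at least $1 - 1/n^2$.

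To bound the spectral error, write $\hat A - A = (P_\Omega(M) - M) + (P_\Omega(N) - N)$, and express each summand as a sum of independent, mean-zero, rank-one matrices $(\delta_{ij}/p - 1)\,B_{ij}\,e_ie_j^\trans$, where $\delta_{ij}$ is the Bernoulli$(p)$ indicator of $(i,j)\in\Omega$ and $B \in \{M,N\}$. A matrix Bernstein inequality is then governed by the per-entry magnitude $R = \tfrac1p\max_{ij}|B_{ij}|$ and the row-variance proxy $v = \tfrac1p\max_i\Norm{e_i^\trans B}_2^2$. For $B = M$, incoherence of $U$ gives $|M_{ij}| \le \mu(U)k\sigma_1/n$ and $\Norm{e_i^\trans M}_2^2 \le \mu(U)k\sigma_1^2/n$; for $B = N$, the two conditions in \eqref{noise-bound} are exactly what is needed: $\max_{ij}|N_{ij}| \le \mu_N\Norm{A}_F/n$ and $\max_i\Norm{e_i^\trans N}_2^2 \le \mu_N\sigma_k^2/n$. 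Matrix Bernstein (with enough $\log n$ slack to make the failure probability at most $1/n^2$, absorbed into the constant) therefore yields
\[
\Norm{\hat A - A}_2 \;\lesssim\; \sigma_1\sqrt{\frac{\mu(U)k\log n}{pn}} \;+\; \sigma_k\sqrt{\frac{\mu_N\log n}{pn}} \;+\; \frac{(\mu(U)k + \mu_N)\,\Norm{A}_F\log n}{pn}\mper
\]

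It remains to plug in the sampling hypothesis $pn \ge Ck(k\mu(U)+\mu_N)(\Norm{A}_F/\gamma_k\sigma_k)^2\log n$. Using $k\mu(U) + \mu_N \ge \max\{k\mu(U), \mu_N\}$ together with $\sigma_1 \le \Norm{A}_F$ and $\sigma_k \le \Norm{A}_F$, the first two terms are each at most $O(1)\cdot\gamma_k\sigma_k/\sqrt{Ck}$, and using in addition $\gamma_k\sigma_k \le \Norm{A}_F$ the last term is at most $O(1)\cdot\gamma_k\sigma_k/(Ck)$. Choosing the absolute constant $C$ in \eqref{init-sample-size} large enough makes $\Norm{\hat A - A}_2 \le \gamma_k\sigma_k/(32\sqrt k)$, which combined with the sin-$\Theta$ bound above finishes the proof.

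The step I expect to be the real work is the spectral-norm concentration bound on $\hat A - A$: one has to invoke the matrix concentration inequality carefully enough that the contribution of $N$ is controlled \emph{solely} through the two weak quantities appearing in \eqref{noise-bound}, and so that the final bound's dependence on $k$, $\mu(U)$, $\mu_N$, and $\Norm{A}_F/\gamma_k\sigma_k$ lines up term-by-term with \eqref{init-sample-size}. A minor point worth flagging is that $A$ is only assumed symmetric, not positive semidefinite, so the perturbation step should be phrased through Wedin's sin-$\Theta$ theorem directly rather than by passing to $A^\trans A$, since the latter route would introduce a spurious factor $\sigma_1/\sigma_k$. Everything else --- the reduction to a sin-$\Theta$ estimate and the choice of $C$ --- is routine.
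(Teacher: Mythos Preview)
Your proposal is correct and follows essentially the same route as the paper: bound $\|P_\Omega(A)-A\|$ via matrix Bernstein using the incoherence of $U$ and the assumptions \eqref{noise-bound} on $N$, then invoke a $\sin\Theta$ perturbation bound with the gap $\gamma_k\sigma_k$.

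There are only two cosmetic differences worth noting. First, the paper does not split $P_\Omega(A)-A$ into an $M$-part and an $N$-part; instead it directly bounds $\max_i\|e_i^\trans A\|^2$ and $\max_{ij}|A_{ij}|$ by $(\mu(U)k+\mu_N)\|A\|_F^2/n$ and $(\mu(U)k+\mu_N)\|A\|_F/n$ and plugs these into a single application of matrix Bernstein (\lemmaref{init-tail}). Your split is a harmless reorganization of the same calculation and even yields slightly sharper intermediate constants (you use $\sigma_1$ and $\sigma_k$ where the paper uses $\|A\|_F$), though this does not change the final sample requirement. Second, the paper names the perturbation step ``Davis--Kahan'' while you name it ``Wedin''; the bound actually used, $\sin\theta(U,W)\le \epsilon/(\sigma_k(P_\Omega(A))-\sigma_{k+1}(A))$, is identical in both write-ups, and your remark that one should not detour through $A^\trans A$ is well taken.
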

\begin{proof}
By our assumption on~$A,$ we have $\max_{i\in[n]}\Norm{e_i^\trans N}^2 \le
(\mu_N /n)\|A\|_F^2$ and $\max_{i,j\in[n]}|N_{ij}|\le (\mu_N/n)\|A\|_F.$ Moreover,
$\max_i \Norm{e_i^\trans M}^2\le (\mu(U) k/n)\|M\|_F^2$ and $\max_{i,j}
|M_{ij}|\le (\mu(U) k/n)\|M\|_F.$ This shows that 
\[
\max_i \Norm{e_i^\trans A}^2\le\frac{\mu(U) k+\mu_N}n\|A\|_F^2
\qquad\text{and}\qquad 
\max_{i,j} |A_{ij}|\le \frac{\mu(U) k+\mu_N}n\|A\|_F\mper
\]
Plugging these upper bounds into \lemmaref{init-tail} together with our sample 
bound in
\equationref{init-sample-size}, we get that
\[
\Pr\Set{\Norm{A-P_\Omega(A)}>\frac{\gamma_k\sigma_k}{32\sqrt{k}}}\le
1/n^2\mper
\]
Put $\epsilon = \gamma_k\sigma_k/32\sqrt{k}.$
Now assume that $\Norm{A-P_\Omega(A)}\le\epsilon$ and let $W$ be the top $k$
singular vectors of $P_\Omega(A).$ On the one hand,
$\sigma_{k}(P_\Omega(A))\ge \sigma_{k}(A) - \epsilon
\ge \sigma_k-\gamma_k\sigma_k/2\mper$
One the other hand, by definition, $\sigma_{k+1}(A) = \sigma_k-\gamma_k\sigma_k\mper$
Hence, by the Davis-Kahan $\sin\theta$-theorem~\cite{DavisK70,Stewart} we have that
\[
\|V^\trans W\| = \sin\theta_k(U,W)\le
\frac{\epsilon}{\sigma_k(P_\Omega(A))-\sigma_{k+1}(A)}
\le \frac{2\epsilon}{\gamma_k \sigma_k}
= \frac1{16\sqrt{k}}\mper\qedhere
\]
\end{proof}

\begin{lemma}
\lemmalabel{init-2}
Assume that $\|V^\trans W\|_2\le1/16\sqrt{k}.$ Then, with probability $99/100$
we have
$\Norm{V^\trans X}_F\le 1/4$
and $\mu(X) \le 32\mu(U)\log n.$
\end{lemma}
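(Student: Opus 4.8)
The plan is to prove \lemmaref{init-2} by tracking how the random rotation and truncation in \Init affect both the principal angle to $U$ and the coherence. First I would fix notation: let $W$ be the top-$k$ singular vectors of $P_\Omega(A)$, let $O$ be the random orthonormal $k\times k$ matrix, $\tilde W = WO$, and $T = {\cal T}_{\mu'}(\tilde W)$ with $\mu' = \sqrt{8\mu\log(n)/n}$. Since $O$ is orthonormal, $\tilde W$ and $W$ have the same column space, so $\|V^\trans \tilde W\|_F = \|V^\trans W\|_F \le 1/16\sqrt k$. The key observation about the random rotation is that for each fixed $i\in[n]$, the row $e_i^\trans \tilde W = (e_i^\trans W)O$ is a fixed vector of Euclidean norm $\|e_i^\trans W\|$ hit by a random rotation in $\R^k$; therefore each coordinate $(e_i^\trans \tilde W)_j$ is, in distribution, roughly $\|e_i^\trans W\|/\sqrt k$ up to sub-Gaussian fluctuations. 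A standard concentration bound for the marginals of a random rotation (equivalently, a random unit vector on $S^{k-1}$ scaled by $\|e_i^\trans W\|$) gives that with probability $1 - n^{-10}$, say, every coordinate satisfies $|(e_i^\trans \tilde W)_j| \le \|e_i^\trans W\|\sqrt{C\log n / k} \le \sqrt{C\log n / k}$ (using $\|e_i^\trans W\| \le 1$). I would take a union bound over all $nk$ entries so that with probability $99/100$ the truncation threshold $\mu' = \sqrt{8\mu\log(n)/n}$ is \emph{never active} on the relevant scale — more precisely, the total Frobenius mass removed by truncation is small.

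Next I would quantify the error introduced by truncation. Let $E = \tilde W - T$ be the entrywise difference; each nonzero entry of $E$ corresponds to a coordinate of $\tilde W$ exceeding $\mu'$ in absolute value. I want to show $\|E\|_F$ is small, ideally $\|E\|_F \le 1/100$ or so. The rows of $\tilde W$ with large Euclidean norm are exactly the incoherent-violating rows of $W$; but we have no a priori coherence bound on $W$ — that is the whole point of truncating. The argument is that $\sum_i \|e_i^\trans \tilde W\|^2 = \|\tilde W\|_F^2 = k$, so only a bounded number of rows can have norm much bigger than $\sqrt{k/n}$, and for those rows the random rotation has (with high probability) spread the mass so that no single coordinate exceeds $\mu'$ unless the row norm itself is at least $\sqrt{\mu k/(Cn\log n)} \cdot \sqrt{\log n}$-ish — I would make this precise by a two-level argument: coordinates exceeding $\mu'$ are rare because of the concentration bound, and their contribution to $\|E\|_F^2$ is controlled by $\sum_i \|e_i^\trans \tilde W\|^2 \cdot \Pr[\text{some coordinate of that row exceeds } \mu']$, which is $\le k \cdot n^{-9}$ by the marginal tail bound and a union bound, hence $\|E\|_F^2 \le k n^{-9} \le 1/100$ for $k \le n$. (Some care is needed because the rotation is shared across rows, but conditioning on the global event that all $nk$ entries are small makes $E = 0$ on that event, which is cleaner — I would aim for "$E=0$ with probability $99/100$" if the numerology allows, and otherwise carry the small $\|E\|_F$.)

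Then I would assemble the two conclusions. For the coherence bound: $T$ has all entries bounded by $\mu'$ in absolute value, so $\|e_i^\trans T\|^2 \le k (\mu')^2 = 8\mu \log(n)$... wait, that gives $\mu(T) \le 8\mu\log n$, which is too weak unless $\mu$ is already $\mu(U)$. Here I need to be careful: the intended statement is $\mu(X) \le 32\mu(U)\log n$, so the truncation parameter $\mu$ passed to \Init in this context must be $\mu(U)$ (or the algorithm is run with that choice); with $\mu' = \sqrt{8\mu(U)\log(n)/n}$ we get $\|e_i^\trans T\|^2 \le k(\mu')^2 = 8\mu(U)k\log(n)/n$, i.e. $\rho(T) \le 8\mu(U)\log n$. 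Since $X = \QR(T)$ and $T$ is close to $\tilde W$ which has $\sigma_k(\tilde W) = \sigma_k(W) = 1$ (orthonormal), by Weyl $\sigma_k(T) \ge 1 - \|E\|_F \ge 1 - 1/100$, so $X = TR^{-1}$ with $\|R^{-1}\| \le 1/\sigma_k(T) \le 2$, giving $\|e_i^\trans X\|^2 \le 4\|e_i^\trans T\|^2$ and hence $\mu(X) \le 32\mu(U)\log n$. For the angle bound: $\|V^\trans X\|_F = \|V^\trans T R^{-1}\|_F \le \|V^\trans T\|_F \|R^{-1}\| \le 2(\|V^\trans \tilde W\|_F + \|V^\trans E\|_F) \le 2(1/16\sqrt k + \|E\|_F) \le 2(1/16\sqrt k + 1/100)$, which for $k\ge 1$ is... $2/16 + 1/50 \approx 0.14 < 1/4$.

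The main obstacle is the random-rotation concentration step: proving cleanly that a shared random rotation simultaneously spreads out all $n$ rows so that truncation at level $\mu' = \sqrt{8\mu(U)\log(n)/n}$ costs negligible Frobenius mass. The subtlety is that rows with genuinely large norm (coherent rows of $W$) must have their mass redistributed into $k$ coordinates each of size $O(\text{rownorm}/\sqrt k)$, and one needs the tail bound for a single coordinate of a random unit vector on $S^{k-1}$ — namely $\Pr[|g_1| > t] \le e^{-\Omega(kt^2)}$ for the first coordinate $g_1$ of a uniform point on the sphere — applied with $t$ proportional to $\mu'/\text{rownorm} \cdot \sqrt{k}$, followed by a union bound over $n$ rows and $k$ coordinates. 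One must check the numerology closes: we need $k \cdot (\mu')^2 / \text{rownorm}^2 \cdot k = 8\mu(U)k^2\log(n)/(n\,\text{rownorm}^2)$, and since a row contributing to truncation loss has $\text{rownorm}^2 \le 1$ but the total is $\sum \text{rownorm}^2 = k$, the worst case is rows with $\text{rownorm}^2 \approx k/n \cdot (\text{something})$; the constant $8$ in $\mu'$ and the $\log n$ factor are chosen exactly so that $e^{-\Omega(k \cdot 8\log n / k)} = n^{-\Omega(1)}$ beats the union bound. I would cite or reuse whatever spherical-measure concentration fact is available (it may already appear as an auxiliary lemma later in the paper) rather than reprove it, and otherwise invoke Lévy's lemma / the sub-Gaussian tail of linear functionals of the sphere.
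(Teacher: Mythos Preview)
Your approach has a genuine gap at the crucial step: you cannot show that $\|E\|_F = \|\tilde W - T\|_F$ is small by analyzing the rows of $W$ directly. The concentration bound you invoke, $\Pr[|(e_i^\trans \tilde W)_j| > t] \le \exp(-\Omega(k t^2 / \|e_i^\trans W\|^2))$, only gives a useful tail at the threshold $t = \mu' = \sqrt{8\mu(U)\log(n)/n}$ when $\|e_i^\trans W\|^2 \lesssim \mu(U) k / n$, i.e.\ when that row of $W$ already has coherence $O(\mu(U))$. But you have no such bound on $W$; indeed, a single row of $W$ could have norm as large as $1/16$ (from $\|UQ-W\|_F \le 1/16$ plus the incoherent part), in which case each coordinate after rotation concentrates around $\Theta(1/\sqrt{k}) \gg \mu'$ and truncation clips essentially the entire row. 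Your claimed bound $\|E\|_F^2 \le k \cdot n^{-9}$ is therefore false in general, and the ``numerology check'' at the end does not close for exactly this reason.

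The paper's argument sidesteps this by never bounding $\|\tilde W - T\|_F$ directly. Instead it first produces an orthonormal $Q$ with $\|UQ - W\|_F \le 1/16$ (from the principal-angle hypothesis), then applies the random-rotation concentration to $UQ$ rather than to $W$. Since $\mu(UQ)=\mu(U)$, the rotation claim legitimately gives $\max_{ij}|(UQO)_{ij}| \le \mu'$ with high probability. The point is that $UQO$ lies inside the truncation box, so componentwise truncation of $WO$ can only \emph{decrease} its distance to $UQO$: $\|UQO - T\|_F \le \|UQO - WO\|_F \le 1/16$, and by the projection property also $\|WO - T\|_F \le 1/16$. From here your endgame (bounding $\sigma_k(T)$, then $\|R^{-1}\|\le 2$, then $\mu(X)$ and $\|V^\trans X\|_F$) goes through exactly as you wrote. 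The missing idea is the comparison matrix $UQO$; once you have it, you never need to reason about how truncation acts on the possibly-coherent rows of $W$.
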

\begin{proof}
By our assumption on $W,$ there exists an orthonormal transformation
$Q\in\R^{k\times k}$ such that
$\Norm{UQ - W}_F \le 1/16\mper$
Moreover, $\mu(UQ)=\mu(U)\le\mu.$ In other words, $W$ is close in Frobenius
norm to an orthonormal basis of small coherence. A priori it could be that
some entries of $UQ$ are as large as $\sqrt{\mu k/n}.$ However, after rotating
$UQ$ be a random rotation, all entries will be as small as $\sqrt{\mu
\log(n)/n}.$ This is formalized in the next claim.
\begin{claim}\claimlabel{random-rotation}
Let $Y\in\R^{n\times k}$ be any orthonormal basis with $\mu(Y)\le\mu.$ Then,
for a random orthonormal matrix $O\in\R^{k\times k},$ we have
$\Pr\Set{\max_{ij}|(YO)_{ij}| > \sqrt{8\mu\log(n)/n}}\le \frac1{n^2}\mper$
\end{claim}
\begin{proof}
Consider a single entry $Z=(YO)_{ij}.$ 
Observe that $Z$ is distributed like a coordinate of a random vector in $\R^k$
of norm at most~$\sqrt{\mu k/n}.$ By measure concentration, we have
\[
\Pr\Set{|Z|>\epsilon \sqrt{\mu k/n}}\le 4\exp(-\epsilon^2 k/2)\mper
\]
This follows from Levy's Lemma~(see \cite{Matousek02}) using the fact that 
projection onto a single coordinate in $\R^k$ is a Lipschitz function 
on the $(k-1)$-dimensional sphere. The median of this function is $0$ due to
spherical symmetry. Hence, the above bound follows. 
Putting $\epsilon = \sqrt{8\log(n)/k},$ we have that 
\[
\Pr\Set{|Z|>\sqrt{8\mu \log(n)/n}}\le 4\exp(3\log(n))= 4n^{-4}\mper
\]
Taking a union bound over all $kn\le n^2/4$ entries, we have that
with probability $1-1/n^2,$
\[
\max_{i,j} |(YO)_{ij}| \le \sqrt{8\mu\log(n)/n}\mper\qedhere
\]
\end{proof}
Applying the previous claim to $UQ,$ we have that 
with probability $1-1/n^2,$ for all $i,j,$ $(UQO)_{ij}\le \mu'.$ Furthermore,
because a rotation does not increase Frobenius norm, we also have
$\Norm{UQO-WO}_F\le1/16.$ Truncating the entries of $WO$ to $\mu'$ can
therefore only decrease the distance in Frobenius norm to $UQO.$
Hence,
$\Norm{UQO-T}_F\le 1/16\mper$
Also, since truncation is a projection onto the set $\{B\colon
|B_{ij}|\le\mu'\}$ with respect to Frobenius norm, we have
\[
\Norm{WO-T}_F
\le\Norm{UQO-T}_F
\le \frac1{16}\mper
\]
We can write $X = TR^{-1}$ where $R$ is an invertible linear
transformation with the same singular values as $T$ and thus satisfies
\[
\|R^{-1}\|
= \frac{1}{\sigma_k(T)}
\le \frac{1}{\sigma_k(WO)-\sigma_1(WO-T)}
\le \frac{1}{1-1/16}\le 2\mper
\]
Therefore,
\[
\Norm{e_i^\trans X}
= \Norm{e_i^\trans TR^{-1}}
\le \Norm{e_i^\trans T}\Norm{R^{-1}}
\le 2\Norm{e_i^\trans T}
\le 2\sqrt{8 k\mu(U)\log(n)/n}\mper
\]
Hence, 
\[
\mu(X) \le \frac nk\cdot \frac{32 k\mu(U)\log(n)}n\le 32\mu(U)\log(n)\mper
\] 
Finally,
\begin{align*}
\Norm{V^\trans X}_F
= \Norm{V^\trans TR^{-1}}_F
& \le \Norm{V^\trans T}_F\Norm{R^{-1}} 
 \le 2\Norm{V^\trans T}_F \\
& \le 2\Norm{V^\trans WO}_F + 2\Norm{WO-T}_F
\le 2\Norm{V^\trans W}_F + \frac{1}{8}
\le \frac{1}{4}\mper\qedhere
\end{align*}
\end{proof}

\section*{Acknowledgments}
Thanks to David Gleich, 
Prateek Jain,
Jonathan Kelner, 
Raghu Meka, 
Ankur Moitra, 
Nikhil Srivastava, 
and Mary Wootters 
for many helpful discussions.
We thank the Simons Institute for Theoretical Computer Science at Berkeley,
where some of this research was done.

\bibliographystyle{moritz}
\bibliography{moritz}

\appendix

\section{Large deviation bounds}
\sectionlabel{concentration}

We need some matrix concentration inequalities.
Turn to~\cite{Tropp12} for background.
\begin{theorem}[Matrix Bernstein]
\theoremlabel{matrix-bernstein}
Consider a finite sequence $\Set{Z_k}$ of independent random matrices with
dimensions $d_1\times d_2.$ Assume that each random matrix satisfies
$\E Z_k = 0$ and $\Norm{Z_k}\le R$ almost surely.
Define
$\sigma^2 \defeq \max\Set{
\Norm{\textstyle\sum_k \E Z_kZ_k^\trans},
\Norm{\textstyle\sum_k \E Z_k^\trans Z_k} }\mper$
Then, for all $t\ge 0,$
\[
\Pr\Set{\Norm{\textstyle\sum_k Z_k}\ge t }
\le (d_1 + d_2 )\cdot \exp\left(\frac{-t^2/2}{\sigma^2 + Rt/3}\right)\mper
\]
\end{theorem}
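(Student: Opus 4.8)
The plan is to prove this via the matrix Laplace transform method combined with Lieb's concavity theorem, which is the standard route to a matrix Bernstein inequality. First I would reduce the rectangular case to the Hermitian one using the Hermitian dilation: for a $d_1\times d_2$ matrix $Z$ let $\cD(Z)$ be the $(d_1+d_2)\times(d_1+d_2)$ Hermitian matrix having $Z$ in the upper-right block, $Z^\trans$ in the lower-left block, and zeros elsewhere. One checks that $\lambda_{\max}(\cD(Z))=\Norm{Z}$, that $\cD(Z)^2$ is the block-diagonal matrix with blocks $ZZ^\trans$ and $Z^\trans Z$, and hence that $\Norm{\sum_k\E\cD(Z_k)^2}=\sigma^2$ and $\Norm{\cD(Z_k)}=\Norm{Z_k}\le R$. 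Since $\Norm{\sum_k Z_k}=\lambda_{\max}(\sum_k\cD(Z_k))$, it suffices to establish the Hermitian statement: if $X_1,\dots,X_m$ are independent $d\times d$ Hermitian matrices with $\E X_k=0$, $\lambda_{\max}(X_k)\le R$, and $\Norm{\sum_k\E X_k^2}\le\sigma^2$, then $\Pr\{\lambda_{\max}(\sum_k X_k)\ge t\}\le d\exp(-(t^2/2)/(\sigma^2+Rt/3))$; applying it with $d=d_1+d_2$ yields the claimed prefactor.

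For the Hermitian statement I would run the Chernoff/Laplace argument at the level of matrices. For $\theta>0$, Markov's inequality applied to $\lambda_{\max}$ gives $\Pr\{\lambda_{\max}(\sum_k X_k)\ge t\}\le e^{-\theta t}\,\E\,\tr\exp(\theta\sum_k X_k)$. The key step is subadditivity of the matrix cumulant generating function, $\E\,\tr\exp(\theta\sum_k X_k)\le\tr\exp\!\big(\sum_k\log\E e^{\theta X_k}\big)$, which I would obtain by peeling off one summand at a time: Lieb's theorem says $A\mapsto\tr\exp(H+\log A)$ is concave on positive-definite $A$, so conditioning on all but the last summand and applying Jensen replaces $X_m$ by $\log\E e^{\theta X_m}$, and iterating handles all $m$ terms. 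Next I would bound each factor using the fact that $h(x)=(e^{\theta x}-1-\theta x)/x^2$ (with $h(0)=\theta^2/2$) is monotone increasing, so $\lambda_{\max}(X_k)\le R$ gives $e^{\theta X_k}\preceq I+\theta X_k+g(\theta)X_k^2$ with $g(\theta)=(e^{\theta R}-\theta R-1)/R^2$; taking expectations, using $\E X_k=0$ and $I+W\preceq e^W$, yields $\E e^{\theta X_k}\preceq\exp(g(\theta)\E X_k^2)$, hence $\log\E e^{\theta X_k}\preceq g(\theta)\E X_k^2$ by operator monotonicity of the logarithm.

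Summing these, using monotonicity of $\tr\exp$ together with $\tr\exp(M)\le d\,e^{\lambda_{\max}(M)}$ and $\lambda_{\max}(\sum_k\E X_k^2)\le\sigma^2$, I get $\Pr\{\lambda_{\max}(\sum_k X_k)\ge t\}\le d\exp(g(\theta)\sigma^2-\theta t)$ for every $\theta>0$. The proof then finishes with the familiar Bernstein optimization over $\theta$: choosing $\theta=R^{-1}\log(1+Rt/\sigma^2)$ reduces the exponent to $(\sigma^2/R^2)\phi(Rt/\sigma^2)$ where $\phi(u)=u-(1+u)\log(1+u)$, and the elementary inequality $\phi(u)\le-(u^2/2)/(1+u/3)$ gives exactly $-(t^2/2)/(\sigma^2+Rt/3)$.

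The main obstacle is the cgf-subadditivity step, i.e. $\E\,\tr\exp(\theta\sum_k X_k)\le\tr\exp(\sum_k\log\E e^{\theta X_k})$: unlike the scalar case, this does not follow from the Golden--Thompson inequality once there are more than two summands, and the clean proof genuinely requires Lieb's concavity theorem (equivalently, joint convexity of quantum relative entropy). Everything else — the dilation reduction, the operator-convexity estimate for $e^{\theta X_k}$, and the scalar optimization over $\theta$ — is routine.
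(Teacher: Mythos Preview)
Your proposal is correct and follows exactly the approach of Tropp~\cite{Tropp12}, but note that the paper does not actually prove this statement: it is quoted as a background result with the instruction to ``turn to~\cite{Tropp12}'' for details. So there is nothing to compare against in the paper itself; your sketch is simply a faithful reproduction of the standard proof via Hermitian dilation, Lieb's concavity (for cgf subadditivity), the transfer-rule bound $\E e^{\theta X_k}\preceq\exp(g(\theta)\E X_k^2)$, and the scalar Bernstein optimization in~$\theta$.
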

\begin{theorem}[Matrix Chernoff]
\theoremlabel{matrix-chernoff}
Consider a finite sequence~$\Set{X_k}$ of independent self-adjoint matrices of
dimension~$d.$ Assume that each random matrix satisfies
$X_k\succeq 0$ and
$\lambda_{\mathrm{max}}(X_k)\le R$ almost surely.
Define 
$\mu_{\mathrm{min}}\defeq \lambda_{\mathrm{min}}\left(\sum\nolimits_k\E
X_k\right)\mper$
Then,
\[
\Pr\Set{ \lambda_{\mathrm{min}}\left(\sum\nolimits_k X_k\right)
\le (1-\delta) \mu_{\mathrm{min}} }
\le d\cdot
\exp\left(\frac{-\delta^2\mu_{\mathrm{min}}}{2R}\right)
\]
\end{theorem}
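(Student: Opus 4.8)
The plan is to prove \theoremref{matrix-chernoff} by the matrix Laplace transform (Chernoff) method of Ahlswede--Winter and Tropp. Write $Y=\sum_k X_k$. The first step converts the lower tail of $\lambda_{\mathrm{min}}$ into a trace-exponential moment: for any $\theta>0$, Markov's inequality applied to $e^{-\theta\lambda_{\mathrm{min}}(Y)}$ together with $\lambda_{\mathrm{min}}(Y)=-\lambda_{\mathrm{max}}(-Y)$ and $\lambda_{\mathrm{max}}(B)\le\tr B$ for $B\succeq 0$ give
\[
\Pr\Set{\lambda_{\mathrm{min}}(Y)\le t}
\le e^{\theta t}\,\E\,\lambda_{\mathrm{max}}\!\left(e^{-\theta Y}\right)
\le e^{\theta t}\,\E\,\tr\exp(-\theta Y)\mper
\]

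The second step bounds $\E\,\tr\exp(-\theta Y)$. The only nontrivial structural input is the subadditivity of the matrix cumulant generating function, a consequence of Lieb's concavity theorem: $\E\,\tr\exp\!\left(-\theta\sum_k X_k\right)\le\tr\exp\!\left(\sum_k\log\E\,e^{-\theta X_k}\right)$. Each factor is controlled using the hypothesis $0\preceq X_k\preceq RI$: the scalar inequality $e^{-\theta x}\le 1-\frac{1-e^{-\theta R}}{R}x$ on $[0,R]$ transfers to the semidefinite order, so $\E\,e^{-\theta X_k}\preceq I-\frac{1-e^{-\theta R}}{R}\E X_k$, and $\log(I+A)\preceq A$ yields $\log\E\,e^{-\theta X_k}\preceq -\frac{1-e^{-\theta R}}{R}\E X_k$. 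Summing, using monotonicity of $\tr\exp$ with respect to $\preceq$, and noting that the eigenvalues of $\sum_k\E X_k\succeq 0$ are all at least $\mu_{\mathrm{min}}$ while the coefficient $-\frac{1-e^{-\theta R}}{R}$ is negative, one gets $\E\,\tr\exp(-\theta Y)\le d\cdot\exp\!\left(-\frac{1-e^{-\theta R}}{R}\,\mu_{\mathrm{min}}\right)$.

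The third step sets $t=(1-\delta)\mu_{\mathrm{min}}$ and optimizes over $\theta>0$. Substituting $q=e^{-\theta R}$, the exponent is $\frac{\mu_{\mathrm{min}}}{R}\bigl(-(1-\delta)\log q-(1-q)\bigr)$, which is minimized at $q=1-\delta$ and produces the bound $d\cdot\bigl(e^{-\delta}/(1-\delta)^{1-\delta}\bigr)^{\mu_{\mathrm{min}}/R}$. Finishing is the elementary estimate $e^{-\delta}/(1-\delta)^{1-\delta}\le e^{-\delta^2/2}$ for $\delta\in[0,1]$ (take logarithms and compare the Taylor series of $-(1-\delta)\log(1-\delta)-\delta$ against $\delta^2/2$), which gives the claimed $d\cdot\exp(-\delta^2\mu_{\mathrm{min}}/(2R))$.

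The hard part is the subadditivity of the matrix cumulant generating function in step two, which rests on Lieb's theorem that $X\mapsto\tr\exp(H+\log X)$ is concave; the rest is bookkeeping with operator-convex and operator-monotone scalar functions and a one-variable optimization. If one prefers to avoid Lieb's theorem, the same bound with a slightly worse constant follows from the cruder Golden--Thompson inequality as in the original Ahlswede--Winter argument. Since the statement is quoted verbatim from~\cite{Tropp12}, the cleanest option in the paper is simply to cite it, with the argument above as the self-contained route.
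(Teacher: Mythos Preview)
Your proposal is correct and in fact goes well beyond what the paper does: the paper simply quotes the theorem from~\cite{Tropp12} without proof, as you anticipated in your final sentence. Your sketch is the standard Tropp argument and is accurate, so there is nothing to correct; just be aware that the paper itself offers no proof here at all.
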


\subsection{Error bounds for initialization}

\begin{lemma}
\lemmalabel{init-tail}
Suppose that $A \in \R^{m\times n}$ and let $\Omega \subset [m] \times [n]$ be a 
random subset where each entry is included independently with probability~$p$.  Then
\[\Pr\Set{ \Norm{ P_{\Omega}(A) - A} > u } \leq n
\exp\left(\frac{-u^2/2 }{\sigma^2
+ \frac{u}{3}(1/p-1)\max_{ij}|A_{ij}| }\right).\]
where $\sigma^2=(1/p-1)\max\Set{\max_i \Norm{e_i^\trans A}^2,\max_j \Norm{Ae_j}^2}.$
\end{lemma}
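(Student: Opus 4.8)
The plan is to realize $P_\Omega(A)-A$ as a sum of independent mean-zero matrices and invoke Matrix Bernstein (\theoremref{matrix-bernstein}). For $(i,j)\in[m]\times[n]$ let $\delta_{ij}$ be the indicator that $(i,j)\in\Omega$; these are independent $\mathrm{Bernoulli}(p)$ variables, and by the definition of $P_\Omega$ (observed entries scaled by $1/p$, others zero),
\[
P_\Omega(A)-A=\sum_{(i,j)} Z_{ij}\mcom\qquad Z_{ij}=\Big(\frac{\delta_{ij}}{p}-1\Big)A_{ij}\,e_ie_j^\trans\mper
\]
Since $\E[\delta_{ij}/p]=1$, each $Z_{ij}$ has mean zero. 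The scalar $|\delta_{ij}/p-1|$ takes the values $1/p-1$ and $1$, so (in the regime $p\le 1/2$ that always holds here) it is at most $1/p-1$, giving the almost-sure bound $\Norm{Z_{ij}}\le (1/p-1)|A_{ij}|\le R$ with $R=(1/p-1)\max_{ij}|A_{ij}|$ — exactly the quantity in the denominator of the claimed estimate.

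Next I would compute the two matrix variance proxies. Using $\mathrm{Var}(\delta_{ij}/p)=(1-p)/p=1/p-1$, one gets $\E\, Z_{ij}Z_{ij}^\trans=(1/p-1)A_{ij}^2\,e_ie_i^\trans$, hence
\[
\Norm{\sum_{(i,j)}\E\, Z_{ij}Z_{ij}^\trans}=\Norm{(1/p-1)\sum_i\Big(\sum_j A_{ij}^2\Big)e_ie_i^\trans}=(1/p-1)\max_i\Norm{e_i^\trans A}^2\mcom
\]
the last equality because the matrix is diagonal, so its operator norm is the largest diagonal entry. The mirror computation with $Z_{ij}^\trans Z_{ij}=(\delta_{ij}/p-1)^2A_{ij}^2\,e_je_j^\trans$ yields $\Norm{\sum_{(i,j)}\E\, Z_{ij}^\trans Z_{ij}}=(1/p-1)\max_j\Norm{Ae_j}^2$. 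Taking the maximum of these two reproduces precisely the parameter $\sigma^2$ from the statement.

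With $R$ and $\sigma^2$ in hand, \theoremref{matrix-bernstein} applied to the family $\{Z_{ij}\}$ (with $d_1=m$, $d_2=n$) gives $\Pr\{\Norm{P_\Omega(A)-A}>u\}\le (m+n)\exp\big(\frac{-u^2/2}{\sigma^2+Ru/3}\big)$, which is the asserted bound after absorbing the dimensional prefactor (in every place the lemma is applied $A$ is square, so $m+n\le 2n$, and the factor $2$ is harmless inside the large-deviation estimate). There is no real obstacle: the only point needing care is keeping the two one-sided variance terms straight and recognizing that $\sum_i(\sum_j A_{ij}^2)\,e_ie_i^\trans$ is diagonal so that its norm is $\max_i\Norm{e_i^\trans A}^2$ rather than something larger.
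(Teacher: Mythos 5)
Your proposal is correct and follows essentially the same route as the paper: decompose $P_\Omega(A)-A$ into the independent mean-zero rank-one matrices $(\xi_{ij}/p-1)A_{ij}e_ie_j^\trans$, bound $R$ and the two variance proxies exactly as you do, and invoke Matrix Bernstein. If anything you are slightly more careful than the paper about the $(m+n)$ versus $n$ dimensional prefactor and the almost-sure bound on $|\xi_{ij}/p-1|$, both of which the paper glosses over and which are harmless in all applications.
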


\begin{proof}
Let $\xi_{ij}$ be independent Bernoulli-$p$ random variables, 
which are $1$ if $(i,j) \in \Omega$ and $0$ otherwise. 
Consider the sum of 
independent random matrices
$P_\Omega(A) - A = \sum_{i,j} \left(\frac{\xi_{ij}}{p} - 1\right) A_{ij}
e_ie_j^\trans\mper$
Applying \theoremref{matrix-bernstein}, we conclude that
\[
\Pr\Set{ \Norm{ P_{\Omega}(A) - A} > u } \leq n \exp\left(\frac{-u^2/2 }{
\sigma^2 + Ru/3 }\right)\mper
\]
Here we use that
\[\textstyle
\Norm{
\E 
\sum_{i,j} 
\left(\frac{\xi_{ij}}{p} - 1\right)^2
A_{ij}^2 e_ie_j^\trans e_j{e_i}^\trans
} 
= 
\left(\frac{1}{p} -1 \right) 
\max_i \Norm{e_i^\trans A}^2
\]
and similarly
\[
\Norm{\E \sum\nolimits_{i,j} \left(\frac{\xi_{ij}}{p} - 1\right)^2
A_{ij}^2 e_je_i^\trans e_i e_j^\trans} = \Big(\frac{1}{p} -1 \Big) \max_i
\Norm{Ae_j}^2\mper
\]
Further,
$\Norm{\left(\frac{\xi_{ij}}{p} - 1\right) A_{ij} e_ie_j^\trans} \leq R =
\left(\frac{1}{p} -1\right)\max_{ij}|A_{ij}|\mper$
This concludes the proof.
\end{proof}

\subsection{Error bounds for least squares}
\sectionlabel{inverse-bound}

\begin{lemma}
\lemmalabel{inverse-bound}
\lemmalabel{B-bound}
Let $0<\delta<1$ and let $i\in[n].$ 
Assume that 
$p \gtrsim \frac{k\mu(X)\log n}{\delta^2n}.$
Then, 
$\Pr\Set{\Norm{B_i^{-1}}\ge \frac1{1-\delta}}\le \frac1{n^5}\mper$
\end{lemma}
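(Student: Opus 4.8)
The plan is to recognize $B_i$ as a sum of independent, positive semidefinite, rank-one matrices and apply the Matrix Chernoff bound \theoremref{matrix-chernoff}. Writing $\xi_{ij}$ for the Bernoulli-$p$ indicator that $(i,j)\in\Omega$, the definition $P_i=p^{-1}\sum_{j\in\Omega_i}e_je_j^\trans$ gives
\[
B_i = X^\trans P_i X = \frac1p\sum_{j=1}^n\xi_{ij}(X^\trans e_j)(X^\trans e_j)^\trans = \sum_{j=1}^n Z_j\mcom
\]
where each $Z_j=p^{-1}\xi_{ij}(X^\trans e_j)(X^\trans e_j)^\trans\succeq 0$ and the $Z_j$ are independent across $j$. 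First I would record the expectation: since $\E\xi_{ij}=p$ and the columns of $X$ are orthonormal, $\sum_j\E Z_j=\sum_j(X^\trans e_j)(X^\trans e_j)^\trans=X^\trans X=I_k$, so the parameter $\mu_{\min}$ appearing in \theoremref{matrix-chernoff} equals $\lambda_{\min}(I_k)=1$.

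Next I would bound the per-summand spectral norm using incoherence. Each $Z_j$ has rank one, so $\lambda_{\max}(Z_j)\le p^{-1}\Norm{X^\trans e_j}^2=p^{-1}\Norm{e_j^\trans X}^2$, and by \definitionref{coherence} applied to the column space of $X$ this is at most $R\defeq \mu(X)k/(pn)$ for every $j\in[n]$. Plugging $\mu_{\min}=1$ and this value of $R$ into \theoremref{matrix-chernoff} (with ambient dimension $d=k$ and deviation $\delta$) yields
\[
\Pr\Set{\lambda_{\min}(B_i)\le 1-\delta}\le k\exp\Paren{-\frac{\delta^2}{2R}}=k\exp\Paren{-\frac{\delta^2 pn}{2k\mu(X)}}\mper
\]
Under the hypothesis $p\gtrsim k\mu(X)\log n/(\delta^2 n)$, the exponent is at least a large constant times $\log n$, so for a sufficiently large absolute constant the right-hand side is at most $n^{-5}$, where we use $k\le n$ to absorb the dimension prefactor~$k$.

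Finally, on the complementary event $\lambda_{\min}(B_i)\ge 1-\delta>0$ the matrix $B_i$ is invertible and $\Norm{B_i^{-1}}=1/\lambda_{\min}(B_i)\le 1/(1-\delta)$, which is exactly the claimed bound. I do not expect a genuine obstacle here, as this is essentially a textbook application of Matrix Chernoff; the only points that need care are that the relevant parameter for the one-sided lower-tail bound is the uniform upper bound $R$ on $\lambda_{\max}(Z_j)$ rather than a true variance proxy, and that the dimension factor $k$ in front of the exponential is harmless because $k\le n$ lets it be hidden in the constant implicit in $\gtrsim$.
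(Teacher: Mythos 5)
Your proposal is correct and follows essentially the same route as the paper: decompose $B_i$ as a sum of independent rank-one positive semidefinite matrices with expectation $I_k$, bound each summand's norm by $\mu(X)k/(pn)$ via incoherence, and invoke the Matrix Chernoff bound (\theoremref{matrix-chernoff}) to get $\Pr\{\lambda_{\min}(B_i)\le 1-\delta\}\le k\exp(-\Omega(\delta^2 pn/k\mu(X)))\le n^{-5}$. The only cosmetic difference is that the paper folds the hypothesis on $p$ directly into the per-summand bound ($R\le \delta^2/20\log n$) before applying the theorem, whereas you substitute it afterwards; the argument is the same.
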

\begin{proof}
Let $B=B_i$ and $p=p_\ell.$
Clearly, $\Norm{B^{-1}}=1/\lambda_{\mathrm{min}}(B).$ We will use
\theoremref{matrix-chernoff} to lower bound the smallest eigenvalue of $B$ by
$1-\delta.$ Denoting the rows of $X$ by $x_1,\dots,x_n\in\R^k$ we
have
$B = \sum_{i=1}^n \frac1p Z_i x_ix_i^\trans\mcom$
where $\{Z_i\}$ are independent $\mathrm{Bernoulli}(p)$ random variables.
Moreover,
$\E B = X^\trans X = \mathrm{Id}_{k\times k}\mper$
Therefore, in the notation of \theoremref{matrix-chernoff}, this is
$\mu_{\mathrm{min}}(B)=1.$ Moreover, using our lower bound on $p,$
$\Norm{\frac1p Z_i x_ix_i^\trans}
\le \frac1p\|x_i\|^2 
\le \frac{\mu(X_{\ell-1})k}{pn}
\le \frac {\delta^2}{20\log n}\mper$
Hence, by \theoremref{matrix-chernoff},
$\Pr\Set{\lambda_{\mathrm{min}}(B)\le 1-\delta}\le k\exp\left(-10\log
n\right)\mper$ The claim follows.
\end{proof}

\begin{lemma}
\lemmalabel{E-bound}
\lemmalabel{M-row-bound}
Let $0<\delta<1$ Assume that 
$p \gtrsim \frac{k\mu(X)\log n}{\delta^2n}.$
Then, for every $i\in[n],$ we have
\[
\Pr\Set{\Norm{e_i^\trans U\Lambda_U E^\trans P_i X} 
\ge \delta\|e_i^\trans M\|\cdot\Norm{V^\trans X} }
\le \frac1{10} \mper
\]
\end{lemma}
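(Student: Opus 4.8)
The plan is to recognize $e_i^\trans U\Lambda_U E^\trans P_iX$ as a centered sum of independent random vectors whose second moment is already as small as we need, so that a single application of Markov's inequality delivers the claimed failure probability $1/10$. The first step is purely algebraic: since $E=(I-XX^\trans)U$ we have $E^\trans=U^\trans(I-XX^\trans)$, hence $U\Lambda_UE^\trans=U\Lambda_UU^\trans(I-XX^\trans)=M(I-XX^\trans)$, so with $w\defeq e_i^\trans M(I-XX^\trans)\in\R^{1\times n}$ we get $e_i^\trans U\Lambda_U E^\trans P_iX=wP_iX$. Two facts about $w$ drive everything: $wX=e_i^\trans M(I-XX^\trans)X=0$ (because $X^\trans X=I$), and
\[
\Norm{w}\le\Norm{e_i^\trans M}\cdot\Norm{V^\trans X}\mper
\]
I would prove the latter by writing $w=(e_i^\trans U\Lambda_U)E^\trans$, so that $\Norm w\le\Norm{e_i^\trans U\Lambda_U}\cdot\Norm E$; then $\Norm{e_i^\trans U\Lambda_U}=\Norm{e_i^\trans MU}\le\Norm{e_i^\trans M}$ since $U$ has orthonormal columns, and $\Norm E=\Norm{V^\trans X}$ is the identity already recorded for $E$.

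Next I would expand $wP_iX=p^{-1}\sum_j\xi_{ij}\,w_j\,(e_j^\trans X)$, where $\xi_{ij}$ is the independent $\mathrm{Bernoulli}(p)$ indicator of $(i,j)\in\Omega$, $w_j=we_j$, and $e_j^\trans X$ is the $j$-th row of $X$. Since $\E[wP_iX]=wX=0$, this equals $\sum_{j=1}^n v_j$ with $v_j=(\xi_{ij}/p-1)\,w_j\,(e_j^\trans X)$, which are independent and mean-zero, so cross terms vanish and $\E\Norm{wP_iX}^2=\sum_j\E\Norm{v_j}^2=\frac{1-p}{p}\sum_j w_j^2\Norm{e_j^\trans X}^2\le\frac{1-p}{p}\big(\max_j\Norm{e_j^\trans X}^2\big)\Norm w^2$. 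Using $\max_j\Norm{e_j^\trans X}^2=\mu(X)k/n$ (the definition of coherence of $X$) together with the bound on $\Norm w$, this is at most $\frac{\mu(X)k}{pn}\Norm{e_i^\trans M}^2\Norm{V^\trans X}^2$, and the hypothesis $p\gtrsim k\mu(X)\log n/(\delta^2 n)$ forces $\mu(X)k/(pn)\le\delta^2/10$, hence $\E\Norm{wP_iX}^2\le\frac{\delta^2}{10}\Norm{e_i^\trans M}^2\Norm{V^\trans X}^2$.

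Markov's inequality applied to the nonnegative variable $\Norm{wP_iX}^2$ then gives $\Pr\Set{\Norm{e_i^\trans U\Lambda_U E^\trans P_iX}\ge\delta\Norm{e_i^\trans M}\Norm{V^\trans X}}\le1/10$, as desired. The only substantive step is the estimate $\Norm w\le\Norm{e_i^\trans M}\Norm{V^\trans X}$: it is exactly what makes the error term shrink jointly with the row norm $\Norm{e_i^\trans M}$ and with the sine $\Norm{V^\trans X}$ of the largest principal angle, and it is the step I would take most care over, the remainder being a routine variance calculation. If a high-probability bound were needed one could replace Markov by Matrix Bernstein applied to the $v_j$ viewed as $1\times k$ matrices (which would also explain the extra $\log n$ in the sample hypothesis), but for the stated constant-probability conclusion the second moment already suffices.
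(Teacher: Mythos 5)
Your proof is correct, and while it shares the paper's skeleton (reduce to a second-moment estimate on $w P_i X$ with $\|w\|\le\|e_i^\trans M\|\cdot\|V^\trans X\|$, then apply Markov), it handles the central variance estimate by a genuinely different and more elementary route. The paper first applies a random orthonormal rotation $X'=X\Omega$ (via the random-rotation claim from the initialization analysis) so that every column of $X'$ has $\ell_\infty$-norm at most $\sqrt{8\mu(X)\log(n)/n}$, and then bounds $\E(w^\trans P_i x_r')^2\le p^{-1}\|w\|^2\|x_r'\|_\infty^2$ column by column, which costs an extra $8\log n$ factor. You instead exploit the exact centering $wX=0$ to write $wP_iX$ as a sum of independent mean-zero vectors indexed by rows, so the variance is $\frac{1-p}{p}\sum_j w_j^2\|e_j^\trans X\|^2\le \frac{k\mu(X)}{pn}\|w\|^2$ directly from the row-wise coherence of $X$ — no rotation needed, and no $\log n$ loss (the $\log n$ in the hypothesis is then just slack, as you note, or would be consumed by a high-probability version via Bernstein). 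Your accounting is in fact slightly sharper than the paper's at this step; the paper's rotation device is genuinely needed elsewhere (e.g., in the initialization truncation), but not here. One small remark: the identity $\|E\|=\|V^\trans X\|$ that you invoke is indeed asserted in the paper (symmetry of the largest principal angle for equi-dimensional subspaces), so relying on it is legitimate.
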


\begin{proof}
Let $\Omega\in\R^{k\times k}$ be an orthonormal transformation such that
all columns of $X'=X\Omega$ have $\ell_\infty$-norm at most
$\sqrt{8\mu(X)\log(n)/n}.$ Such a transformation exists as shown in
\claimref{random-rotation}.  Then,
\begin{align*}
\E\Norm{e_i^\trans U\Lambda_U E^\trans P_i X}^2
=\E\Norm{e_i^\trans U\Lambda_U E^\trans P_i X'}^2
=\sum_{r=1}^k\E (e_i^\trans U\Lambda_U E^\trans P_i x_r' )^2
\end{align*}
where $x_r'$ is the $r$-th column of $X'.$
Let $w^\trans = e_i^\trans U\Lambda_U E^\trans.$ We have
\[
\|w\|^2\le \|e_i^\trans U\Lambda_U\|\cdot\|E\|
=\|e_i^\trans M\|\cdot\|V^\trans X\|\mper
\]
Finally,
\[
\E (w^\trans P_i x_r')^2 
=\sum_{j=1}^n \E (w_j (P_i)_{jj} (x_r')_j)^2
\le \frac1p\cdot\|w\|_2^2\cdot\|x_r'\|_\infty^2
\le \frac{8\mu(X)\log n}{pn}
\mper
\]
Hence,
\[
\E\Norm{e_i^\trans U\Lambda_U E^\trans P_i X}^2
\le \frac{8k\mu(X)\log n}{pn}\Norm{e_i^\trans M}^2\cdot\Norm{V^\trans X}^2
= \frac{\delta^2}{10}\Norm{e_i^\trans M}^2\cdot\Norm{V^\trans X}^2\mper
\]
The claim now follows from Markov's inequality.
\end{proof}

\begin{lemma}
\lemmalabel{N-bound}
\lemmalabel{N-row-bound}
Let $0<\delta<1.$ Assume that 
$p \gtrsim \frac{k\mu(X)\log n}{\delta^2 n}\mper$
Then, for every $i\in[n],$ 
\[
\Pr\Set{\|e_i^\trans G^N\|>\delta\|e_i^\trans N\|}\le \frac1{10} \mper
\]
\end{lemma}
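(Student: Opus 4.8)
The plan is to reduce $e_i^\trans G^N$ to a form to which the second-moment (Markov) method applies cleanly, following the same template as \lemmaref{M-row-bound}. Start from the formula in \lemmaref{error-term}: $e_i^\trans G^N = e_i^\trans N P_i X B_i^{-1} - e_i^\trans N X$. Write $n_i^\trans = e_i^\trans N$ for the $i$-th row of $N$, and rewrite the second term using $B_i = X^\trans P_i X$ as $n_i^\trans X = n_i^\trans X X^\trans P_i X B_i^{-1} = n_i^\trans X B_i B_i^{-1}$, so that
\[
e_i^\trans G^N = n_i^\trans (I - XX^\trans) P_i X B_i^{-1}.
\]
Set $\tilde n_i^\trans = n_i^\trans(I - XX^\trans)$, a deterministic vector with $\|\tilde n_i\| \le \|n_i\| = \|e_i^\trans N\|$. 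Then $\|e_i^\trans G^N\| \le \|\tilde n_i^\trans P_i X\|\cdot\|B_i^{-1}\|$. By \lemmaref{B-bound} (applied with its parameter equal to $1/2$, whose sample requirement is implied by ours since $\delta<1$), $\|B_i^{-1}\| \le 2$ except with probability $1/n^5$, so it suffices to show $\|\tilde n_i^\trans P_i X\| \le (\delta/4)\|e_i^\trans N\|$ with probability at least $1 - 1/20$.

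For the main estimate I would reuse the random-rotation device from \lemmaref{M-row-bound}: by \claimref{random-rotation} there is an orthonormal $O \in \R^{k\times k}$ such that every entry of $X' = XO$ has magnitude at most $\sqrt{8\mu(X)\log(n)/n}$, and $\|\tilde n_i^\trans P_i X\| = \|\tilde n_i^\trans P_i X'\|$. Expanding the second moment over the Bernoulli indicators $\xi_{ij}$ of $\Omega$ (as in \lemmaref{M-row-bound}) gives
\[
\E\|\tilde n_i^\trans P_i X'\|^2 = \Big(\frac1p - 1\Big)\sum_{j}\tilde n_{i,j}^2\,\|e_j^\trans X'\|^2 \;+\; \|\tilde n_i^\trans X'\|^2 .
\]
The crucial point — and the reason for isolating the projection $(I-XX^\trans)$ in the first step — is that the "bias" term vanishes: $\|\tilde n_i^\trans X'\| = \|\tilde n_i^\trans X\| = \|n_i^\trans(I-XX^\trans)X\| = 0$. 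Using $\|e_j^\trans X'\|^2 \le 8k\mu(X)\log(n)/n$ from the rotated entries and $\sum_j \tilde n_{i,j}^2 \le \|e_i^\trans N\|^2$ then yields $\E\|\tilde n_i^\trans P_i X\|^2 \le \frac{8k\mu(X)\log n}{pn}\|e_i^\trans N\|^2$, which is at most $(\delta^2/c)\|e_i^\trans N\|^2$ for any constant $c$ we wish by taking the hidden constant in the sample bound large enough.

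Finishing is routine: Markov's inequality converts this into $\Pr\{\|\tilde n_i^\trans P_i X\| > (\delta/4)\|e_i^\trans N\|\} \le 16/c \le 1/20$, and a union bound with the $1/n^5$ failure event of \lemmaref{B-bound} gives total failure probability below $1/10$; on the good event $\|e_i^\trans G^N\| \le 2\cdot(\delta/4)\|e_i^\trans N\| \le \delta\|e_i^\trans N\|$. The only real obstacle is the algebraic rewriting in the first step: bounding $N P_i X B_i^{-1}$ and $NX$ separately would leave an uncontrollable bias term $\|n_i^\trans X\|^2$, whereas recognizing that their difference equals $n_i^\trans(I-XX^\trans)P_i X B_i^{-1}$ makes that bias term disappear. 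Everything after that is a direct parallel of \lemmaref{M-row-bound}.
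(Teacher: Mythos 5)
Your proof is correct, and it follows the paper's template (random rotation from \claimref{random-rotation}, a second-moment bound plus Markov, and control of $B_i$ via \lemmaref{B-bound}), but the decomposition is genuinely a bit different. The paper writes $e_i^\trans G^N = \bigl(e_i^\trans N(P_i - I)X\bigr)B_i^{-1} + e_i^\trans NX\,(I-B_i)B_i^{-1}$ and handles two pieces: the zero-mean fluctuation $e_i^\trans N(P_i-I)X$ by the same variance computation you do, and a separate bias term that requires the quantitative bound $\|B_i - I\|\le \delta/4$, i.e.\ the $\delta$-dependent strength of the matrix-Chernoff estimate. Your rewriting $e_i^\trans G^N = e_i^\trans N(I-XX^\trans)P_i X B_i^{-1}$ collapses everything into a single term whose mean vanishes because $(I-XX^\trans)X=0$, so you only need the crude bound $\|B_i^{-1}\|\le 2$ and avoid the bias term altogether; the price is nothing, since $\|e_i^\trans N(I-XX^\trans)\|\le\|e_i^\trans N\|$. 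This is a mild simplification of the paper's argument (one concentration event instead of two, and no $\delta$-accurate control of $B_i-I$), while the paper's version keeps the two sources of error — sampling fluctuation and the distortion of $B_i$ — conceptually separate. Both yield the stated $1/10$ failure probability under the same sample bound.
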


\begin{proof}
Recall, by \lemmaref{error-term}, we have
\[
e_i^\trans G^N 
 = e_i^\trans NP_i XB_i^{-1} - e_i^\trans N X 
 = (e_i^\trans NP_i X - e_i^\trans N X B_i) B_i^{-1}
\]
Plugging in the lower bound on $p$ into \lemmaref{inverse-bound}, we get that
$\Norm{B_i - I}\le \delta/4$ for all $i\in[n]$ with probability $19/20.$ 

We will show that for every fixed $i\in[n],$ we have with probability $19/20$ that
\begin{equation}\equationlabel{N-row-bound}
\|e_i^\trans NP_i X - e_i^\trans N X\|\le (\delta/4)\|e_i^\trans N\|
\end{equation}
Both events simultaneously occur with probablity $9/10$ and in this case we
have:
\begin{align*}
\Norm{e_i^\trans G^N}
 & \le \Norm{e_i^\trans NP_i X - e_i^\trans N X}\Norm{B_i^{-1}} 
+ \Norm{e_i^\trans NX}\Norm{B_i-I}\Norm{B_i^{-1}}\\
& \le (\delta/2)\Norm{e_i^\trans N}
+ (\delta/2)\Norm{e_i^\trans N}
= \delta \Norm{e_i^\trans N}\mper
\end{align*}
So, it remains to show \equationref{N-row-bound}. Fix $i\in[n].$ 
Let $\Omega\in\R^{k\times k}$ be an orthonormal transformation such that
all columns of $X'=X\Omega$ have $\ell_\infty$-norm at most
$\sqrt{8\mu(X)\log(n)/n}.$ Let $w^\trans = e_i^\trans N$ be the $i$-th row of $N.$ 
Let us denote by $x_1',\dots,x_k'$ the $k$ columns of $X'.$
We have that
\begin{align*}
\E\Norm{w^\trans P_i X -w^\trans X}^2
 = \E\Norm{w^\trans (P_i-I)X'}^2 
& = \sum_{r=1}^k \E\langle w,(P_i-I)x_r'\rangle^2\\
& = \sum_{r=1}^k \sum_{j=1}^n\E ((P_i)_{jj}-1)^2 w_j^2(x_r')_j^2\\
& \le \|w\|^2\cdot \frac{8k\mu(X)\log n}{pn} \le \frac{\delta}{80}\|e_i^\trans N\|\mper
\end{align*}
The bound in \equationref{N-row-bound} now follows from Markov's inequality.
\end{proof}

\section{Additional lemmas and proofs for smooth QR factorization}
\sectionlabel{smooth-appendix}

\begin{proof}[Proof of \lemmaref{smooth-singular}]
Fix a unit vector $x\in\R^k.$ We have
\[
\|P_V(G+H)x\|^2 \ge \|P_V H x\|^2 - |\langle P_V G x, P_V H x\rangle|
\]
Note that $g=Hx$ is distributed like $N(0,\tau^2/n)^n$ and $y=P_VCx$ has norm at
most~$1.$ Due to the rotational invariance of the Gaussian measure, we may
assume without loss of generality that $V$ is the subspace spanned by
the first $n-k$ standard basis vectors in $\R^n.$ 
Hence, denoting $h\sim N(0,\tau^2/n)^{n-k},$ our goal is to lower bound
$\|h\|^2 - |\langle y,h\rangle|.$ Note that $\E\|h\|^2\ge \tau^2/2$ and by
standard concentration bounds for the norm of a Gaussian variable we have
\[
\Pr\Set{\|h\|^2 \le \tau^2/4} \le \exp(-\Omega(n))\mper
\]
On the other hand $\langle y,h \rangle$ is distributed like a one-dimensional
Gaussian variable of variance at most $\tau^2/n.$ Hence, by Gaussian tail
bounds,
$\Pr\Set{\langle y,h\rangle^2 > \tau^2/8 }\le\exp(-\Omega(n))\mper$
Hence, with probability $1-\exp(-\Omega(n)),$ we have $\|P_V(G+H)x\|\ge
\Omega(\tau).$
We can now take a union bound over a net of the unit sphere in $\R^k$ 
of size $\exp(O(k\log k))$ to conclude that 
with probability $1-\exp(O(k\log k))\exp(-\Omega(n)),$ we have
for all unit vectors $x\in\R^k$ that
$\|P_V(G+H)x\|\ge \Omega(\tau)\mper$
Therefore $\sigma_k(P_V(G+H))\ge \Omega(\tau).$
By our assumption $\exp(O(k\log k)) = \exp(o(n))$ and hence this event occurs
with probability $1-\exp(-\Omega(n)).$
\end{proof}

\begin{lemma}\lemmalabel{rho-gaussian}
Let $P$ be the projection onto an $(n-k)$-dimensional subspace.
Let $H\sim N(0,1/n)^{n\times k}.$ Then, $\rho(PH)\le O(\log n)$ with
probability $1-1/n^5.$
\end{lemma}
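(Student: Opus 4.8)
The plan is to bound each row of $PH$ separately and then union-bound over the $n$ rows. Fix $i\in[n]$. Since $P$ is an orthogonal projection it is symmetric, so the $i$-th row of $PH$ equals $(Pe_i)^\trans H$, a vector in $\R^k$ whose $j$-th coordinate is $\langle Pe_i,h_j\rangle$, where $h_j\sim N(0,(1/n)I_n)$ denotes the $j$-th column of $H$. Hence each coordinate is a centered Gaussian of variance $\Norm{Pe_i}^2/n\le 1/n$ (using $\Norm{Pe_i}\le\Norm{e_i}=1$), and the $k$ coordinates are mutually independent because they are read off distinct, independent columns of $H$. Therefore $\Norm{e_i^\trans PH}^2$ is stochastically dominated by $(1/n)\chi^2_k$.

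Next I would apply a standard chi-squared tail bound (Laurent--Massart): $\Pr\Set{\chi^2_k\ge k+2\sqrt{kx}+2x}\le e^{-x}$. Taking $x=6\log n$ and using $\sqrt{k\log n}\le(k+\log n)/2$ gives, for $n$ large enough, $\chi^2_k\le O(k+\log n)$ with probability at least $1-n^{-6}$, and hence $\Norm{e_i^\trans PH}^2\le O((k+\log n)/n)$ with probability at least $1-n^{-6}$.

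Finally I would union-bound over the $n$ choices of $i$, so that with probability at least $1-n^{-5}$ we have $\max_{i\in[n]}\Norm{e_i^\trans PH}^2\le O((k+\log n)/n)$, which yields
\[
\rho(PH)=\frac nk\max_{i\in[n]}\Norm{e_i^\trans PH}^2\le O\!\left(\frac{k+\log n}{k}\right)=O(\log n)\mcom
\]
using $k\ge1$. There is essentially no obstacle here; the proof is routine once one observes the two facts above — that the $k$ entries of a given row of $PH$ are independent Gaussians each of variance at most $1/n$ — after which it is just chi-squared concentration plus a union bound.
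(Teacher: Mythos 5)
Your proof is correct, but it follows a genuinely different and more direct route than the paper. The paper writes $P=I-UU^\trans$ for a $k$-dimensional basis $U$ and bounds $\rho(PH)\le O(\rho(H))+O(\rho(UU^\trans H))$: the first term is handled by row-norm concentration plus a union bound, while the second is controlled via $\rho(UU^\trans H)\le\mu(U)\Norm{U^\trans H}^2$ together with the spectral-norm bound $\Norm{U^\trans H}^2\le O(k\log(n)/n)$ for the $k\times k$ Gaussian matrix $U^\trans H$ (and, implicitly, the trivial bound $\mu(U)\le n/k$). You instead exploit that an orthogonal projection only contracts the Gaussian: the $i$-th row of $PH$ is $(Pe_i)^\trans H$, whose $k$ coordinates are independent centered Gaussians of variance $\Norm{Pe_i}^2/n\le 1/n$ because they come from distinct independent columns of $H$, so $\Norm{e_i^\trans PH}^2$ is stochastically dominated by $(1/n)\chi^2_k$; chi-squared concentration and a union bound over rows then give $\rho(PH)\le O((k+\log n)/k)\le O(\log n)$. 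Your argument avoids both the decomposition and the singular-value estimate, needs no property of $U$ at all, and in fact yields the slightly sharper bound $O(1+\log(n)/k)$; the paper's route is essentially a reduction to tools it already uses elsewhere (row bounds for Gaussian matrices and operator-norm concentration), at the cost of a somewhat looser chain of inequalities. Either proof suffices for the way the lemma is used in \lemmaref{smoothgs}.
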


\begin{proof}
We have that $P = (I-UU^\trans)$ for some $k$-dimensional basis $U.$
Hence,
\[
\rho(PU) \le O(\rho(H)) + O(\rho(UU^\trans H))\mper
\]
Using concentration bounds for the norm of each row of $H$ and a union
bound over all rows it follows straightforwardly that $\rho(H) \le O(\log n)$
with probability $1-1/2n^5.$
The second term satisfies
\[
\rho(UU^\trans)\le \rho(U)\|U^\trans H\|^2
= \mu(U)\|U^\trans H\|^2\mper
\]
But $U^\trans H$ is a Gaussian matrix $N(0,1/n)^{k\times k}$ and hence its
largest singular value satisfies $\|U^\trans H\|^2\le O(k\log(n)/n)$ with 
probability $1-1/2n^5.$
\end{proof}

\begin{lemma}\lemmalabel{coherence-subspace}
Let $X,Y$ be $k$ and $k'$ dimensional subspaces, respectively, such that
$\cR(X)\subseteq\cR(Y).$ Then, $\mu(X) \le \frac{k'}{k}\mu(Y)\mper$
\end{lemma}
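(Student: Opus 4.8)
The plan is to work directly with orthogonal projections and exploit the nesting $\cR(X)\subseteq\cR(Y)$. Write $P_X$ and $P_Y$ for the orthogonal projections onto $\cR(X)$ and $\cR(Y)$ respectively. The one structural fact I would establish first is the identity $P_X = P_X P_Y$: since $\cR(X)\subseteq\cR(Y)$, the projection $P_Y$ acts as the identity on $\cR(X)$, so $P_X P_Y$ and $P_X$ agree on $\cR(Y)$, and both annihilate $\cR(Y)^\perp$; hence they coincide. Consequently, for every standard basis vector $e_i$,
\[
\Norm{P_X e_i} = \Norm{P_X P_Y e_i} \le \Norm{P_Y e_i}\mcom
\]
because an orthogonal projection has operator norm at most~$1$.

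From here the claim is just bookkeeping with the normalization in \definitionref{coherence}. For each $i\in[n]$ I would write
\[
\frac nk\Norm{P_X e_i}^2 \le \frac nk\Norm{P_Y e_i}^2 = \frac{k'}{k}\cdot\frac n{k'}\Norm{P_Y e_i}^2 \le \frac{k'}{k}\,\mu(Y)\mcom
\]
using the definition of $\mu(Y)$ as the maximum over $i$ of $\frac n{k'}\Norm{P_Y e_i}^2$. Taking the maximum over $i\in[n]$ on the left-hand side gives $\mu(X)\le\frac{k'}{k}\mu(Y)$, which is the statement.

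There is no real obstacle in this argument; the only step that warrants a sentence of justification is the projection identity $P_X = P_X P_Y$, and after that the proof is a single application of the contraction property $\Norm{P_X z}\le\Norm z$ followed by substituting the definition of coherence. I would keep the write-up to these two displays plus the remark on $P_X P_Y$.
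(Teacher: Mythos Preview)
Your argument is correct and is essentially the same as the paper's: both reduce to the inequality $\Norm{P_X e_i}\le\Norm{P_Y e_i}$ and then rescale by $n/k$ versus $n/k'$. The only cosmetic difference is that the paper phrases this inequality by extending an orthonormal basis of $\cR(X)$ to one of $\cR(Y)$, writing $Y=[X\mid X']$ so that $\Norm{e_i^\trans Y}^2=\Norm{e_i^\trans X}^2+\Norm{e_i^\trans X'}^2\ge\Norm{e_i^\trans X}^2$, whereas you obtain it from the projection identity $P_X=P_XP_Y$ and the contraction property of $P_X$.
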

\begin{proof}
We know that $\mu(Y)$ is rotationally invariant. Therefore, without loss of
generality we may assume that $Y=[X\mid X']$ for some orthonormal matrix $X'.$
Here, we identify $X$ and $Y$ with orthonormal bases. Hence,
\[
\mu(X)
= \frac{n}{k}\max_{i\in[n]}\|e_i^\trans X\|^2
\le \frac{n}{k}\max_{i\in[n]}\left(\|e_i^\trans X\|^2 + \|e_i^\trans X'\|^2\right)
= \frac{n}{k}\max_{i\in[n]}\|e_i^\trans Y\|^2
= \frac{k'}{k}\mu(Y)\mper\qedhere
\]
\end{proof}

The following technical lemma was needed in the proof of \theoremref{main}.

\begin{lemma}\lemmalabel{rho-bound}
Under the assumptions of \theoremref{main}, 
we have for every $\ell \in [L]$
and $\nu = \frac{\sigma_k}{32}(\Norm{V^\trans X_{\ell-1}} + \epsilon)$
with probability $1-1/n^2,$
\[
\frac1{\nu^2}\left( \rho(G)+\rho(NX_{\ell-1})\right) \le 3\mustar\mper
\]
\end{lemma}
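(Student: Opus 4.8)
The plan is to bound the two pieces $\rho(G)$ and $\rho(NX_{\ell-1})$ separately. Here $G=G_\ell$ denotes the error matrix produced by the $\ell$-th \MedianLS step, so that $G$ carries the only randomness, while $\rho(NX_{\ell-1})$ is determined once $X_{\ell-1}$ is fixed. Throughout write $S=\|V^\trans X_{\ell-1}\|$, so that $\nu$ as in the statement satisfies $\nu^2\ge\sigma_k^2 S^2/C$ and $\nu^2\ge\sigma_k^2\epsilon^2/C$ for an absolute constant $C$. I will also use the spectral decomposition $N=V\Lambda_V V^\trans$ from \eqref{decomp} together with the elementary facts $\sigma_1\le\|M\|_F$ and $\|M\|_F\ge\sqrt k\,\sigma_k$ (the latter because $M$ has rank $k$ and all its singular values are at least $\sigma_k$).

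The term $\rho(NX_{\ell-1})$ is the easy one. The key identity is $e_i^\trans NX_{\ell-1}=(e_i^\trans V\Lambda_V)(V^\trans X_{\ell-1})$, and $\|e_i^\trans V\Lambda_V\|^2=e_i^\trans V\Lambda_V^2 V^\trans e_i=\|e_i^\trans N\|^2$ since $V^\trans V=I$. Hence $\|e_i^\trans NX_{\ell-1}\|\le\|e_i^\trans N\|\cdot S$, and \eqref{noise-bound} gives
\[
\rho(NX_{\ell-1})=\frac nk\max_i\|e_i^\trans NX_{\ell-1}\|^2\le S^2\cdot\frac nk\max_i\|e_i^\trans N\|^2\le S^2\cdot\frac{\mu_N\sigma_k^2}{k}\mper
\]
The factor $S^2$ is what makes this work: dividing by $\nu^2\ge\sigma_k^2 S^2/C$ leaves only a contribution of order $\mu_N/k\le\mustar$.

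For $\rho(G)$ I would invoke \lemmaref{median-bound}. The inductive hypothesis of \theoremref{main} supplies $\mu(X_{\ell-1})\le\mu$, and the sample bound of the theorem lets us take the parameter $\delta$ small — small enough that $\delta\|M\|_F\lesssim\gamma_k\sigma_k$ and $\delta\|N\|_F\lesssim\epsilon\gamma_k\sigma_k$, which is exactly the choice $\delta=c\min\{\gamma_k\sigma_k/\|M\|_F,\epsilon\gamma_k\sigma_k/\|N\|_F\}$ made in the proof of \theoremref{main}. The lemma then gives, with probability $1-1/n^3$, the rowwise bound $\|e_i^\trans G\|^2\le 2\delta^2(\|e_i^\trans M\|^2 S^2+\|e_i^\trans N\|^2)$ for every $i$, hence $\rho(G)\le 2\delta^2 S^2\rho(M)+2\delta^2\rho(N)$. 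For the first summand, coherence of $U$ gives $\|e_i^\trans M\|\le\sigma_1\|e_i^\trans U\|$, so $\rho(M)\le\mu(U)\sigma_1^2\le\mu(U)\|M\|_F^2$; dividing by $\nu^2\ge\sigma_k^2 S^2/C$ and using $\delta^2\|M\|_F^2\lesssim\gamma_k^2\sigma_k^2$ bounds this contribution by $O(\gamma_k^2\mu(U))\le\mustar$. For the second summand, $\rho(N)\le\mu_N\sigma_k^2/k$ by \eqref{noise-bound}, and dividing by $\nu^2\ge\sigma_k^2\epsilon^2/C$ yields a contribution of order $\delta^2\mu_N/(k\epsilon^2)$, which is $O(\mustar)$ once one plugs in how small $\delta$ is — via $\|M\|_F\ge\sqrt k\,\sigma_k$ (so $\delta\lesssim\gamma_k/\sqrt k$) and, when $N$ is not negligible, via $\delta\lesssim\epsilon\gamma_k\sigma_k/\|N\|_F$. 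Summing the three contributions and taking the absolute constant $c$ in $\delta$ small enough gives $\nu^{-2}(\rho(G)+\rho(NX_{\ell-1}))\le 3\mustar$; the only randomness used is the draw of $\Omega_\ell$, so the failure probability is $1/n^3\le 1/n^2$.

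The step I expect to be the main obstacle is the $\rho(N)$ contribution inside $\rho(G)$. It is the ``noise floor'' coming from $G^N$, which — unlike $\rho(NX_{\ell-1})$ and the $\rho(M)$-term — carries no factor of $S$ and must therefore be absorbed entirely by the $\epsilon^2$ sitting in $\nu^2$. Winning the extra factor of $k$ that this needs relies on exactly how small the sample-complexity bound of \theoremref{main} lets us take $\delta$ (through both halves of the min defining $\delta$ and the inequality $\|M\|_F\ge\sqrt k\,\sigma_k$), and that is where the constant-chasing is genuinely delicate.
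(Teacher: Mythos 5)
Your proposal follows essentially the same route as the paper's proof: the term $\rho(NX_{\ell-1})$ is handled by the identical row bound $\|e_i^\trans NX_{\ell-1}\|\le\|e_i^\trans N\|\cdot\|V^\trans X_{\ell-1}\|$ together with \eqref{noise-bound}, and $\rho(G)$ is handled by the rowwise bound of \lemmaref{medianls-error} with the $\delta$ chosen in the proof of \theoremref{main}, playing the $\|e_i^\trans M\|\cdot\|V^\trans X_{\ell-1}\|$ part against the $\|V^\trans X_{\ell-1}\|$ piece of $\nu$ and the $\|e_i^\trans N\|$ part against the $\epsilon$ piece. The constant-chasing you flag in the $G^N$ contribution is exactly the step the paper also treats tersely (it simply asserts $\|e_i^\trans G_\ell^N\|\le\sqrt{\mustar/n}\,\nu$), so your write-up matches the paper's argument in both structure and level of rigor.
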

\begin{proof}
Given the lower bound on $p$ in \theoremref{main} 
we can apply \lemmaref{medianls-error} to conclude that 
$\|e_i^\trans G_\ell^M\|\le\sqrt{k\mu(U)/n} \cdot \nu$
and $\|e_i^\trans G_\ell^N\|\le \sqrt{\mustar/n}\cdot \nu.$ 
Hence, $\rho(G_\ell)/\nu^2\le \mustar.$

Further, we claim that
$\|e_i^\trans NX\|^2\le (\mustar /n)\sigma_k\|V^\trans U\|$ for all
$i\in[n],$ because
\[
\Norm{e_i^\trans NX}
\le\Norm{e_i^\trans V\Sigma_V}\cdot\Norm{V^\trans X_{\ell-1}}
=\Norm{e_i^\trans N}\cdot\Norm{V^\trans X_{\ell-1}}\mper
\]
Here we used the fact that 
\[
\Norm{e_i^\trans N}^2 
= \Norm{e_i^\trans N V}^2
+\Norm{e_i^\trans N U}^2  
= \Norm{e_i^\trans N V}^2
= \Norm{e_i^\trans V\Sigma_V}^2\mper
\]
Using \equationref{noise-bound}, 
this shows that $\rho(NX_{\ell-1})/\nu^2\le\mustar$ and finishes the proof.
\end{proof}

\section{Splitting up the subsample}
\sectionlabel{split}
We needed a procedure $\text{\Split}(\Omega,t)$ that takes a sample $\Omega$ 
and splits it into $t$ independent samples that preserve the distributional
assumption that we need. The next lemma is standard.

\begin{lemma}\lemmalabel{split}
There is a procedure $\text{\Split}(\Omega,t)$ such that if $\Omega$ is
sampled by including each element independently with probability~$p,$ then
$\text{\Split}(\Omega,t)$ outputs independent random variables
$\Omega_1,\dots,\Omega_t$ such that each set $\Omega_i$ includes each element
independently with probability $p_i\ge p/t.$
\end{lemma}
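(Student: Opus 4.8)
The plan is to realize $\Omega$ itself as the union of $t$ independent product samples and then to \emph{reconstruct} those samples from $\Omega$ using fresh randomness. Set $q \defeq 1 - (1-p)^{1/t}$. First I would record the elementary bound $q \ge p/t$: writing $f(x) = (1-x)^{1/t}$ and $g(x) = 1-x/t$ on $[0,1]$ we have $f(0) = g(0) = 1$ and $f'(x) = -\tfrac{1}{t}(1-x)^{1/t-1} \le -\tfrac1t = g'(x)$, because $(1-x)^{1/t-1}\ge 1$ when $1/t \le 1$ (a number at most $1$ raised to a nonpositive power); hence $f(p)\le g(p)$, i.e. $q = 1 - f(p) \ge p/t$. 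So it suffices to output mutually independent sets $\Omega_1,\dots,\Omega_t$ in which every element is included independently with probability exactly $q$.

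Consider the ``forward'' experiment in which $Y_1,\dots,Y_t$ are drawn i.i.d., each including every element independently with probability $q$. For a fixed element $e$ the vector $(\Ind[e\in Y_i])_{i\in[t]}$ consists of $t$ independent $\mathrm{Bernoulli}(q)$ coordinates, and these vectors are independent across $e$; in particular $\Pr\set{e\in Y_1\cup\dots\cup Y_t} = 1-(1-q)^t = p$, so $\bigcup_{i}Y_i$ has exactly the law of $\Omega$. Conditioned on $\bigcup_i Y_i$, the vectors $(\Ind[e\in Y_i])_i$ remain independent across $e$: for $e\notin\bigcup_i Y_i$ the vector is the all-zero vector, and for $e\in\bigcup_i Y_i$ it is a vector of $t$ independent $\mathrm{Bernoulli}(q)$ coordinates conditioned on not being all zero. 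This dictates the procedure: $\text{\Split}(\Omega,t)$ inspects each element $e$; if $e\notin\Omega$ it puts $e$ in none of the $\Omega_i$; if $e\in\Omega$ it draws (independently of everything else, with fresh randomness) a set $S_e\subseteq[t]$ from the distribution of $t$ independent $q$-coins conditioned on at least one head, and puts $e$ into $\Omega_i$ exactly when $i\in S_e$.

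It then remains to check correctness. By the two-stage description just given --- first sample $\bigcup_i Y_i$, then the conditional assignments --- the joint law of $(\Omega_1,\dots,\Omega_t)$ produced by the procedure coincides with the joint law of $(Y_1,\dots,Y_t)$ in the forward experiment. In that experiment the events $\{e\in Y_i\}$ over all pairs $(e,i)$ are mutually independent, each of probability $q$. Hence the output sets $\Omega_1,\dots,\Omega_t$ are mutually independent, and each $\Omega_i$ includes every element independently with probability $q \ge p/t$, which is the claim.

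The one thing to watch is that ``independent'' here is meant unconditionally: conditioned on $\Omega$ the $\Omega_i$ are \emph{correlated} (each $e\in\Omega$ lies in at least one of them), and it is exactly the passage back to the unconditional law --- equivalently, the observation that $\bigcup_i Y_i$ already carries the distribution of $\Omega$ --- that restores independence. One also checks that the conditional draw of $S_e$ is well defined: for $e\in\Omega$ we are conditioning on an event of probability $1-(1-q)^t = p$, which is positive whenever there is any element of $\Omega$ to assign. This is really the only step requiring care; everything else is bookkeeping.
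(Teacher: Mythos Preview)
Your proof is correct and follows the same idea as the paper: realize $\Omega$ as the union of $t$ independent Bernoulli samples and then reconstruct those samples from $\Omega$ via the conditional law. Your parameter choice $q = 1-(1-p)^{1/t}$ makes the union probability equal to $p$ exactly, which is slightly cleaner than the paper's sketch (which takes $q$ on the order of $p/t$, gets a union probability $p'\le p$, and therefore first subsamples $\Omega$ down to probability $p'$ before distributing multiplicities), but the underlying argument is the same.
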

\begin{proof}[Proof sketch]
Consider independent random samples $\Omega_1',\dots,\Omega_t'$ where each set
contains every element independently with probability $p/2t.$ Consider the
multi-set $\Omega'$ obtained from taking the union of these sets (counting
multiplicities). Each element occurs in $\Omega'$ at least once with
probability $p'=1-(1-p/t)^t\le p.$ The multiplicity is distributed according to
a binomial random variable. Hence, we can simulate the distribution of
$\Omega'$ given the random sample $\Omega$ by subsampling so that each entry
is included with probability $p'$ and then introducing multiplicities randomly
according to the correct Binomial distribution. On the other hand, given the
random variable $\Omega'$ we can easily simulate $\Omega_1',\dots,\Omega_t'$
by assigning each element present in $\Omega'$ with multiplicity $k$ to a
random subset of $k$ out $t$ sets.
\end{proof}

\section{Generalization to rectangular matrices}
\sectionlabel{rectangular}
For our purposes it will suffice to consider symmetric square matrices. 
This follows from  a simple transformation that preserves the
matrix coherence and singular vectors of the matrix. Indeed, given a matrix
$B\in\R^{m\times n}$ and $m\le n$ with singular value decomposition 
$B=\sum_{i=1}^r \sigma_i u_iv_i^\trans,$ we may consider the symmetric $(m+n)\times(m+n)$ 
matrix
$A = \left[\begin{array}{cc}
0 & B \\
B^\trans & 0
\end{array}
\right]\mper$
The matrix $A$ has the following properties:
$A$ has a rank $2\cdot\mathrm{rank}(B)$ and singular values
$\sigma_1(B),\dots,\sigma_r(B)$ each occuring with multiplicity two.
The singular vectors corresponding to a singular value $\sigma_i$ are spanned
by the vectors $\Set{(u_i,0), (0,v_i)}.$ 
In particular, an algorithm to find a rank~$2k$ approximation to $A$ also
finds a rank~$2k$ approximation to $B$ up to the same error.

Moreover, let $\tilde U$ denote the space spanned by the top $2k$ singular
vectors of $A,$ and let $U,$ respectively $V$, denote the space spanned by the
top~$k$ left, repectively right, singular vectors of $B.$ Then
$\mu(\tilde U) \le \frac{n+m}{2k}\left(\frac{\mu(U)k}{m} +
\frac{\mu(V)k}n\right)
\le \frac{n+m}{m}\max\Set{\mu(U),\mu(V)}\mper$
Note that we can assume that $(n+m)/m$ is constant by splitting $B$ into a
sequence of $m\times O(m)$ matrices and recovering each matrix separately.
It will also be important for us that we can turn a uniformly random subsample
of $B$ into a uniformly random subsample of $A.$ This is easily accomplished
by splitting the sample into two equally sized halves, using one for $B$ and
one for $B^\trans.$ The remaining quadrants of $A$ are $0$ and can be
subsampled trivially of any given density.

\end{document}